%% file: iclr2026_conference.tex
\documentclass{article} 
\input{preamble}
\usepackage{iclr2026_conference,times}

\input{math_commands.tex}
\usepackage{hyperref}
\usepackage{url}
\usepackage{cleveref}

\title{Softmax is not Enough (for Adaptive Conformal Classification)}

\author{Navid Akhavan Attar, Hesam Asadollahzadeh, Ling Luo \& Uwe Aickelin
\\
School of Computing and Information Systems, Faculty of Engineering and IT\\
The University of Melbourne, Victoria, Australia \\
{\small \texttt{\{n.akhavanattar,h.asadollahzadeh,ling.luo,uwe.aickelin\}@unimelb.edu.au}}
\\
}

\iclrfinalcopy 
\begin{document}

\maketitle
\vspace{-8pt}
\begin{abstract}
\input{sections/abstract.tex}
\end{abstract}

\section{Introduction}\label{sec:introduction}
\input{sections/introduction.tex}

% \section{Related Works}\label{sec:related_works}
% \input{sections/related_works.tex}

\section{Motivation and Method}\label{sec:motivation_and_method}
\input{sections/method.tex}

\section{Experiments}\label{sec:experiments}
\input{sections/experiments.tex}

\section{Conclusion}\label{sec:conclusion}
\input{sections/conclusion.tex}

\subsubsection*{Acknowledgments}
This research was supported under the Australian Research Council's Industrial
Transformation Research Program (ITRP) funding scheme (project number IH210100051).
The ARC Digital Bioprocess Development Hub is a collaboration between The University of
Melbourne, University of Technology Sydney, RMIT University, CSL Innovation Pty Ltd, Cytiva
(Global Life Science Solutions Australia Pty Ltd) and Patheon Biologics Australia Pty Ltd.

\bibliography{iclr2026_conference}
\bibliographystyle{iclr2026_conference}

\newpage
\appendix
\section*{Appendix}
\input{sections/appendix}

\end{document}

%% file: preamble.tex
\usepackage{url}
\usepackage{booktabs}
\usepackage{graphicx}
\usepackage{lipsum}
\usepackage{array}

\let\OLDthebibliography\thebibliography
\renewcommand\thebibliography[1]{
  \OLDthebibliography{#1}
  \setlength{\parskip}{0pt}
  \setlength{\itemsep}{5pt}
}

\usepackage{algorithm}
\usepackage[noend]{algpseudocode}
\usepackage{wrapfig}
\usepackage{placeins}
\usepackage[font=small]{caption}
\usepackage{subcaption}
\usepackage{titlesec}
\usepackage{enumitem}
\usepackage{multicol}
\usepackage{float}
\usepackage[normalem]{ulem}
\usepackage{xr}
\usepackage{pdflscape}
\usepackage{mathtools}
\usepackage{colortbl}
\usepackage[table,svgnames]{xcolor}
\usepackage{bm}
\usepackage{arydshln}
\usepackage{siunitx}
\usepackage{multirow}

\newcommand{\resultpm}[2]{#1 \scriptsize{$\pm$ #2}}
\newcommand{\bresultpm}[2]{\textbf{#1} \scriptsize{$\pm$ #2}}

\newcommand{\stackheader}[2]{\begin{tabular}{@{}c@{}}\textbf{#1}\\\small{#2}\end{tabular}}

\newcommand{\bresult}[1]{\textbf{#1}}

\newcommand{\aps}{\texttt{APS}}
\newcommand{\raps}{\texttt{RAPS}}
\newcommand{\lac}{\texttt{LAC}}
\newcommand{\thr}{\texttt{THR}}
\newcommand{\saps}{\texttt{SAPS}}

\newcommand{\apsbf}{\textbf{\texttt{APS}}}
\newcommand{\rapsbf}{\textbf{\texttt{RAPS}}}
\newcommand{\lacbf}{\textbf{\texttt{LAC}}}

\newcommand{\sapsbf}{\textbf{\texttt{SAPS}}}

\newcommand{\topk}{\texttt{Top-K}}

\newcommand{\conftr}{\texttt{ConfTr}}

\newcommand{\energy}{\texttt{Energy}}
\newcommand{\entropy}{\texttt{Entropy}}

\usepackage{amsmath, amssymb, amsthm}
\usepackage[colorlinks=true, linkcolor=purple, citecolor=purple, urlcolor=purple]{hyperref}
\usepackage{cleveref}

\definecolor{Gray}{gray}{0.95}

\usepackage{tikz}
\usetikzlibrary{arrows.meta}

\input{math_commands}

\theoremstyle{plain}
\newtheorem{theorem}{Theorem}[section]

\newtheorem{proposition}[theorem]{Proposition}

\newtheorem{desideratum}{Desideratum}

\theoremstyle{definition}

\theoremstyle{remark}
\newtheorem{remark}[theorem]{Remark}

\renewcommand{\eqref}[1]{Eq. (\ref{#1})}
\newcommand{\Ptrain}{P_{\text{train}}}
\newcommand{\Ptest}{P_{\text{test}}}

\usepackage{bigdelim}

\def\Xtest{X_{\mathrm{test}}}
\def\Ytest{Y_{\mathrm{test}}}

\DeclareMathOperator*{\softmax}{softmax}
\DeclareMathOperator*{\softplus}{softplus}
\DeclareMathOperator*{\argmax}{arg\,max}

\def\R{\mathbb{R}}

\def\E{\mathbb{E}}
\def\P{\mathbb{P}}

\def\cC{\mathcal{C}}
\def\cD{\mathcal{D}}

\def\cY{\mathcal{Y}}

\makeatletter
\def\adl@drawiv#1#2#3{%
        \hskip.5\tabcolsep
        \xleaders#3{#2.5\@tempdimb #1{1}#2.5\@tempdimb}%
                #2\z@ plus1fil minus1fil\relax
        \hskip.5\tabcolsep}
\newcommand{\cdashlinelr}[1]{%
  \noalign{\vskip\aboverulesep
           \global\let\@dashdrawstore\adl@draw
           \global\let\adl@draw\adl@drawiv}
  \cdashline{#1}
  \noalign{\global\let\adl@draw\@dashdrawstore
           \vskip\belowrulesep}}
\makeatother

\definecolor{lightgray}{HTML}{E3E7F1}

%% file: math_commands.tex
\usepackage{amsmath,amsfonts,bm}

\def\eqref#1{equation~\ref{#1}}
\def\1{\bm{1}}

\DeclareMathAlphabet{\mathsfit}{\encodingdefault}{\sfdefault}{m}{sl}
\SetMathAlphabet{\mathsfit}{bold}{\encodingdefault}{\sfdefault}{bx}{n}

%% file: sections/abstract.tex
The merit of Conformal Prediction (CP), as a distribution-free framework for uncertainty quantification, depends on generating prediction sets that are efficient, reflected in small average set sizes, while adaptive, meaning they signal uncertainty by varying in size according to input difficulty. A central limitation for deep conformal classifiers is that the nonconformity scores are derived from softmax outputs, which can be unreliable indicators of how certain the model truly is about a given input, sometimes leading to overconfident misclassifications or undue hesitation. In this work, we argue that this unreliability can be inherited by the prediction sets generated by CP, limiting their capacity for adaptiveness. We propose a new approach that leverages information from the pre-softmax logit space, using the Helmholtz Free Energy as a measure of model uncertainty and sample difficulty. By reweighting nonconformity scores with a monotonic transformation of the energy score of each sample, we improve their sensitivity to input difficulty. Our experiments with four state-of-the-art score functions on multiple datasets and deep architectures show that this energy-based enhancement improves the adaptiveness of the prediction sets, leading to a notable increase in both efficiency and adaptiveness compared to baseline nonconformity scores, without introducing any post-hoc complexity.

%% file: sections/introduction.tex
Deploying machine learning models in critical, real-world applications requires not just high accuracy, but also trustworthy uncertainty quantification. Conformal Prediction (CP) has emerged as an effective framework for this challenge~\citep{vovk2005algorithmic}. It provides a model-agnostic method to construct prediction sets, $C(X)$, that are guaranteed to contain the true class, $Y$, with a user-specified probability: 
$$
P(Y \in C(X)) \geq 1-\alpha.
$$
This distribution-free guarantee is a significant asset. However, the practical utility of CP depends on the characteristics of these prediction sets. Ideally, they should be \textbf{adaptive} and \textbf{efficient}: small for inputs that the model finds easy, and appropriately larger for inputs that are difficult or ambiguous.

This adaptiveness is governed by the nonconformity score. While many nonconformity scores are designed to produce adaptive sets, they are typically derived from a model's final softmax probabilities. This choice inherits a fundamental weakness, as softmax outputs are often unreliable indicators of a model's true uncertainty. They can exhibit overconfidence even for misclassified or out-of-distribution (OOD) inputs. Post-hoc calibration helps reduce this issue, but only to a limited extent, as it cannot fully correct the underlying limitations in uncertainty quantification.~\citep{guo2017calibration,lee2018simple,hein2019relu}. Consequently, the adaptiveness of these scores is by design limited, which can lead to inefficiently large sets for simple inputs, or misleadingly small sets for difficult ones.

One approach to improve adaptiveness involves adjusting the score based on an input-specific measure of difficulty, such as the variance of ensemble predictions, the error predicted by an auxiliary model~\citep{hernandez2022conformal}, or the variance estimated via Monte-Carlo dropout (MCD) with a neural network~\citep{cortes2019reliable}. This principle is related to Normalized Conformal Prediction, which has been shown to produce tighter and more informative sets in regression by scaling a base score with an uncertainty estimate~\citep{papadopoulos2002inductive, papadopoulos2011reliable,bostrom2017accelerating}. Building on this principle, we argue that a reliable signal for sample difficulty and model uncertainty exists in the pre-softmax logit space. We propose using the \emph{Helmholtz free energy} computed from the logits, as a principled measure of a model's familiarity with an input. Inputs aligned with the training data distribution are assigned low energy (high certainty), while atypical or ambiguous inputs receive high energy (low certainty).

The energy signal is then incorporated into the conformal framework by applying per-sample reweighting of the nonconformity scores. For ``easy'' inputs where the model is certain, the energy-based term magnifies the base score, yielding smaller, more efficient prediction sets. For ``hard'' or OOD inputs, this term dampens the score, producing larger sets that signal the model's uncertainty. This improved adaptiveness is exemplified in Figure~\ref{fig:illustration}. Compared to existing adaptive scores, our proposed energy-based variants, by leveraging Helmholtz free energy derived from the pre-softmax logit space for per-sample reweighting of nonconformity scores, increase both adaptiveness and efficiency. This approach improves prediction sets across state-of-the-art score functions, while preserving the theoretical coverage guarantees of Conformal Prediction.
\begin{figure}[t]
    \centering
    \includegraphics[width=0.965\textwidth]{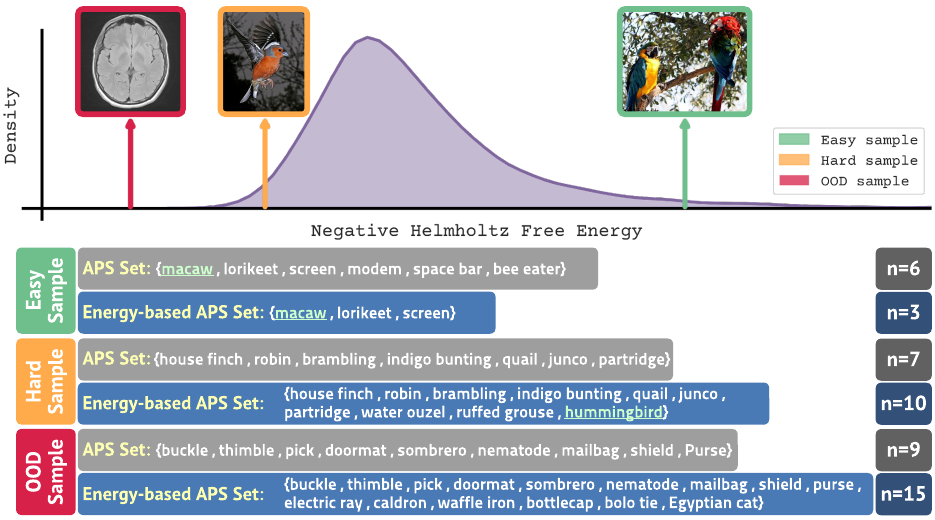}
     \caption{
    Prediction sets from a standard method (\aps~\cite{romano2020classification}) versus our energy-based variant, demonstrating improved adaptiveness on ImageNet. (i) For an easy input like the image of a \emph{Macaw}, whose clear visual cues (vivid colors, long tail) make it simple to classify, our energy-based method produces a smaller, more efficient set. (ii) For a hard input, a bird image labeled as \emph{Hummingbird}, its appearance deviates from typical hummingbirds (e.g., a thicker, less tapered beak) and shares features with other bird classes, making the image difficult for the model. In this case, the energy-based method returns a larger prediction set, signaling higher uncertainty. (iii) Finally, for an out-of-distribution (OOD) input like a brain MRI that the model wasn't trained on, our method generates a much larger set, warning the user that the prediction is unreliable. This improvement in adaptive behavior is guided by the Helmholtz free energy, which captures the model's uncertainty about an input.
}
    \label{fig:illustration}
    \vspace{-1.5em}
\end{figure}

We summarize our contributions as follows:
\begin{itemize}
    \item We provide a theoretical and empirical motivation for moving beyond softmax-based scores. We establish the connection between Helmholtz free energy and model uncertainty and demonstrate that this energy signal distinguishes sample difficulty more effectively than standard softmax metrics.
    \item We introduce a general framework of \textbf{Energy-Based Nonconformity Scores}, which modulates a base nonconformity score with the free energy of each sample to create more adaptive prediction sets that are smaller for easy inputs and larger for difficult or out-of-distribution inputs.
    \item We provide theoretical and empirical evidence showing that our energy-based enhancement improves the efficiency and adaptiveness of prediction sets for multiple deep learning architectures across a range of different scenarios.
\end{itemize}

%% file: sections/method.tex
\label{sec:energy_ncs}
In this section, we (i) explain why the softmax probabilities that underpin conventional nonconformity scores for deep classifiers are often unreliable for efficiently capturing model uncertainty, (ii) introduce Helmholtz free energy as a more robust measure of uncertainty derived directly from model logits, and (iii) use this concept to motivate and define a new class of energy-aware scores that produce more adaptive and efficient prediction sets.
All notations are summarized in Appendix~\ref{apd:notation}.

\subsection{Softmax Unreliability and Implications for Conformal Prediction}
\label{subsec:softmax_limits}
Given logits $\mathbf{f}(x)$, a calibrated softmax with temperature $T>0$ is
\begin{equation}
  \hat{\pi}(y\mid x)=\softmax_{y}\Bigl(\tfrac{\mathbf{f}(x)}{T}\Bigr)
  =\frac{\exp[f_y(x)/T]}{\sum_{k=1}^{K}\exp[f_k(x)/T]} .
  \label{eq:softmax_def}
\end{equation}
The common nonconformity scores for classification are functions of $\hat{\pi}$ (as detailed in Appendix~\ref{sec:nonconformity_scores}). However, relying on softmax values alone is unreliable for uncertainty assessment. First, modern networks produce poorly calibrated and often overconfident posteriors~\citep{guo2017calibration}, including spuriously high confidence on unrecognizable inputs~\citep{nguyen2015deep}. While temperature scaling can improve in-distribution calibration, it does not address epistemic uncertainty: OOD, ``far'' or even  ``hard'' inputs may still map to representation regions that yield confident softmax outputs~\citep{hein2019relu,lee2018simple}. This sensitivity to representation geometry means that when class manifolds overlap or decision boundaries are poorly separated, softmax confidence can be misleading even after calibration~\citep{cohen2020separability}. Second, softmax posteriors entangle likelihoods with learned class priors, biasing scores under label shift or class imbalance. Margins for minority classes tend to be smaller, intensify uncertainty mis-estimation unless logits are explicitly adjusted~\citep{ren2020balanced}. Collectively, these issues undermine CP adaptiveness: probability-based scores can produce (i) unnecessarily large sets for easy samples when tails are inflated, or (ii) deceptively small sets on ambiguous/OOD inputs that happen to receive high softmax confidence. For a comprehensive compilation of softmax criticism, see Appendix~\ref{sec:softmax_limits_compilation}.

These observations motivate adjusting nonconformity scores with an additional signal that reflects the model’s holistic signal about Its familiarity with $x$. In the next subsection, we use the \emph{Helmholtz free energy} computed from the logits as a principled, model-aware measure of epistemic uncertainty, that also correlates with sample difficulty, assigning low energy to easy in-distribution inputs and high energy to hard, ambiguous, or OOD inputs.

\subsection{Free Energy as a Measure of Epistemic Uncertainty}
\label{subsec:energy_uncertainty}
To quantify a model's uncertainty in its predictions, we seek a measure that reflects its familiarity with the input data. We turn to the framework of Energy-Based Models (EBMs)~\citep{lecun2006tutorial}. An EBM defines a scalar \emph{energy} for every configuration of variables, where lower energy corresponds to higher probability. Any standard discriminative classifier can be interpreted through the lens of an EBM~\citep{grathwohl2019your}. We refer to Appendix~\ref{sec:ebm} for more details on EBMs. 

For a classifier with a logit function $f(x): \mathbb{R}^D \to \mathbb{R}^K$, we can define a joint energy function over inputs $x$ and labels $y$ as:
\begin{equation}
    E(x, y; f) = -f_y(x), \quad y \in \{1, \dots, K\}.
\end{equation}
This formulation connects the classifier's outputs directly to an energy landscape. The conditional probability $p(y|x)$ is then given by the Gibbs-Boltzmann distribution:
\begin{equation}
    p(y|x) = \frac{\exp(-E(x,y))}{\sum_{k=1}^{K} \exp(-E(x,k))} = \frac{\exp(f_y(x))}{\sum_{k=1}^{K} \exp(f_k(x))},
\end{equation}
which is identical to the standard softmax function.

By marginalizing over the labels, we can derive an unnormalized density over the input space. This process yields the \emph{Helmholtz free energy}, $F(x)$, which acts as the energy function for the marginal distribution $p(x)$:
\begin{equation}
  F(x; f) = -\tau \log \sum_{k=1}^{K} \exp\left(\frac{-E(x,k)}{\tau}\right) = -\tau \log \sum_{k=1}^{K} \exp\left(\frac{f_k(x)}{\tau}\right),
  \label{eq:free_energy}
\end{equation}
where $\tau$ is a temperature parameter. The free energy represents a soft minimum of the joint energies for a given input $x$. A low free energy indicates that the model assigns high certainty to at least one class, suggesting the input is familiar. Conversely, a high free energy indicates that the model is uncertain across all classes.

This relationship allows us to define a model-implied marginal density over the input space $\mathcal{X}$:
\begin{equation}
\label{eq:distr}
  p(x)=\frac{\exp(-F(x)/\tau)}{Z}, \quad \text{where} \quad Z = \int_{x' \in \mathcal{X}} \exp(-F(x')/\tau) \,dx',
\end{equation}
is the partition function, a constant that ensures the distribution integrates to one. This formulation implies that inputs corresponding to high-density regions of the data distribution (i.e., ``typical'' examples) are assigned low energy, while those in low-density regions have high energy~\citep{liu2020energy}. We now formalize the connection between free energy and epistemic uncertainty.

\begin{proposition}
\label{prop:energy-uncertainty}
The Helmholtz free energy $F(x)$ is a valid measure of epistemic uncertainty, as it is linearly proportional to the negative log-likelihood of the model-implied data density $p(x)$.
\end{proposition}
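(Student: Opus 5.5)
The argument is essentially a direct computation from the model-implied density in \eqref{eq:distr}. The plan is to take logarithms of $p(x)=\exp(-F(x)/\tau)/Z$ and read off the relationship between $F(x)$ and $-\log p(x)$. Then I will argue that this quantity is a legitimate epistemic-uncertainty signal under the standard interpretation: low model-implied density means the input is unfamiliar to the model.

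\textbf{Step 1 (the linear identity).} Taking the negative logarithm of \eqref{eq:distr} gives
\begin{equation*}
  -\log p(x) = \frac{F(x)}{\tau} + \log Z, \qquad\text{equivalently}\qquad F(x) = \tau\bigl(-\log p(x)\bigr) - \tau \log Z .
\end{equation*}
Since $Z=\int_{\mathcal{X}}\exp(-F(x')/\tau)\,dx'$ is independent of $x$ and $\tau>0$ is fixed, $F(x)$ is an affine function of the negative log-likelihood with positive slope $\tau$. This is the sense of ``linearly proportional'' in the statement: $F(x)\propto -\log p(x)$ up to an additive constant. As an immediate consequence, the map $x\mapsto F(x)$ induces the same ordering of inputs as $x\mapsto -\log p(x)$. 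In particular, for any $x_1,x_2$ we have $F(x_1)\le F(x_2)$ if and only if $p(x_1)\ge p(x_2)$. Only this ranking matters for a per-sample reweighting, so the unknown constant $\log Z$ is irrelevant.

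\textbf{Step 2 (uncertainty interpretation).} I will invoke the standard identification, used in \citep{liu2020energy,grathwohl2019your}, that epistemic uncertainty concerns lack of support: an input lying in a low-density region of the model's learned marginal is one the model has effectively not seen. The negative log-likelihood $-\log p(x)$ is the canonical surprisal measure of such unfamiliarity. By Step 1, $F$ is a strictly increasing affine transform of this surprisal, so it inherits that interpretation. Low $F$ corresponds to typical, high-density inputs; high $F$ corresponds to atypical or OOD inputs. This completes the argument.

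\textbf{Main obstacle.} The computation itself is trivial. The delicate point is well-definedness: $Z$ must be finite, i.e.\ $\exp(-F/\tau)=\sum_k \exp(f_k/\tau)$ must be integrable over $\mathcal{X}$. For unbounded $\mathcal{X}=\mathbb{R}^D$ with ReLU networks whose logits grow linearly, this can fail. I would handle this by assuming, as is implicit in \eqref{eq:distr}, that $\mathcal{X}$ is compact (e.g.\ normalized images in $[0,1]^D$) and $f$ is continuous, so the integrand is bounded and $Z<\infty$. Two further points should be stated explicitly rather than proved. First, ``valid measure of epistemic uncertainty'' is an interpretive claim that rests on the model-implied density, not the true data density. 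Second, the temperature in \eqref{eq:distr} should be taken to match $\tau$ in \eqref{eq:free_energy}, although any positive constant preserves the affine relation.
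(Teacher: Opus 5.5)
Your proof is correct and takes the same approach as the paper: take logarithms of $p(x)=\exp(-F(x)/\tau)/Z$ to get $F(x)=\tau\bigl(-\log p(x)\bigr)-\tau\log Z$, an affine relation with positive slope. Your caveats about the finiteness of $Z$, which the paper leaves implicit, and about the interpretive nature of ``valid measure'' are sensible additions. One side remark needs correcting: it is not true that only the ranking induced by $F$ matters for the reweighting, because the factor $\tfrac{1}{\beta}\log\bigl(1+e^{-\beta F(x)}\bigr)$ depends on the magnitude of $F(x)$; $\log Z$ is irrelevant for a different reason, namely that the method computes $F$ directly from the logits and never uses $p(x)$.
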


We refer to Appendix~\ref{prf:energy-uncertainty} for proof. This alignment makes the energy score a desirable quantity for epistemic uncertainty~\citep{fuchsgruber2024energy, zong2025rethinking} and thus suitable for OOD detection~\citep{liu2020energy, wang2021can}.

\begin{figure}[t]
    \centering
    \begin{minipage}{\textwidth}
        \centering
        \includegraphics[width=\textwidth]{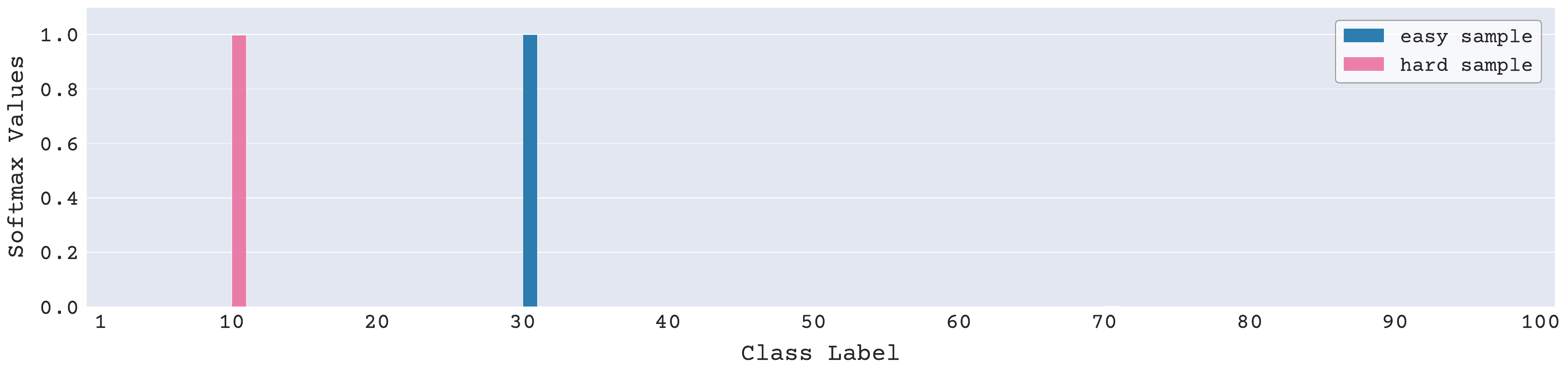}
        \\
        {\footnotesize (a) Softmax Scores: 1.0 (easy sample) vs. 0.998 (hard sample)}

    \end{minipage}
    
    \vspace{0.1cm}
    
    \begin{minipage}{\textwidth}
        \centering
        \includegraphics[width=\textwidth]{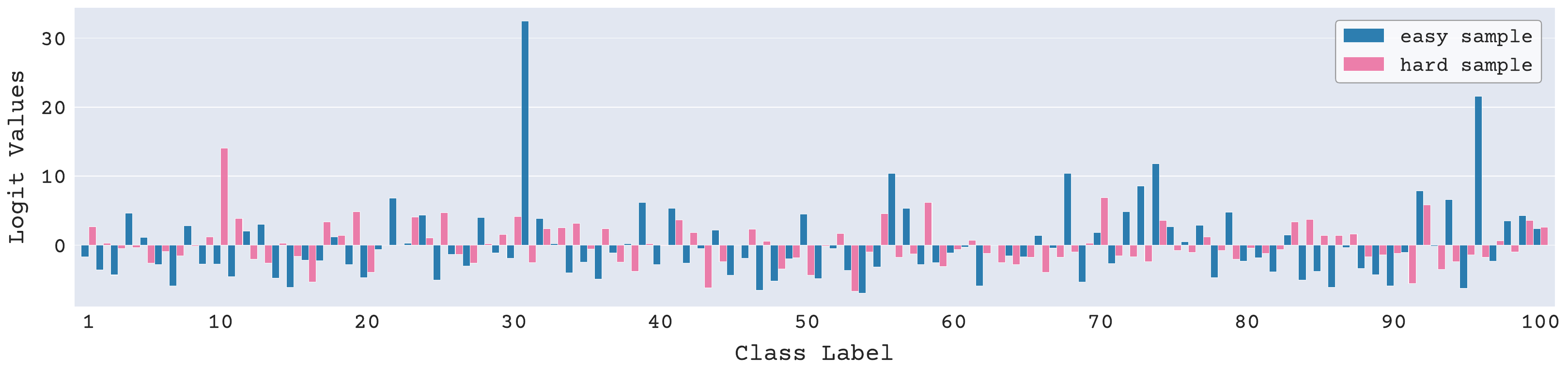}
        \\
        {\footnotesize (b) Negative Energy Scores: 32.49 (easy sample) vs. 14.08 (hard sample)}
        
    \end{minipage}
    \caption{(a) Softmax probability distributions and (b) raw logit outputs of two CIFAR-100 samples computed by a trained ResNet-56. Both samples receive similarly high softmax confidence scores, despite differing significantly in difficulty (1 vs. 27). In contrast, their negative energy scores more clearly reflect this difference.}
    \label{fig:logit_vs_softmax}
\end{figure}

\subsection{Energy-based Nonconformity Scores}\label{subsec:Energy_Based_Nonconformity_Score}

To clarify our motivation for energy-based conformal classification, we illustrate with a real example how integrating free energy into conformal classification can be beneficial, as it provides additional information not necessarily captured in the softmax space.

Following the definition in~\cite{angelopoulos2020uncertainty}, we quantify the \emph{difficulty} of a sample $(x, y_{\text{true}})$ as
\begin{equation}
D(x, y_{\text{true}}) = o_x(y_{\text{true}}),
\end{equation}
where $o_x(y_{\text{true}})$ denotes the rank of the true label $y_{\text{true}}$ in the model’s predicted class-probability ordering (from most to least likely). Formally,
\begin{equation}
o_x(y) = \bigl|\{k \in [K] : \hat{\pi}(k \mid x) \ge \hat{\pi}(y \mid x)\}\bigr|.
\end{equation}

Inspired by the analysis in~\cite{liu2020energy}, Figure~\ref{fig:logit_vs_softmax}(a) displays the softmax probability distributions produced by a pretrained ResNet-56 model for two samples from the CIFAR-100 dataset, while Figure~\ref{fig:logit_vs_softmax}(b) shows the corresponding raw logit outputs for each sample. 

The first sample is considered \emph{``easy''}, with a difficulty of 1 (i.e., the true label has the highest predicted probability), while the second is \emph{``hard''}, with a difficulty of 27 (i.e., misclassified by the model). Notably, despite the substantial difference in difficulty, both samples exhibit nearly identical softmax confidence scores, which would make them indistinguishable under standard softmax-based uncertainty metrics. In contrast, their negative energy scores ($-F(x)$), computed from the logits, are significantly more separable. This suggests that $F(x)$ captures a different, and potentially more nuanced, aspect of uncertainty. Easy or high-density samples yield large $-F(x)$, while hard, ambiguous, low-density or OOD samples yield smaller $-F(x)$.

To further investigate this behaviour, Figure~\ref{fig:difficulty_stratified_energy} shows the distribution of energy scores across the CIFAR-100 test set calculated with a trained ResNet-56 model, stratified by sample difficulty. As the figure illustrates, energy distributions shift noticeably across difficulty levels, suggesting that logits (and their derived energy scores) retain richer information about a model’s confidence than the softmax outputs alone. This highlights energy as an informative signal for uncertainty that can improve the efficiency of nonconformity scores, particularly in cases where softmax probabilities are overconfident or poorly aligned with true sample difficulty.

\begin{figure}[ht!]
    \centering
    \includegraphics[width=\textwidth]{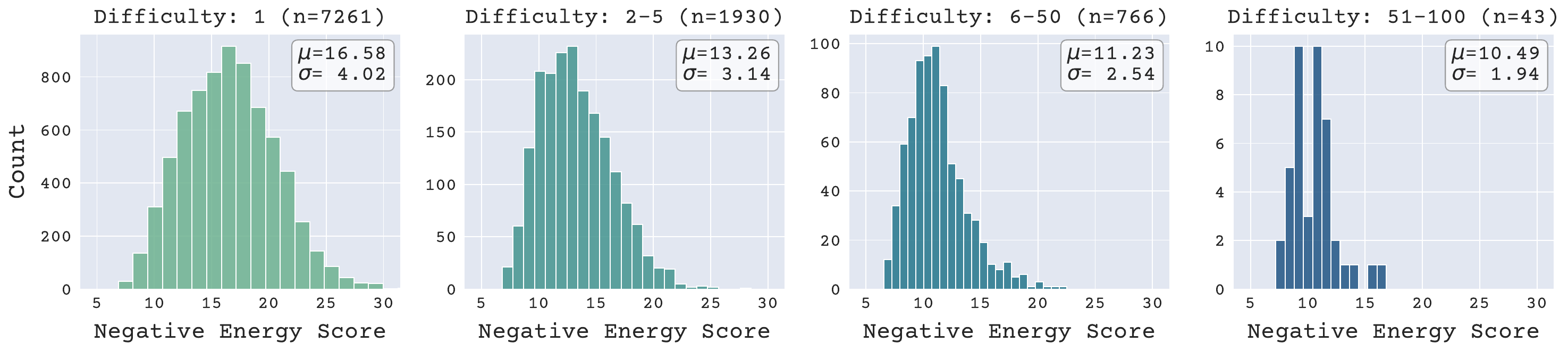}
    \caption{Distribution of negative energy scores ($-F(x)$), stratified by sample difficulty. As difficulty increases, the distribution shifts toward lower energy values, indicating reduced model confidence.}
    \label{fig:difficulty_stratified_energy}
\end{figure}

\begin{theorem}[Monotonicity of Expected Confidence with Sample Difficulty]
\label{thm:energy_difficulty}
Consider two difficulty levels $d_1$ and $d_2$ such that $1 \le d_1 < d_2 \le K$. Let $\E_{(X,Y)\sim\cD}[\cdot \mid D(X,Y_{\text{true}})=d]$ denote the expectation over the data distribution conditional on samples having difficulty $d$. For a classifier successfully trained to convergence on a representative dataset, the expected negative free energy is a strictly monotonically decreasing function of difficulty:
\begin{equation}
    \E[-F(X) \mid D(X,Y_{\text{true}})=d_1] > \E[-F(X) \mid D(X,Y_{\text{true}})=d_2].
\end{equation}
\end{theorem}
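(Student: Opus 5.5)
The statement is not a purely mathematical consequence of the definitions: without an assumption tying logits to the data distribution, one can build classifiers for which it fails. The plan is therefore to make ``successfully trained to convergence on a representative dataset'' precise as an explicit modelling assumption, and then derive monotonicity from it using Proposition~\ref{prop:energy-uncertainty}.

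\textbf{Step 1: the convergence assumption.} At convergence, the model-implied marginal $p(x)\propto \exp(-F(x)/\tau)$ from Eq.~\eqref{eq:distr} is taken to match the data marginal up to a monotone relation. Concretely, $-F(X)$ is assumed to be a strictly increasing function of the true data density $p_{\cD}(X)$, together with a mild regularity condition. By Proposition~\ref{prop:energy-uncertainty}, $-F(x)/\tau = \log p(x) + \log Z$. So comparing conditional expectations of $-F$ reduces to comparing conditional expectations of $\log p(X)$, up to the common additive constant $\log Z$ and the positive factor $\tau$.

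\textbf{Step 2: the link between density and difficulty.} A representative, converged classifier ranks the true label lower precisely on inputs lying in low-density, overlapping regions of class manifolds. I would formalise this as a stochastic-ordering assumption: the conditional law of $\log p(X)$ given $D=d_1$ first-order stochastically dominates its law given $D=d_2$ whenever $d_1<d_2$, with strict dominance on a set of positive measure.

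A cleaner, more self-contained route works directly in logit space. Write $-F(x)=\tau\log\sum_k e^{f_k(x)/\tau}$, which lies between $\max_k f_k(x)$ and $\max_k f_k(x) + \tau\log K$. Then argue that, at convergence under cross-entropy, the logit of the true class grows with how well that class is recognised. Under difficulty $d$, the true logit is the $d$-th order statistic of $\mathbf{f}(x)$. One then shows that the expected top logit (hence $-F$) decreases in $d$, using a model in which the logit magnitude scales with the confidence margin.

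\textbf{Step 3: conclude.} Given first-order stochastic dominance of $-F(X)\mid D=d_1$ over $-F(X)\mid D=d_2$, the strict inequality of expectations follows immediately from integrating the survival functions, $\E[Z]=\int_0^\infty P(Z>t)\,dt$ after shifting by the lower bound. Strictness comes from the strict dominance on a set of positive measure.

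\textbf{Main obstacle.} Step~2 is the hard part, because it is where the empirical claim of Figure~\ref{fig:difficulty_stratified_energy} must be turned into a hypothesis. I expect the proof can only be as rigorous as that assumption. I would first try to justify it from the density interpretation in Step~1, arguing that misranked samples concentrate where $p_{\cD}$ is small. I would state it transparently as the formal content of ``trained to convergence on a representative dataset.''
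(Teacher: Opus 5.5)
\textbf{There is a genuine gap: the step that carries all the weight is assumed rather than derived, and the mechanism you offer for it is the wrong one.}

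Your opening diagnosis is correct, and sharper than the paper's own argument. The softmax, the cross-entropy loss and the difficulty $D(x,y)$ are all invariant under $f_k(x)\mapsto f_k(x)+c(x)$, whereas $-F$ shifts by $c(x)$. So the inequality can only come from an extra assumption about the common logit level, and your proposal never supplies a usable one.

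\emph{Steps~1--3 are circular.} For the model-implied density, $\log p(X)=-F(X)/\tau-\log Z$, so Step~1 is only a reparametrisation. Steps~1--2 together therefore amount to assuming that $-F(X)\mid D=d_1$ stochastically dominates $-F(X)\mid D=d_2$, which already contains the theorem in a stronger form. Step~3 is then a correct but empty implication. (Also, $-F\ge\max_k f_k$ need not be bounded below, so use $\E[Z_1]-\E[Z_2]=\int_{\mathbb{R}}\bigl(P(Z_1>t)-P(Z_2>t)\bigr)\,dt$ for integrable variables rather than a shift.)

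\emph{The density mechanism points the wrong way.} Your only supporting content is that misranked samples sit where $p_{\cD}$ is small and that $-F$ increases with $p_{\cD}$. But $D$ is governed by class overlap in $\hat\pi(\cdot\mid x)$, not by marginal density. Take $X\mid Y=0\sim\mathcal{N}(-\mu,1)$ and $X\mid Y=1\sim\mathcal{N}(\mu,1)$ with $\mu<1$, and the linear logits $f_{0,1}(x)=\mp\mu x+b$, which realise the Bayes posterior. The misclassified points concentrate around $x=0$, which is the mode of $p_{\cD}$. Yet $-F(x)=b+\tau\log\bigl(2\cosh(\mu x/\tau)\bigr)$ is minimal exactly there. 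So the theorem holds in this example while your Step~1 assumption is reversed: $-F$ is strictly \emph{decreasing} in $p_{\cD}$.

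\emph{Comparison with the paper, and the failing transfer step.} The paper takes a margin route. It uses $\max_k f_k(x)\le -F(x)\le\max_k f_k(x)+\tau\log K$ to reduce to the top logit. It notes that $d=1$ means this is the true-class logit, which cross-entropy pushes up. It then invokes the empirical fact that trained classifiers are more confident on correct than on incorrect predictions. This ties $-F$ to distance from the decision boundary, which is the mechanism visible in the Gaussian example above. Your ``logit-space'' alternative is essentially this route, but its step ``the expected top logit decreases, hence $-F$ does'' fails as stated. The sandwich only yields
$\E[-F\mid d_1]-\E[-F\mid d_2]\ge\E[\max_k f_k\mid d_1]-\E[\max_k f_k\mid d_2]-\tau\log K$.
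The slack $\tau\log\sum_k e^{(f_k-\max_j f_j)/\tau}$ is larger for flatter, typically harder, logit vectors. For example, with $K=2$ and $\tau=1$, an easy sample with logits $(1,-100)$ has $-F\approx 1$, while a hard one with $(0.9,0.8)$ has a smaller maximum but $-F\approx 1.54$. Moreover, for large $\tau$ one has $-F\approx\tau\log K+\frac{1}{K}\sum_k f_k$, which tracks the mean rather than the maximum logit.

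The paper's proof has the same soft spots: this transfer, the passage from shift-invariant softmax confidence to $\max_k f_k$, and an argument that only separates $d=1$ from $d>1$. So neither argument is a derivation from ``convergence''.

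\emph{How to repair it.} State the needed hypothesis explicitly at the level of logits rather than data density. Two sufficient options are:
\begin{itemize}
\item the expected top logit drops by more than $\tau\log K$ between difficulty levels, which makes the sandwich transfer valid; or
\item the sorted logit vector is stochastically ordered across difficulty levels, which suffices because $-F$ is increasing in every coordinate.
\end{itemize}
Then present the theorem as a consequence of that hypothesis.
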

The proof of Theorem~\ref{thm:energy_difficulty} is provided in Appendix~\ref{prf:energy_difficulty}.

Having established that the free energy, derived from the logit space, captures epistemic uncertainty and sample difficulty more effectively than softmax probabilities, we propose to integrate the energy score into the nonconformity scores, an approach that aligns with the principles of normalized conformal prediction. By using free energy as a sample-specific difficulty measure, we aim to scale the base nonconformity scores to produce prediction sets that better adjust to the model’s uncertainty regarding each sample.

We define the energy-based variant of a base adaptive nonconformity score, $S(x,y)$, as:
\begin{equation} \label{eq:energy_based_nonconformity}
    S_{\text{Energy-Based}}(x, y) = S(x, y) \cdot \frac{1}{\beta}\log\left(1 + e^{-\beta F(x)}\right).
\end{equation}
Here, the scaling factor is a softplus function of the negative free energy, $-F(x)$, which ensures a positive, input-dependent weight. The parameter $\beta > 0$ controls the sharpness of this function. This modulation re-calibrates the nonconformity score on a per-sample basis, leveraging the model's epistemic uncertainty.

The intuition behind this formulation is as follows:
\begin{itemize}
    \item \textbf{For ``easy'' in-distribution samples}, the model is certain, resulting in a large negative free energy (i.e., large and positive $-F(x)$). This yields a large scaling factor, which magnifies the base score $S(x, y)$. This reweighting causes the scores of incorrect labels to more readily exceed the fixed conformal quantile $\hat{q}$, leading to smaller and more efficient prediction sets.
    \item \textbf{For ``hard'' or OOD samples}, the model is uncertain, and $-F(x)$ is small or negative. The scaling factor becomes small, thereby dampening the base score. This dampening reduces the magnitude of all scores for the given input, causing more plausible labels to fall below the conformal quantile $\hat{q}$ and thus producing adaptively larger sets that reflect the model's uncertainty.
\end{itemize}

As shown in Proposition~\ref{prop:adaptive_threshold}, scaling the score is equivalent to adjusting the quantile threshold $\hat{q}$ for each input, tightening it for confident predictions and relaxing it for uncertain ones. In our experiments, we apply this modulation to several state-of-the-art scores, including \aps, \raps, and \saps. An extension of this modulation to the \lac~score is provided in Appendix~\ref{sec:energy_based_lac}.

\begin{proposition}[Equivalence to Sample-Dependent Thresholding]
\label{prop:adaptive_threshold}
Let \(S(x,y)\) be any adaptive base nonconformity score and let \(G(x)\) be a positive, sample-dependent scaling function (e.g., $G(x) = \softplus(-F(x);\beta) = \tfrac{1}{\beta}\log(1+e^{-\beta F(x)})$), assuming this is positive. Let \(\mathcal{C}_G(x)\) be the prediction set constructed using the scaled score \(S_G(x,y) = G(x)S(x,y)\) and its corresponding quantile \(\widehat{q}^{(G)}_{1-\alpha}\) derived from the calibration set $\cD_\text{cal}=\{(x_i,y_i)\}_{i=1}^N$.

This construction is mathematically equivalent to using the original base score \(S(x,y)\) with a sample-dependent threshold \(\theta(x)\) that varies for each test sample:
\begin{equation}
    \mathcal{C}_G(x) = \left\{ y \in \{1, \dots, K\} \;\middle|\; S(x,y) \le \theta(x) \right\},
\end{equation}
where the threshold is defined as:
\begin{equation}
    \theta(x) = \frac{\widehat{q}^{(G)}_{1-\alpha}}{G(x)}.
\end{equation}
\end{proposition}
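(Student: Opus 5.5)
The plan is to unfold the definition of the split-conformal prediction set built from the scaled score and divide through by the positive factor $G(x)$. The quantile $\widehat{q}^{(G)}_{1-\alpha}$ is determined entirely by the calibration set. It is the $\lceil (N+1)(1-\alpha)\rceil/N$ empirical quantile of the scaled scores $\{G(x_i)S(x_i,y_i)\}_{i=1}^N$. Once the calibration data are fixed it is a deterministic constant, so it does not depend on the test input $x$ or on the candidate label $y$. By the standard construction recalled in Appendix~\ref{sec:nonconformity_scores}, the set is
\begin{equation}
    \mathcal{C}_G(x) = \left\{ y \in \{1,\dots,K\} \;\middle|\; G(x)S(x,y) \le \widehat{q}^{(G)}_{1-\alpha} \right\}.
\end{equation}

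The key step is a pointwise equivalence for each fixed test input $x$ and each label $y$. Because $G(x)>0$ by assumption, multiplying or dividing an inequality by $G(x)$ preserves its direction. Hence
\begin{equation}
    G(x)S(x,y) \le \widehat{q}^{(G)}_{1-\alpha} \iff S(x,y) \le \frac{\widehat{q}^{(G)}_{1-\alpha}}{G(x)} = \theta(x).
\end{equation}
The two membership conditions agree for every $y\in\{1,\dots,K\}$, so the two sets coincide and $\mathcal{C}_G(x)=\{y : S(x,y)\le\theta(x)\}$. The scaling factor $G(x)$ depends only on $x$, through the logits and hence the free energy $F(x)$, and not on $y$. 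Therefore $\theta(x)$ is a single threshold applied uniformly to all labels of that input, which is what makes it a legitimate sample-dependent threshold.

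The only point that needs care is the positivity hypothesis, since the equivalence fails if $G(x)\le 0$. For the softplus choice, positivity holds automatically, because $\log(1+e^{-\beta F(x)})>0$ for every finite $F(x)$ and $\beta>0$. I would also add a remark giving the interpretation. When $G(x)$ is large (low energy, an easy sample), the threshold $\theta(x)$ tightens; when $G(x)$ is small (a hard or OOD sample), it relaxes. Coverage is unaffected, because exchangeability of the calibration and test scaled scores $S_G$ is preserved: $S_G$ is a fixed measurable function of $(x,y)$ that does not depend on the calibration data.
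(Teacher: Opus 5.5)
Your proof is correct and follows the paper's argument exactly: substitute $S_G(x,y)=G(x)S(x,y)$ into the defining inequality of $\mathcal{C}_G(x)$, then divide by $G(x)>0$, which preserves the direction of the inequality and yields the threshold $\theta(x)=\widehat{q}^{(G)}_{1-\alpha}/G(x)$. Your extra observations are accurate refinements along the same route: the quantile is a fixed constant once the calibration data are given, and the softplus factor is automatically strictly positive.
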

We refer to Appendix~\ref{prf:adaptive_threshold} for proof.

%% file: sections/experiments.tex
We present a comprehensive empirical evaluation of our proposed energy-scaled nonconformity scores, comparing them across various data regimes and distributional challenges. The objective of CP is to produce prediction sets $\cC(X)$ for a test instance $X$ such that its unknown true label $Y$ is included with a user-specified probability $1-\alpha$, i.e., $\P(Y \in \cC(X)) \geq 1- \alpha$.

\subsection{Balanced Training Data}
\label{sec:exp_balanced}
We first evaluate performance when models are trained on datasets where the prior distribution over class labels is uniform, i.e., $\P_{\text{train}}(Y=y) = 1/|\cY|$ for all $y \in \cY$. This includes standard ImageNet-Val, Places365, and CIFAR-100 training sets. For $\alpha \in \{0.01, 0.025, 0.05, 0.1\}$, we report empirical coverage and average prediction set size in Table~\ref{tab:balanced_results}. This establishes whether energy-based methods maintain coverage while potentially improving adaptiveness and efficiency under standard, balanced training conditions. Detailed difficulty-stratified results are also reported in Appendix~\ref{sec:diff_stratified}.  

\begin{table*}[!ht]
\centering
\caption{Performance comparison of \aps, \raps, and \saps nonconformity score functions and their energy-based variants on CIFAR-100, ImageNet, and Places365 at miscoverage levels ${\alpha \in \{0.01, 0.025, 0.05, 0.1\}}$. Results are averaged over 10 trials. For the \textbf{Set Size} column, lower is better. \textbf{Bold} values indicate the best performance within each method family (e.g., \aps~with and without \energy).}
\label{tab:balanced_results}
\renewcommand{\arraystretch}{1.4}
\resizebox{\textwidth}{!}{%
\begin{tabular}{c l cc cc cc cc}
\toprule
& & \multicolumn{2}{c}{\textbf{$\bm{\alpha=0.1}$}} & \multicolumn{2}{c}{\textbf{$\bm{\alpha=0.05}$}} & \multicolumn{2}{c}{\textbf{$\bm{\alpha=0.025}$}} & \multicolumn{2}{c}{\textbf{$\bm{\alpha=0.01}$}} \\
\cmidrule(lr){3-4} \cmidrule(lr){5-6} \cmidrule(lr){7-8} \cmidrule(lr){9-10}
\textbf{Method} & & \textbf{Coverage} & \textbf{Set Size} & \textbf{Coverage} & \textbf{Set Size} & \textbf{Coverage} & \textbf{Set Size} & \textbf{Coverage} & \textbf{Set Size} \\
\midrule
\multicolumn{10}{c}{\textbf{CIFAR-100 \footnotesize (ResNet-56)}} \\
\midrule
\addlinespace[5pt]
& w/o \energy & \resultpm{0.90}{0.01} & \resultpm{3.17}{0.09}          & \resultpm{0.95}{0.00} & \resultpm{6.91}{0.24}          & \resultpm{0.975}{0.002} & \resultpm{13.29}{0.44}         & \resultpm{0.99}{0.00} & \resultpm{25.79}{1.20}         \\
\multirow{-2}{*}{\rotatebox[origin=c]{90}{\apsbf}} & \cellcolor{lightgray}w/ \energy  & \cellcolor{lightgray}\resultpm{0.90}{0.01} & \cellcolor{lightgray}\bresultpm{3.16}{0.08}         & \cellcolor{lightgray}\resultpm{0.95}{0.00} & \cellcolor{lightgray}\bresultpm{6.49}{0.24}         & \cellcolor{lightgray}\resultpm{0.974}{0.001} & \cellcolor{lightgray}\bresultpm{11.48}{0.25}        & \cellcolor{lightgray}\resultpm{0.99}{0.00} & \cellcolor{lightgray}\bresultpm{22.90}{0.82}        \\
\addlinespace[4pt]
\cdashline{3-10}
\addlinespace[4pt]
& w/o \energy & \resultpm{0.90}{0.00} & \bresultpm{3.13}{0.07}          & \resultpm{0.95}{0.01} & \resultpm{8.17}{0.47}          & \resultpm{0.974}{0.002} & \resultpm{16.38}{0.81}         & \resultpm{0.99}{0.00} & \resultpm{30.88}{1.90}         \\
\multirow{-2}{*}{\rotatebox[origin=c]{90}{\rapsbf}} & \cellcolor{lightgray}w/ \energy  & \cellcolor{lightgray}\resultpm{0.90}{0.01} & \cellcolor{lightgray}\bresultpm{3.13}{0.08}         & \cellcolor{lightgray}\resultpm{0.95}{0.00} & \cellcolor{lightgray}\bresultpm{6.18}{0.25}         & \cellcolor{lightgray}\resultpm{0.974}{0.002} & \cellcolor{lightgray}\bresultpm{11.34}{0.32}        & \cellcolor{lightgray}\resultpm{0.99}{0.00} & \cellcolor{lightgray}\bresultpm{23.63}{0.86}        \\
\addlinespace[4pt]
\cdashline{3-10}
\addlinespace[4pt]
& w/o \energy & \resultpm{0.90}{0.01} & \bresultpm{2.87}{0.09}         & \resultpm{0.95}{0.00} & \resultpm{7.47}{0.43}          & \resultpm{0.974}{0.002} & \resultpm{15.08}{0.72}         & \resultpm{0.99}{0.00} & \resultpm{29.80}{1.82}         \\
\multirow{-2}{*}{\rotatebox[origin=c]{90}{\sapsbf}} & \cellcolor{lightgray}w/ \energy  & \cellcolor{lightgray}\resultpm{0.90}{0.01} & \cellcolor{lightgray}\bresultpm{2.87}{0.11}          & \cellcolor{lightgray}\resultpm{0.95}{0.00} & \cellcolor{lightgray}\bresultpm{5.94}{0.16}         & \cellcolor{lightgray}\resultpm{0.974}{0.001} & \cellcolor{lightgray}\bresultpm{10.73}{0.23}        & \cellcolor{lightgray}\resultpm{0.99}{0.00} & \cellcolor{lightgray}\bresultpm{22.90}{0.82}        \\
\midrule
\multicolumn{10}{c}{\textbf{ImageNet \footnotesize (ResNet-50)}} \\
\midrule
\addlinespace[5pt]
& w/o \energy & \resultpm{0.90}{0.00} & \bresultpm{1.60}{0.02}         & \resultpm{0.95}{0.00} & \resultpm{3.99}{0.18}          & \resultpm{0.976}{0.001} & \resultpm{11.72}{0.26}         & \resultpm{0.99}{0.00} & \resultpm{39.08}{1.18}         \\
\multirow{-2}{*}{\rotatebox[origin=c]{90}{\apsbf}} & \cellcolor{lightgray}w/ \energy  & \cellcolor{lightgray}\resultpm{0.90}{0.00} & \cellcolor{lightgray}\resultpm{1.66}{0.03}          & \cellcolor{lightgray}\resultpm{0.95}{0.00} & \cellcolor{lightgray}\bresultpm{3.84}{0.17}         & \cellcolor{lightgray}\resultpm{0.976}{0.001} & \cellcolor{lightgray}\bresultpm{10.11}{0.30}        & \cellcolor{lightgray}\resultpm{0.99}{0.00} & \cellcolor{lightgray}\bresultpm{32.93}{1.25}        \\
\addlinespace[4pt]
\cdashline{3-10}
\addlinespace[4pt]
& w/o \energy & \resultpm{0.90}{0.00} & \resultpm{1.77}{0.03}          & \resultpm{0.95}{0.00} & \resultpm{4.22}{0.06}          & \resultpm{0.976}{0.001} & \resultpm{10.56}{0.21}         & \resultpm{0.99}{0.00} & \resultpm{37.01}{1.33}         \\
\multirow{-2}{*}{\rotatebox[origin=c]{90}{\rapsbf}} & \cellcolor{lightgray}w/ \energy  & \cellcolor{lightgray}\resultpm{0.90}{0.00} & \cellcolor{lightgray}\bresultpm{1.76}{0.04}         & \cellcolor{lightgray}\resultpm{0.95}{0.00} & \cellcolor{lightgray}\bresultpm{3.88}{0.07}         & \cellcolor{lightgray}\resultpm{0.976}{0.001} & \cellcolor{lightgray}\bresultpm{9.18}{0.29}         & \cellcolor{lightgray}\resultpm{0.99}{0.00} & \cellcolor{lightgray}\bresultpm{31.47}{1.28}        \\
\addlinespace[4pt]
\cdashline{3-10}
\addlinespace[4pt]
& w/o \energy & \resultpm{0.90}{0.00} & \resultpm{1.67}{0.01}          & \resultpm{0.95}{0.00} & \resultpm{3.67}{0.08}          & \resultpm{0.976}{0.001} & \resultpm{9.75}{0.31}          & \resultpm{0.99}{0.00} & \resultpm{35.97}{1.46}         \\
\multirow{-2}{*}{\rotatebox[origin=c]{90}{\sapsbf}} & \cellcolor{lightgray}w/ \energy  & \cellcolor{lightgray}\resultpm{0.90}{0.00} & \cellcolor{lightgray}\bresultpm{1.66}{0.03}         & \cellcolor{lightgray}\resultpm{0.95}{0.00} & \cellcolor{lightgray}\bresultpm{3.66}{0.06}         & \cellcolor{lightgray}\resultpm{0.976}{0.001} & \cellcolor{lightgray}\bresultpm{8.50}{0.29}         & \cellcolor{lightgray}\resultpm{0.99}{0.00} & \cellcolor{lightgray}\bresultpm{30.24}{1.12}        \\
\midrule
\multicolumn{10}{c}{\textbf{Places365 \footnotesize (ResNet-50)}} \\
\midrule
\addlinespace[5pt]
& w/o \energy & \resultpm{0.90}{0.00} & \resultpm{7.56}{0.13}          & \resultpm{0.95}{0.00} & \resultpm{14.28}{0.24}         & \resultpm{0.975}{0.002} & \resultpm{24.92}{0.79}         & \resultpm{0.99}{0.00} & \resultpm{46.81}{1.93}         \\
\multirow{-2}{*}{\rotatebox[origin=c]{90}{\apsbf}} & \cellcolor{lightgray}w/ \energy  & \cellcolor{lightgray}\resultpm{0.90}{0.00} & \cellcolor{lightgray}\bresultpm{7.11}{0.11}         & \cellcolor{lightgray}\resultpm{0.95}{0.00} & \cellcolor{lightgray}\bresultpm{12.98}{0.23}        & \cellcolor{lightgray}\resultpm{0.975}{0.002} & \cellcolor{lightgray}\bresultpm{22.32}{0.68}        & \cellcolor{lightgray}\resultpm{0.99}{0.00} & \cellcolor{lightgray}\bresultpm{40.73}{1.69}        \\
\addlinespace[4pt]
\cdashline{3-10}
\addlinespace[4pt]
& w/o \energy & \resultpm{0.90}{0.00} & \resultpm{7.37}{0.16}          & \resultpm{0.95}{0.00} & \resultpm{14.37}{0.27}         & \resultpm{0.976}{0.001} & \resultpm{26.34}{0.57}         & \resultpm{0.99}{0.00} & \resultpm{50.64}{1.68}         \\
\multirow{-2}{*}{\rotatebox[origin=c]{90}{\rapsbf}} & \cellcolor{lightgray}w/ \energy  & \cellcolor{lightgray}\resultpm{0.90}{0.00} & \cellcolor{lightgray}\bresultpm{6.85}{0.11}         & \cellcolor{lightgray}\resultpm{0.95}{0.00} & \cellcolor{lightgray}\bresultpm{12.67}{0.23}        & \cellcolor{lightgray}\resultpm{0.975}{0.002} & \cellcolor{lightgray}\bresultpm{22.35}{0.64}        & \cellcolor{lightgray}\resultpm{0.99}{0.00} & \cellcolor{lightgray}\bresultpm{41.59}{1.32}        \\
\addlinespace[4pt]
\cdashline{3-10}
\addlinespace[4pt]
& w/o \energy & \resultpm{0.90}{0.00} & \resultpm{7.20}{0.14}          & \resultpm{0.95}{0.00} & \resultpm{14.11}{0.30}         & \resultpm{0.976}{0.001} & \resultpm{25.76}{0.49}         & \resultpm{0.99}{0.00} & \resultpm{49.80}{1.74}         \\
\multirow{-2}{*}{\rotatebox[origin=c]{90}{\sapsbf}} & \cellcolor{lightgray}w/ \energy  & \cellcolor{lightgray}\resultpm{0.90}{0.00} & \cellcolor{lightgray}\bresultpm{6.79}{0.09}         & \cellcolor{lightgray}\resultpm{0.95}{0.00} & \cellcolor{lightgray}\bresultpm{12.51}{0.18}        & \cellcolor{lightgray}\resultpm{0.975}{0.002} & \cellcolor{lightgray}\bresultpm{22.19}{0.69}        & \cellcolor{lightgray}\resultpm{0.99}{0.00} & \cellcolor{lightgray}\bresultpm{41.19}{1.29}        \\
\bottomrule
\end{tabular}
} % end \resizebox
\end{table*}

\subsection{Imbalanced Training Data}
\label{sec:exp_imbalanced}
We then study performance on data with an imbalanced class prior. For this, we use CIFAR-100-LT training variants, which are designed to simulate long-tailed distributions where class frequencies decay exponentially ($\P_{\text{train}}(Y=j) \propto \exp(-\lambda \cdot j)$). The parameter $\lambda$ controls the severity of this imbalance, with higher values indicating a stronger imbalance, as illustrated in Figure~\ref{fig:imb_dist}.

Modern deep networks trained on such long-tailed data exhibit a \emph{``familiarity bias''}, where the model shows higher confidence for majority classes and lower confidence for minority classes~\citep{wallace2012class, samuel2021generalized}. This makes conformal prediction with standard softmax scores to under-cover minority classes. To address this, our energy-based variants dampen the nonconformity scores of minority classes more than those of majority classes. This helps to expand prediction sets for minority classes, fostering their labels' inclusion.

Indeed, energy scores capture this training imbalance~\citep{liu2024rethinking}. Figure~\ref{fig:energy_imbalanced} visually demonstrates how the distributions of negative energy scores shift across different class bins under both balanced and imbalanced training conditions.

\begin{theorem}[Free Energy as an Indicator of Class Imbalance]
\label{prop:imbalance}
Let $f$ be a classifier trained on a dataset drawn from a distribution $\Ptrain(X,Y)$ with imbalanced class priors. Consider two classes, a majority class $y_{\text{maj}}$ and a minority class $y_{\text{min}}$, such that their training priors satisfy $\Ptrain(Y=y_{\text{maj}}) > \Ptrain(Y=y_{\text{min}})$. 

Let the model be evaluated on a balanced test distribution $\Ptest$. Assume the classes are of comparable intrinsic complexity. Then, the expected negative free energy for test samples from the majority class will be greater than that for the minority class:
\begin{equation}
    \E_{X \sim \Ptest(X|Y=y_{\text{maj}})}[-F(X)] > \E_{X \sim \Ptest(X|Y=y_{\text{min}})}[-F(X)].
\end{equation}
\end{theorem}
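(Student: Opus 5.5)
The plan is to formalize ``trained to convergence'' through the standard label-shift decomposition of a well-fit discriminative classifier. At the population optimum of cross-entropy under $\Ptrain$, the logits recover the training posterior up to an input-dependent additive constant:
\[
f_k(x) = \log \Ptrain(x\mid Y=k) + \log \Ptrain(Y=k) + c(x).
\]
The paper's energy interpretation suggests fixing the gauge $c(x)$ so that $-F(x)/\tau$ tracks $\log \Ptrain(x)$ up to a constant; this is the reading already used in Proposition~\ref{prop:energy-uncertainty} and in \eqref{eq:distr}. We work at $\tau=1$ for concreteness and take $c(x)\equiv c$ to be constant. With this choice,
\[
-F(x) = \log \sum_k \Ptrain(x\mid k)\,\Ptrain(k) + c = \log \Ptrain(x) + c,
\]
so the free energy is an affine function of the training-marginal log-density. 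The statement then reduces to showing that test inputs from the majority class sit, on average, in regions of higher training density than those from the minority class.

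Next, we make ``comparable intrinsic complexity'' precise. We assume the class-conditionals are identical up to a measure-preserving relabeling of space. We also assume the test distribution is balanced, so $\Ptest(x\mid y)=\Ptrain(x\mid y)$. Using the bound $\log\sum_k a_k \ge \log a_y$, we get
\[
\E_{X\sim \Ptest(\cdot\mid y)}[-F(X)] = \E_{X\sim p_y}\Bigl[\log\Bigl(\pi_y\, p_y(X) + \sum_{k\neq y} \pi_k\, p_k(X)\Bigr)\Bigr] + c,
\]
where $\pi_k=\Ptrain(Y=k)$ and $p_k=\Ptrain(\cdot\mid k)$.

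The cleanest route is a ``well-separated classes'' regime, in which the cross terms are negligible on the support of $p_y$. In that regime the expectation equals $\log \pi_y - H(p_y) + c$, up to error terms. By comparable complexity the entropies coincide, $H(p_{\text{maj}}) = H(p_{\text{min}})$, so the difference between the two expectations is
\[
\log \pi_{\text{maj}} - \log \pi_{\text{min}} > 0.
\]

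To avoid relying on exact separation, we would instead compare the two expectations via monotonicity in $\pi_y$. Write each expectation as a function $\Phi_y(\pi)$ of the prior vector. Under the symmetry assumption, swapping the labels of the majority and minority classes transforms $\Phi_{\text{maj}}$ into $\Phi_{\text{min}}$ with $\pi_{\text{maj}}$ and $\pi_{\text{min}}$ exchanged. Differentiating along the path that moves mass from the minority class to the majority class gives
\[
\frac{d}{dt}\,\E_{p_{\text{maj}}}\bigl[\log \Ptrain(X)\bigr] = \E_{p_{\text{maj}}}\!\left[\frac{p_{\text{maj}}(X) - p_{\text{min}}(X)}{\Ptrain(X)}\right].
\]
This is nonnegative because $\E_{p}[(p-q)/m]$ is minimized when $p=q$. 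The inequality is strict unless $p_{\text{maj}} = p_{\text{min}}$, in which case the classes are indistinguishable and the statement is vacuous.

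The main obstacle is this last positivity step: showing rigorously that $\E_p[(p-q)/m]\ge 0$ holds for a mixture $m$ that contains $p$ and $q$ with varying weights. The first thing to try is writing the difference symmetrically and applying Cauchy--Schwarz, using the symmetry of the relabeling to pair terms. If that fails, we fall back on the separated-class approximation above, stating it explicitly as part of the comparable-complexity assumption.
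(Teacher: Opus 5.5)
Your route differs from the paper's. In the well-separated regime it is sound and sharper than the paper's heuristic, but the extension beyond separation has a genuine gap.

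\textbf{Comparison with the paper.} The paper writes $f_y(x)\approx\log\Ptrain(x\mid y)+\log\Ptrain(y)+C(x)$. It replaces $-F$ by $\max_k f_k$, and then by the true-class logit (``the model is mostly correct''). It then posits that $A(y)=\E_{X\sim\Ptest(\cdot\mid y)}[\log\Ptrain(X\mid y)+C(X)]$ is the same for both classes, so only the log-prior gap remains. You instead fix the gauge so that $-F=\log\Ptrain(x)+c$, and compare $\E_{p_y}[\log m]$ with $m=\sum_k\pi_kp_k$.

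Under your gauge, the paper's $A(y)$ is exactly $-H(p_y)+c$. On a separated support, $-F=f_y$ for every $\tau$. So the two arguments coincide there. What you gain is that the paper's unverifiable $A(y_{\text{maj}})\approx A(y_{\text{min}})$ becomes the checkable condition $H(p_{\text{maj}})=H(p_{\text{min}})$. You also learn that the neglected cross terms are nonnegative, so separation must keep the minority class's contribution below $\log(\pi_{\text{maj}}/\pi_{\text{min}})$.

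Several assumptions should be stated plainly:
\begin{itemize}
\item $c(x)\equiv c$ is an assumption, not a consequence of training. Softmax is invariant to $c(x)$, so cross-entropy says nothing about $F$. The paper hides the same freedom in $C(x)$: if $C(x)$ were just $-\log\Ptrain(x)$, one would get $-F\equiv 0$ at $\tau=1$.
\item $\Ptest(x\mid y)=\Ptrain(x\mid y)$ is a label-shift assumption; it does not follow from the test prior being balanced.
\item Your display after ``using the bound'' is an identity. The bound $\log\sum_k a_k\ge\log a_y$ only lower-bounds both classes, so it cannot order them.
\end{itemize}

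\textbf{The gap in the monotonicity step.} ``$\E_p[(p-q)/m]$ is minimized at $p=q$'' is not a valid reason, because $m$ moves with $q$ and with the other classes. The derivative can in fact be negative. On three points take $p=(\tfrac23,\tfrac13,0)$ and $q=(0,\tfrac23,\tfrac13)$, which is a relabeling of $p$. Set $\pi_{\text{maj}}=0.3$, $\pi_{\text{min}}=0.1$, and put a third class of weight $0.6$ at the first point. Then
\[
\E_p[(p-q)/m]=\tfrac19\bigl(4/m(x_1)-1/m(x_2)\bigr)=\tfrac19(5-6)=-\tfrac19 .
\]

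More fundamentally, the swap step $\Phi_{\text{min}}(\pi)=\Phi_{\text{maj}}(\pi')$, where $\pi'$ has the two priors exchanged, does not follow from ``identical up to a relabeling''. Without more symmetry the statement itself is false. Let $p_{\text{min}}$ be a disjoint translate of $p_{\text{maj}}$, and stack a third class $p_3=p_{\text{min}}$ with $\pi_{\text{min}}+\pi_3>\pi_{\text{maj}}$. Then $\E_{p_{\text{min}}}[\log m]=\log(\pi_{\text{min}}+\pi_3)-H$, which exceeds $\log\pi_{\text{maj}}-H=\E_{p_{\text{maj}}}[\log m]$.

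\textbf{The repair.} Your instinct to pair terms by symmetry is right, once the assumption is strengthened. Write $p=p_{\text{maj}}$, $q=p_{\text{min}}$, $r=\sum_{k\ne\text{maj},\text{min}}\pi_kp_k$, and assume a measure-preserving $T$ with $p\circ T=q$, $q\circ T=p$ and $r\circ T=r$. Then no path is needed. With $a=\pi_{\text{maj}}>b=\pi_{\text{min}}$, substituting $x\mapsto Tx$ and averaging gives
\[
\E_{p}[\log m]-\E_{q}[\log m]=\frac12\int\bigl(p-q\bigr)\log\frac{ap+bq+r}{bp+aq+r}\,dx .
\]
The numerator minus the denominator equals $(a-b)(p-q)$. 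Hence the integrand is nonnegative, and strictly positive wherever $p\neq q$.

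For $K=2$, your Cauchy--Schwarz idea does make the derivative nonnegative, since it equals $\tfrac1b\bigl(\int p^2/m\,dx-1\bigr)$. Even then, the swap step still needs $T$ to exchange the two classes. A one-way relabeling is not enough: a 3-cycle relabeling with priors $0.52$ versus $0.48$ already reverses the inequality.
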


See Appendix~\ref{prf:imbalance} for proof. This setup allows us to evaluate how energy influences adaptiveness when a model's representations are shaped by imbalanced training. We report marginal coverage and average set size with the standard balanced CIFAR-100 calibration set and test set, with results presented in Table~\ref{tab:imbalanced_results_main}. We refer to Appendix~\ref{apd:imbalanced} for additional experiments on imbalanced scenario.

\begin{figure}[ht]
    \centering
    \begin{minipage}[t]{0.49\textwidth}
        \centering
        \includegraphics[width=\linewidth]{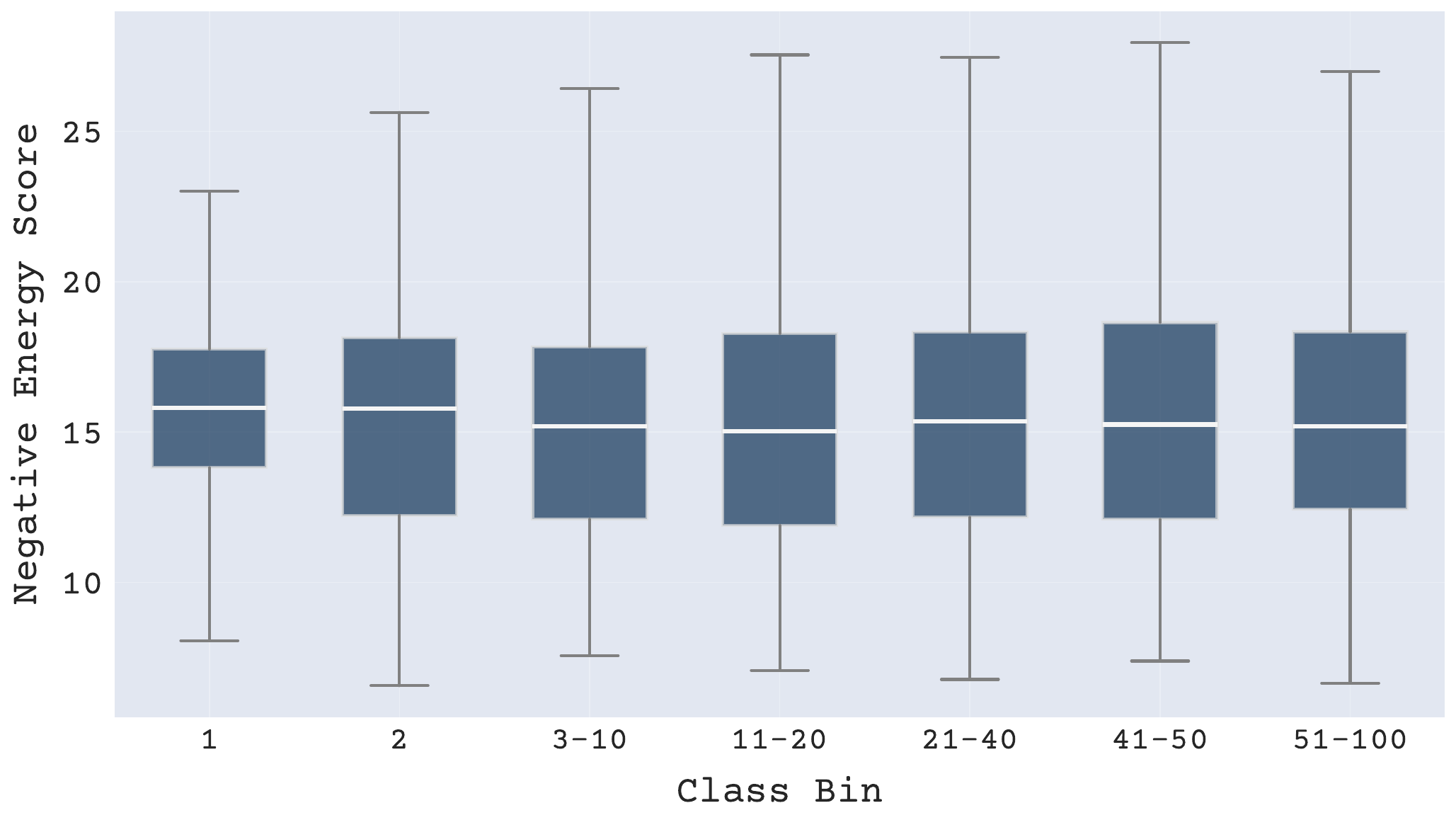}

        {\scriptsize (a) balanced class priors}
    \end{minipage}
    \hfill
    \begin{minipage}[t]{0.49\textwidth}
        \centering
        \includegraphics[width=\linewidth]{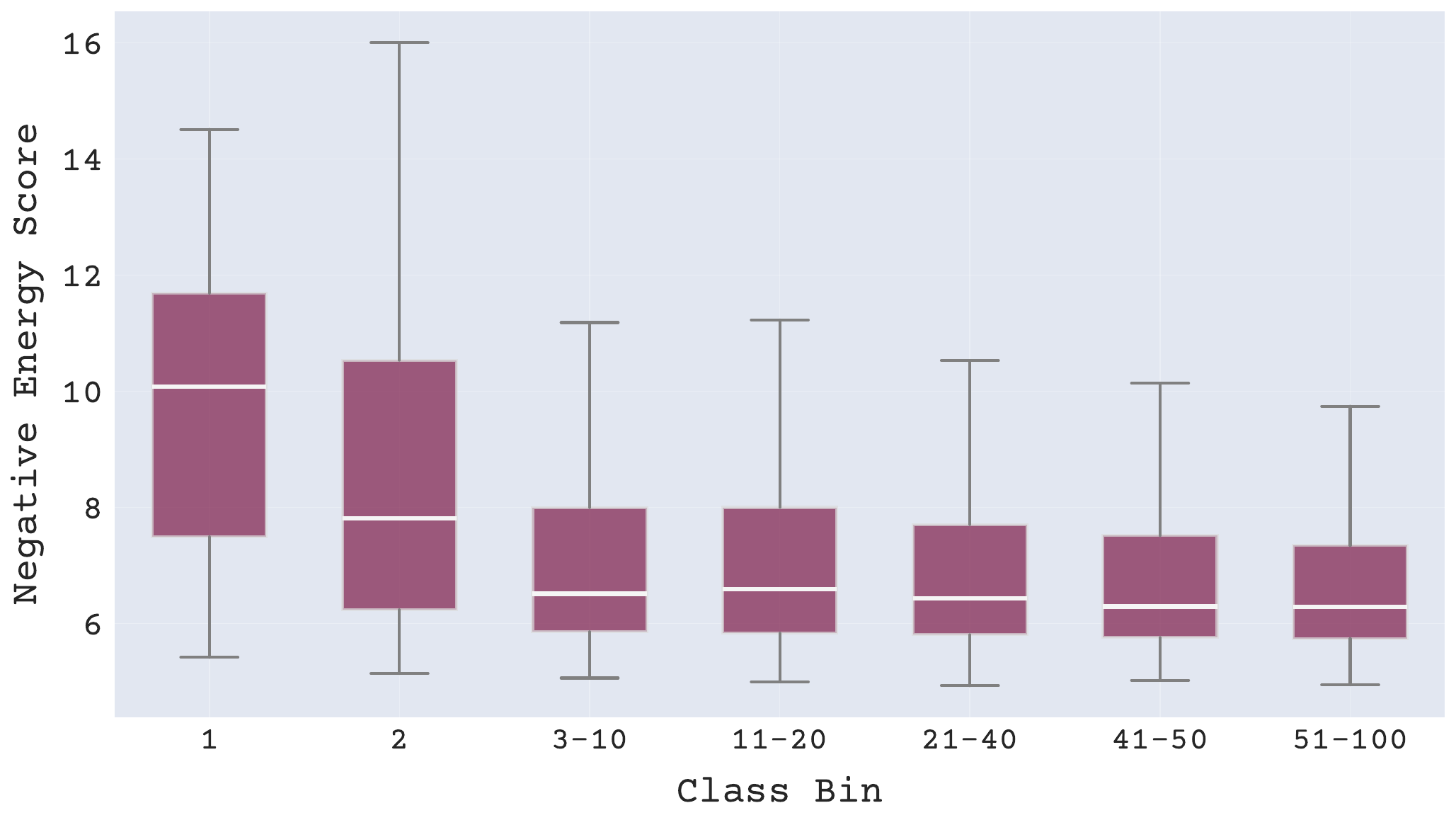}
        {\scriptsize (b) imbalanced class priors}
    \end{minipage}
    \caption{
Distributions of negative energy scores across various class bins under balanced and imbalanced training. Results are for CIFAR-100.
(a) Balanced model: scores are consistent across class bins.
(b) Imbalanced model ($\lambda = 0.03$): minority classes exhibit lower negative energy scores, reflecting reduced confidence.
}

    \label{fig:energy_imbalanced}
\end{figure}

\begin{table*}[ht!]
\centering
\caption{Performance comparison of different nonconformity scores and their energy-based variants on imbalanced CIFAR-100 with an imbalance factor of $\lambda = 0.005$ and at miscoverage levels ${\alpha \in \{0.01, 0.025, 0.05, 0.1\}}$. Results are averaged over 10 trials with a ResNet-56 model. For the \textbf{Set Size} column, lower is better. \textbf{Bold} values indicate the best performance within each method family (e.g., \aps~with and without \energy). Results for additional $\lambda$ values are provided in Appendix~\ref{apd:imbalanced}.}
\label{tab:imbalanced_results_main}
\renewcommand{\arraystretch}{1.4}
\resizebox{\textwidth}{!}{%
\begin{tabular}{c l cc cc cc cc}
\toprule
& & \multicolumn{2}{c}{\textbf{$\bm{\alpha=0.1}$}} & \multicolumn{2}{c}{\textbf{$\bm{\alpha=0.05}$}} & \multicolumn{2}{c}{\textbf{$\bm{\alpha=0.025}$}} & \multicolumn{2}{c}{\textbf{$\bm{\alpha=0.01}$}} \\
\cmidrule(lr){3-4} \cmidrule(lr){5-6} \cmidrule(lr){7-8} \cmidrule(lr){9-10}
\textbf{Method} & \textbf{} & \textbf{Coverage} & \textbf{Set Size} & \textbf{Coverage} & \textbf{Set Size} & \textbf{Coverage} & \textbf{Set Size} & \textbf{Coverage} & \textbf{Set Size} \\
\addlinespace[3pt]
\midrule
\multicolumn{10}{c}{\textbf{CIFAR-100-LT ($\bm{\lambda=0.005}$) \footnotesize (ResNet-56)}} \\
\midrule
\addlinespace[5pt]
& w/o \energy & \resultpm{0.90}{0.01} & \resultpm{8.44}{0.30}           & \resultpm{0.95}{0.00} & \resultpm{17.22}{0.55}          & \resultpm{0.973}{0.003} & \resultpm{28.32}{0.86}          & \resultpm{0.99}{0.00} & \resultpm{45.10}{0.75}         \\
\multirow{-2}{*}{\rotatebox[origin=c]{90}{\apsbf}} & \cellcolor{lightgray}w/ \energy  & \cellcolor{lightgray}\resultpm{0.90}{0.01} & \cellcolor{lightgray}\bresultpm{7.41}{0.22}          & \cellcolor{lightgray}\resultpm{0.95}{0.01} & \cellcolor{lightgray}\bresultpm{13.30}{0.65}         & \cellcolor{lightgray}\resultpm{0.973}{0.004} & \cellcolor{lightgray}\bresultpm{21.72}{1.27}         & \cellcolor{lightgray}\resultpm{0.99}{0.00} & \cellcolor{lightgray}\bresultpm{34.78}{1.72}        \\
\addlinespace[4pt]
\cdashline{3-10}
\addlinespace[4pt]
& w/o \energy & \resultpm{0.90}{0.01} & \resultpm{8.88}{0.30}           & \resultpm{0.95}{0.00} & \resultpm{18.54}{0.73}          & \resultpm{0.972}{0.003} & \resultpm{30.09}{0.94}          & \resultpm{0.99}{0.00} & \resultpm{51.65}{1.75}         \\
\multirow{-2}{*}{\rotatebox[origin=c]{90}{\rapsbf}} & \cellcolor{lightgray}w/ \energy  & \cellcolor{lightgray}\resultpm{0.90}{0.01} & \cellcolor{lightgray}\bresultpm{7.59}{0.25}          & \cellcolor{lightgray}\resultpm{0.95}{0.01} & \cellcolor{lightgray}\bresultpm{13.26}{0.69}         & \cellcolor{lightgray}\resultpm{0.973}{0.004} & \cellcolor{lightgray}\bresultpm{22.27}{1.24}         & \cellcolor{lightgray}\resultpm{0.99}{0.00} & \cellcolor{lightgray}\bresultpm{35.43}{1.88}        \\
\addlinespace[4pt]
\cdashline{3-10}
\addlinespace[4pt]
& w/o \energy & \resultpm{0.90}{0.01} & \resultpm{8.59}{0.35}           & \resultpm{0.95}{0.00} & \resultpm{17.96}{0.66}          & \resultpm{0.972}{0.003} & \resultpm{29.25}{1.04}          & \resultpm{0.99}{0.00} & \resultpm{50.30}{1.86}         \\
\multirow{-2}{*}{\rotatebox[origin=c]{90}{\sapsbf}} & \cellcolor{lightgray}w/ \energy  & \cellcolor{lightgray}\resultpm{0.90}{0.01} & \cellcolor{lightgray}\bresultpm{7.58}{0.23}          & \cellcolor{lightgray}\resultpm{0.95}{0.01} & \cellcolor{lightgray}\bresultpm{13.19}{0.68}         & \cellcolor{lightgray}\resultpm{0.973}{0.004} & \cellcolor{lightgray}\bresultpm{21.99}{1.27}         & \cellcolor{lightgray}\resultpm{0.99}{0.00} & \cellcolor{lightgray}\bresultpm{35.18}{1.91}        \\
\addlinespace[3pt]
\bottomrule
\end{tabular}
} 
\end{table*}

\subsection{Reliability Under Distributional Shift}
\label{subsec:ood_adaptivity}
An important test for any uncertainty quantification method is its response to out-of-distribution (OOD) data. This scenario is particularly challenging for conformal prediction because the assumption of exchangeability between the calibration and test data is violated. Consequently, the formal guarantee of marginal coverage no longer holds. This challenge is amplified in real-world deployments where a model, calibrated on in-distribution samples, inevitably encounters novel inputs. These inputs can range from simple \textit{covariate shifts} (e.g., familiar objects in new contexts) to more severe \textit{semantic shifts}, where the inputs belong to classes entirely unseen during training.

In the absence of coverage guarantees, a reliable conformal classifier should not provide a small, incorrect prediction without some indication of its uncertainty. This motivates the following desiderata for the behavior of a conformal predictor $C(\cdot)$ when presented with an OOD input $x_{\text{ood}}$ drawn from an OOD distribution $P_{\text{ood}}$, compared to an in-distribution input $x_{\text{id}}$ drawn from $P_{\text{id}}$.

\paragraph{Desiderata for a Reliable Conformal Classifier on OOD Data} We establish the following desiderata for a conformal predictor's behavior when encountering OOD data, where the standard exchangeability assumption is violated and coverage guarantees no longer hold.

\begin{desideratum}[Adaptive Uncertainty Response]
\label{des:adaptive_response}
When faced with an out-of-distribution input, a reliable conformal predictor must adapt its output to signal increased uncertainty. This signal should manifest as either an expansion of the prediction set size or as a principled abstention via an empty set. This response is characterized by one or both of the following outcomes:
\begin{enumerate}
    \item[(i)] A significant increase in the probability of abstention:
    \[ P_{X \sim P_{\text{ood}}}(C(X) = \emptyset) \gg P_{X \sim P_{\text{id}}}(C(X) = \emptyset) \approx 0 \]
    \item[(i)] An inflation in the size of non-empty prediction sets, such that the expected size of non-empty OOD sets is greater than the expected size of in-distribution sets:
     $$ \mathbb{E}_{X \sim P_{\text{ood}}}[|C(X)|] > \mathbb{E}_{X \sim P_{\text{id}}}[|C(X)|] $$
\end{enumerate}
\end{desideratum}

\begin{desideratum}[Avoidance of False Confidence]
\label{des:false_confidence}
The predictor should minimize the probability of producing a small, non-empty set (e.g., of size 1 or 2) for an OOD input.
\[ \text{minimize } P_{X \sim P_{\text{ood}}}(1 \le |C(X)| \le k) \text{ for small } k \]
\end{desideratum}

In summary, as also discussed in Appendix~\ref{apd:ood_behaviour}, a larger or empty set is an informative and appropriate outcome in this scenario, whereas a small, incorrect set is problematic.
To assess how our method aligns with the OOD desiderata, we designed an experiment under a semantic shift. A ResNet-56 model was calibrated on in-distribution CIFAR-100 data and evaluated on the Places365 as the OOD dataset. As coverage is not a meaningful metric in this context, our analysis focuses on prediction set size.

\begin{figure}[ht]
    \centering

    \begin{minipage}[t]{0.495\textwidth}
        \centering
        \includegraphics[width=\linewidth]{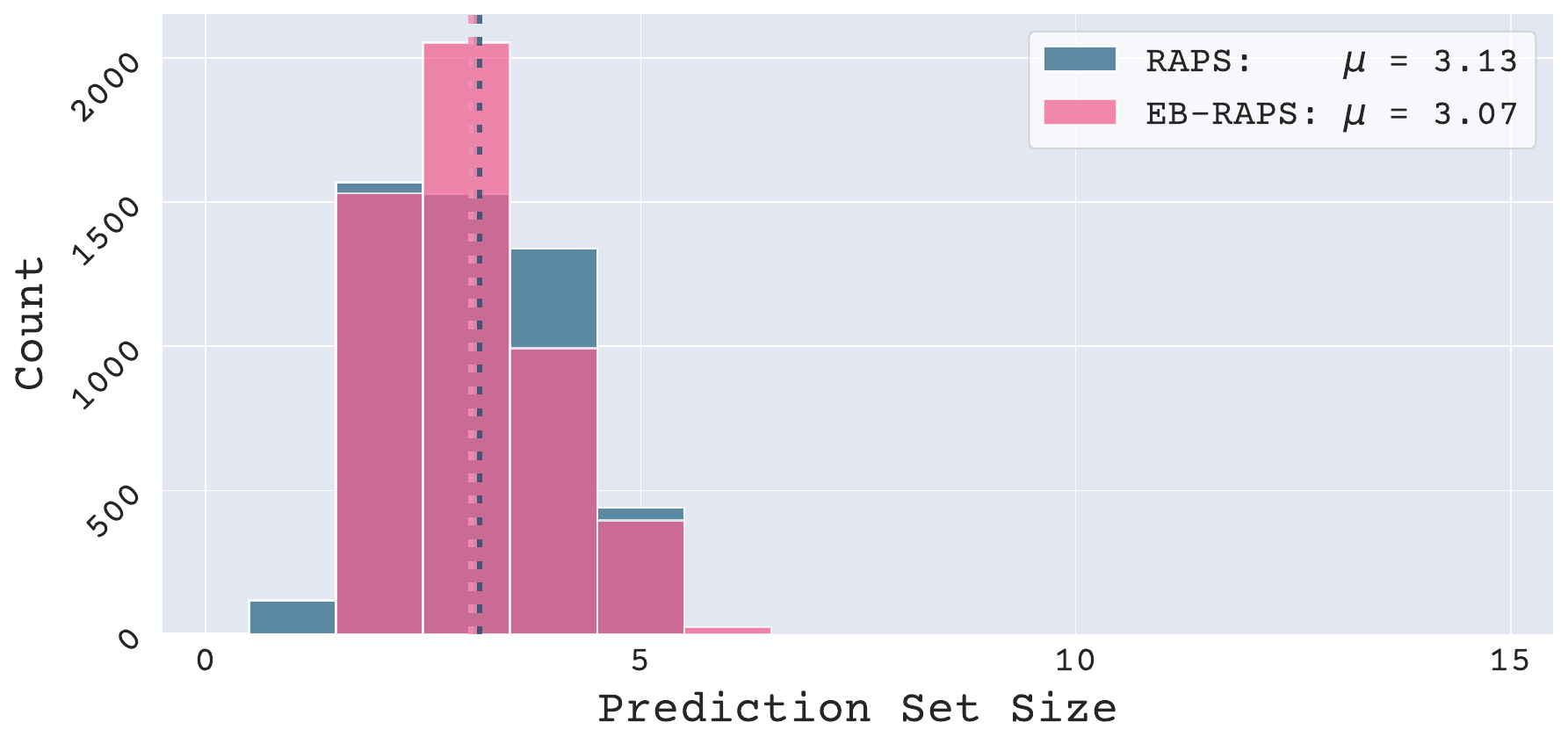}
        
        {\scriptsize (a) in-distribution evaluation}
    \end{minipage}
    \hfill
    \begin{minipage}[t]{0.495\textwidth}
        \centering
        \includegraphics[width=\linewidth]{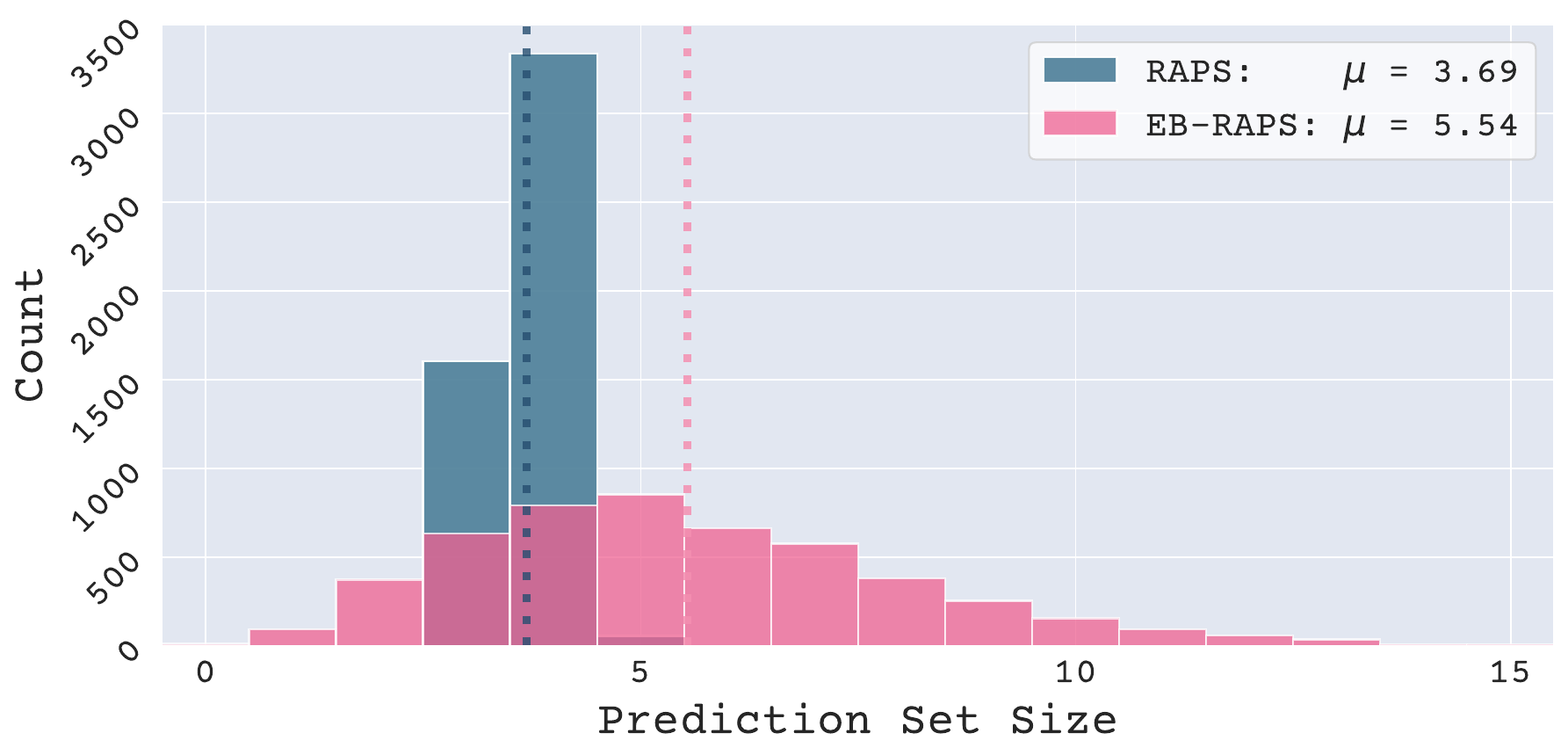}

        {\scriptsize (b) out-of-distribution evaluation}
    \end{minipage}

\caption{
    Prediction set size distributions for the \saps~score and its energy-based variant with $\alpha=0.05$, on (a) in-distribution CIFAR-100 and (b) out-of-distribution Places365 data. The energy-based variant produces larger prediction sets on OOD data. Here, $\mu$ represents the overall set size.
}
    \label{fig:id_vs_ood}
\end{figure}

The results demonstrate an alignment with our desiderata. As shown in Table~\ref{tab:set_properties_comparison_resized}, energy-based scores produce larger average sets compared to their base counterparts. Figure~\ref{fig:id_vs_ood} provides a visual illustration of this adaptive behavior, comparing the \raps~score with its energy-based variant. The Energy-based \raps~produces smaller prediction sets on ID data and larger prediction sets for OOD samples. This response, shows improvement towards Desideratum~\ref{des:adaptive_response}, compared to baseline \raps.

\begin{table}[ht!]
\centering
\caption{
Comparison of average prediction set sizes for a ResNet-56 model trained on CIFAR-100. The model is evaluated on both in-distribution (CIFAR-100) and out-of-distribution (Places365) data. Energy-based variants demonstrate adaptiveness to the distributional shift by maintaining small sets on ID data while producing significantly larger sets for OOD inputs. \textbf{Bold} values indicate the preferred result: the smallest average set size for ID (efficiency) and the largest for OOD (uncertainty awareness).
}
\label{tab:set_properties_comparison_resized}
\renewcommand{\arraystretch}{1.4} 
\resizebox{0.8\linewidth}{!}{%
\begin{tabular}{c l |cc|cc}
\toprule
& & \multicolumn{2}{c}{\textbf{$\bm{\alpha=0.1}$}} & \multicolumn{2}{c}{\textbf{$\bm{\alpha=0.05}$}} \\
\cmidrule(lr){3-4} \cmidrule(lr){5-6}
\textbf{Method} & & \stackheader{Set Size ID}{(in distribution)} & \stackheader{Set Size OOD}{(out of distribution)} & \stackheader{Set Size ID}{(in distribution)} & \stackheader{Set Size OOD}{(out of distribution)} \\
\midrule
& w/o \energy & \resultpm{3.17}{0.09} & \resultpm{6.18}{0.25} & \resultpm{6.91}{0.24} & \resultpm{14.91}{0.81} \\
\multirow{-2}{*}{\rotatebox[origin=c]{90}{\apsbf}} & \cellcolor{lightgray}w/ \energy & \cellcolor{lightgray}\bresultpm{3.16}{0.08} & \cellcolor{lightgray}\bresultpm{86.76}{0.94} & \cellcolor{lightgray}\bresultpm{6.49}{0.24} & \cellcolor{lightgray}\bresultpm{93.40}{0.53} \\
\addlinespace[3pt]
\cdashline{3-6}
\addlinespace[3pt]
& w/o \energy & \bresultpm{3.13}{0.07} & \resultpm{3.70}{0.04} & \resultpm{8.17}{0.47} & \resultpm{8.95}{0.47} \\
\multirow{-2}{*}{\rotatebox[origin=c]{90}{\rapsbf}} & \cellcolor{lightgray}w/ \energy & \cellcolor{lightgray}\bresultpm{3.13}{0.08} & \cellcolor{lightgray}\bresultpm{5.53}{0.07} & \cellcolor{lightgray}\bresultpm{6.18}{0.25} & \cellcolor{lightgray}\bresultpm{9.05}{0.49} \\
\addlinespace[3pt]
\cdashline{3-6}
\addlinespace[3pt]
& w/o \energy & \bresultpm{2.87}{0.09} & \resultpm{3.78}{0.05} & \resultpm{7.47}{0.43} & \resultpm{8.82}{0.46} \\
\multirow{-2}{*}{\rotatebox[origin=c]{90}{\sapsbf}} & \cellcolor{lightgray}w/ \energy & \cellcolor{lightgray}\bresultpm{2.87}{0.11} & \cellcolor{lightgray}\bresultpm{5.55}{0.10} & \cellcolor{lightgray}\bresultpm{5.94}{0.16} & \cellcolor{lightgray}\bresultpm{9.53}{0.18} \\
\bottomrule
\end{tabular}%
}
\end{table}

%% file: sections/conclusion.tex
This paper demonstrates that the reliability of conformal classifiers is enhanced by moving beyond softmax probabilities to leverage information about model uncertainty from the logit space. Our proposed energy-based framework adjusts standard nonconformity scores on a per-sample basis, leveraging this principled measure of model certainty to make each score sensitive to the model's confidence in that specific input. Our evaluations on common nonconformity scores, across multiple datasets and architectures, confirm that our approach yields prediction sets with improved efficiency and adaptiveness, all while preserving the theoretical coverage guarantee.

\section{Reproducibility Statement}
The empirical results presented in this paper are fully reproducible. Our implementation, based on PyTorch and using the TorchCP library, are available at {\small\url{https://github.com/navidattar/Energy-Based-Conformal-Classification}}. Detailed descriptions of hyperparameters, and environment specification for running experiments, are provided in Appendix \ref{sec:reproducibility}.

%% file: sections/appendix.tex
\appendix

\section{Notation}
\label{apd:notation}

\begin{table}[!ht]
    \centering
    \caption{Notation used in this work.}
    \renewcommand{\arraystretch}{1.4}
    \begin{tabular}{ll}
        \textbf{Symbol}               & \textbf{Meaning}                                             \\ \toprule
        $x\!\in\!\mathbb{R}^D$& Input (feature) vector                               \\
        $K$ & Total number of classes
        \\
        $y\!\in\!\{1,\dots,K\}$ & Class label                                         \\
        $\mathbf{f}(x)\!=\!(f_1,\dots,f_K)$ & Pre-softmax logit vector produced by the classifier \\
        $\hat{\pi}(y\mid x)$  & Model's softmax probability for class $y$, Equation~\ref{eq:softmax_def}     \\
        $f_{\max}(x)$         & $\max_{k} f_k(x)$                                    \\
        $S(x,y)$              & General nonconformity score                          \\
        $T$                   & Temperature used in the calibrated softmax           \\
        $\tau$                & Temperature used in the energy calculation           \\
        $\alpha$              & Desired miscoverage level / target error rate        \\
        $\hat{q}_{1-\alpha}$  & Quantile threshold for prediction set construction   \\
        $E(x,y)$              & Joint energy, $E(x,y)=-f_y(x)$                    \\
        $F(x)$                & Helmholtz free energy score, Equation~\ref{eq:free_energy}              \\
        $\beta$               & Softplus sharpness parameter                         \\
        $D(x,Y_{\text{true}})$& Sample difficulty measure                            \\
        $o_x(y)$              & Rank of label $y$ in the model's predicted class-probability ordering \\
        $\mathbb{H}(x)$       & Shannon entropy of $\hat{\pi}(y\mid x)$             \\
        $\cD=\{(x_i,y_i)\}_{i=1}^N$ & Dataset with $N$ samples, inputs $x_i$ and labels $y_i$ \\ 
        $\delta(x)$ & Geometric distance of $x$ from the decision boundary \\
        \bottomrule
    \end{tabular}
    \label{tab:notation}
\end{table}

\section{Related Works}\label{sec:related_works}
\input{sections/related_works.tex}

\section{On the Limitations of Softmax Outputs for Measuring Model Uncertainty}
\label{sec:softmax_limits_compilation}
Our motivation for developing energy-based nonconformity scores, as detailed throughout Section~\ref{sec:motivation_and_method}, is the inadequacy of softmax outputs for reliably quantifying model uncertainty. To provide a broader context, we reproduce a selection of established criticisms originally compiled in the appendix of~\cite{pearce2021understanding}, along with a few more recent perspectives published thereafter.

\begin{itemize}
    \item \emph{``[The softmax output] is often erroneously interpreted as model confidence.''}~\citep{gal2015dropout}
    \item \emph{``Deterministic models can capture aleatoric uncertainty but cannot capture epistemic uncertainty.''}~\citep{gal2017deep}
    \item \emph{``NNs \ldots until recently have been unable to provide measures of uncertainty in their predictions.''}~\citep{malinin2018predictive}
    \item \emph{``When asked to predict on a data point unlike the training data, the NN should increase its uncertainty. There is no mechanism built into standard NNs to do this \ldots standard NNs cannot estimate epistemic uncertainty.''}~\citep{pearce2020uncertainty}
    \item \emph{``NNs are poor at quantifying predictive uncertainty.''}~\citep{lakshminarayanan2017simple}
    \item \emph{``Deep neural networks with the softmax classifier are known to produce highly overconfident posterior distributions even for such abnormal samples.''}~\citep{lee2018training}
    \item \emph{``The output of the [softmax] classifier cannot identify these [far from the training data] inputs as out-of-distribution.''}~\citep{hein2019relu}
    \item \emph{``The only uncertainty that can reliably be captured by looking at the softmax distribution is aleatoric uncertainty.''}~\citep{van2020simple}
    \item \emph{``Softmax entropy is inherently inappropriate to capture epistemic uncertainty.''}~\citep{mukhoti2021deep}
    \item \emph{``For [softmax] classifiers \ldots misclassification will occur with high confidence if the unknown is far from any known data.''}~\citep{boult2019learning}
    \item \emph{``\ldots softmax output only reflects the total predictive uncertainty instead of the model uncertainty, leading to false confidence under distribution shift.''}~\citep{wang2024uncertainty}
    \item \emph{``\ldots the raw softmax output is neither very reliable \ldots nor can it represent all sources of uncertainty''}~\citep{gawlikowski2023survey}
    \item \emph{``Furthermore, the softmax output cannot be associated with model uncertainty.''}~\citep{gawlikowski2023survey}
    \item \emph{``\ldots the softmax output is often erroneously interpreted as model confidence. In reality, a model can be uncertain in its predictions even with a high softmax output.''}~\citep{mobiny2021dropconnect}
\end{itemize}

Additional commentary and remarks also reinforce this view:
\begin{itemize}
    \item \emph{``Softmax is not telling you anything about \ldots model uncertainty.''} --- Elise Jennings, Training Program on Extreme-Scale Computing (2019)\footnote{\url{https://youtu.be/Puc_ujh5QZs?t=1323}}
    \item \emph{``The [softmax] network has no way of telling you `I'm completely uncertain about the outcome and don't rely on my prediction'.''} --- Florian Wilhelm, PyData Berlin (2019)\footnote{\url{https://youtu.be/LCDIqL-8bHs?t=262}}
    \item \emph{``Just adding a softmax activation does not magically turn outputs into probabilities.''} --- Tucker Kirven, Neural Network Prediction Scores are not Probabilities (2020)\footnote{\url{https://jtuckerk.github.io/prediction_probabilities.html}}
\end{itemize}

\newpage
\section{Energy-Based Models}\label{sec:ebm}
An Energy-Based Model (EBM) defines a probability distribution over an input space $\mathbb{R}^D$ through an energy function $E_\theta: \mathbb{R}^D \to \mathbb{R}$, which is typically parameterized by a neural network with parameters $\theta$. For any input vector $x \in \mathbb{R}^D$, the probability density is given by the Boltzmann distribution:
\begin{equation}
\label{eq:EBM_def}
p_\theta(x) = \frac{\exp(-E_\theta(x))}{Z_\theta},
\end{equation}
where $Z_\theta = \int_{x'} \exp(-E_\theta(x')) dx'$ is the partition function. This normalization constant is a key challenge in EBMs, as its computation involves integrating over the entire high-dimensional input space, which is generally intractable.

This framework can be extended to model a joint distribution over inputs and class labels, $p(x, y)$. A standard discriminative classifier, which produces a logit vector $\mathbf{f}(x)$, can be re-interpreted as an EBM by defining a joint energy function $E(x, y)$. Following common practice \cite{lecun2006tutorial, grathwohl2019your}, we define the joint energy as the negative logit corresponding to class $y$:
\begin{equation}
E(x, y) = -f_y(x).
\end{equation}
From this joint model, the conditional probability $p(y \mid x)$ can be derived as:
\begin{equation}
p(y \mid x) = \frac{p(x, y)}{p(x)} = \frac{p(x, y)}{\sum_{k=1}^K p(x, k)} = \frac{\exp(-E(x, y))}{\sum_{k=1}^K \exp(-E(x, k))} = \frac{\exp(f_y(x))}{\sum_{k=1}^K \exp(f_k(x))},
\end{equation}
which is precisely the standard softmax probability $\hat{\pi}(y \mid x)$. The marginal probability $p(x)$ can then be associated with a \emph{``free energy''} function $E(x) = -\frac{1}{\tau} \log \sum_{k=1}^K \exp(f_k(x) / \tau)$, where $\tau$ is a temperature parameter.

Training EBMs often proceeds via Maximum Likelihood Estimation (MLE), which aims to shape the energy function $E_\theta(x)$ such that it assigns low energy to data points from the true distribution and high energy elsewhere. The objective is to maximize the log-likelihood of the observed data $\mathcal{D}$:
\begin{equation}
\label{eq:mle_objective}
\argmax_\theta \mathbb{E}_{x \sim p_{\text{data}}}[\log p_\theta(x)].
\end{equation}
The gradient of the log-likelihood with respect to the parameters $\theta$ is given by:
\begin{align}
\nabla_\theta \log p_\theta(x) &= \nabla_\theta (-E_\theta(x) - \log Z_\theta) \\
&= -\nabla_\theta E_\theta(x) - \frac{1}{Z_\theta} \nabla_\theta \int_{x'} \exp(-E_\theta(x')) dx' \\
&= -\nabla_\theta E_\theta(x) - \int_{x'} \frac{\exp(-E_\theta(x'))}{Z_\theta} (-\nabla_\theta E_\theta(x')) dx' \\
&= -\nabla_\theta E_\theta(x) + \mathbb{E}_{x' \sim p_\theta}[\nabla_\theta E_\theta(x')].
\label{eq:gradient_decomposition}
\end{align}
Remarkably, this gradient can be computed without explicitly evaluating the intractable partition function $Z_\theta$. Updating the parameters via stochastic gradient ascent on the log-likelihood is equivalent to descending on the following loss function:
\begin{equation}
\label{eq:mle_loss}
\mathcal{L}_{\text{MLE}} = \mathbb{E}_{x \sim p_{\text{data}}}[E_\theta(x)] - \mathbb{E}_{x' \sim p_\theta}[E_\theta(x')].
\end{equation}
This objective can be intuitively understood as a force that ``pulls down'' the energy of ``positive'' samples drawn from the data distribution ($p_{\text{data}}$) while ``pushing up'' the energy of ``negative'' samples synthesized from the model's current distribution ($p_\theta$).

To operationalize this training procedure, we must be able to draw samples $x'$ from the model distribution $p_\theta(x)$. Since direct sampling is infeasible, this is typically approximated using Markov Chain Monte Carlo (MCMC) methods. A prevalent choice is Stochastic Gradient Langevin Dynamics (SGLD) \cite{welling2011bayesian}, which iteratively refines an initial sample $x_0$ (e.g., drawn from a simple noise distribution or a buffer of previous samples) according to the rule:
\begin{equation}
\label{eq:SGLD}
x_{t+1} = x_t - \alpha_t \nabla_x E_\theta(x_t) + \sqrt{\eta_t} \epsilon, \quad \text{where} \quad \epsilon \sim \mathcal{N}(0, I),
\end{equation}
where $\alpha_t$ is the step size and $\eta_t$ controls the scale of the injected Gaussian noise. After a sufficient number of steps, the resulting sample $x_T$ is treated as an approximate sample from $p_\theta(x)$. Different strategies for initializing and running the MCMC chain lead to various training algorithms, such as Contrastive Divergence (CD) \citep{hinton2002training}, which re-initializes the chain from data points at each step, and Persistent Contrastive Divergence (PCD) \citep{tieleman2008training}, which maintains a persistent chain across training iterations to obtain higher-quality samples.

\section{Experimental Preliminaries}
\label{sec:experimental_setup}

This section details the experimental design, including the conformal prediction framework, the nonconformity scores used, and evaluation metrics used to validate our proposed Energy-based nonconformity scores.

\subsection{Conformal Prediction Framework}

We employ the standard Split Conformal Prediction (CP) framework in our experiments. A base model is trained on a proper training set, and its outputs on a held-out calibration set are used to compute non-conformity scores and determine the quantile threshold $\hat{q}$ required to form prediction sets. The procedure is formally outlined in Algorithm \ref{alg:split_conformal}.

\begin{algorithm}[ht!]
\caption{Split Conformal Prediction}
\label{alg:split_conformal}
\setlength{\baselineskip}{1.3\baselineskip}
\begin{algorithmic}[1]
\State \textbf{Input:} Dataset $\mathcal{D}$, desired error rate $\alpha \in (0,1)$, non-conformity score function $S(x,y)$.
\State Partition $\mathcal{D}$ into a training set $\mathcal{D}_{\text{train}}$ and a calibration set $\mathcal{D}_{\text{cal}}$, such that $\mathcal{D}_{\text{train}} \cap \mathcal{D}_{\text{cal}} = \emptyset$. Let $n = |\mathcal{D}_{\text{cal}}|$.
\State Train the classifier on $\mathcal{D}_{\text{train}}$ to learn the mapping $x \mapsto \mathbf{f}(x)$.
\State For each example $(x_i, y_i) \in \mathcal{D}_{\text{cal}}$, compute the non-conformity score $s_i = S(x_i, y_i)$.
\State Calculate the quantile threshold $\hat{q}$ from the set of calibration scores $\{s_1, \dots, s_n\}$. Specifically, $\hat{q}$ is the $\frac{\lceil(n+1)(1-\alpha)\rceil}{n}$-th empirical quantile of these scores.
\State \textbf{Output:} For a new input $x_{\text{new}}$, the prediction set is constructed as:
\[
    C(x_{\text{new}}) = \{y \in \{1, \dots, K\} : S(x_{\text{new}}, y) \leq \hat{q}\}
\]
\end{algorithmic}
\end{algorithm}

\subsection{Nonconformity Scores for Deep Classifiers}
\label{sec:nonconformity_scores}
The non-conformity score function $S(x,y)$ measures how poorly the label $y$ fits the input $x$ according to the trained model. The prediction set $C(x_{\text{new}})$ is then formed by including all labels whose non-conformity scores do not exceed the calibrated threshold $\hat{q}$. A key property of this procedure is that the resulting set is guaranteed to contain the true label with a probability of at least $1-\alpha$, assuming the test and calibration data points are exchangeable. Exchangeability is a statistical assumption that the joint distribution of the data is invariant to permutation, making it a suitable assumption for scenarios like simple random sampling.

We evaluate a range of established non-conformity scores, each based on a different principle for measuring how much a model’s prediction disagrees with a given label, as summarized in Table~\ref{tab:ncs}. We compare these established baselines against their Energy-based counterparts, as defined in Equation~\ref{eq:energy_based_nonconformity}. 

\begin{table}[ht]
    \centering
    \caption{Nonconformity scores considered in this work.  
             All scores are functions of
             $\hat{\pi}(y\mid x)$ where $u\!\sim\!U[0,1]$; 
             The function $o_x(y)$ returns the rank position of label $y$ among all possible labels, ordered by the model’s predicted probabilities (with rank 1 being the most likely);
             $\hat{\pi}_{\max}(x)=\max_{k}\hat{\pi}(k\mid x)$;  
             $(\cdot)^+$ denotes the positive part;  
             $\lambda$ and $k_{\text{reg}}$ are hyperparameters. }
    \resizebox{\textwidth}{!}{
    \begin{tabular}{c@{\hspace{2mm}}c@{\hspace{4mm}}ll}
        & & \textbf{Method} & \textbf{Nonconformity Score}
         \\ \toprule
        & & \lac~/ \thr ~\citep{sadinle2019least} & $S_{\lac}(x,y)\;=\;1-\hat{\pi}(y\mid x) \equiv -\hat{\pi}(y\mid x)$ \\[4pt]
        \multirow{3}{*}{\rotatebox{90}{Adaptive Scores}} & 
        \raisebox{-4pt}{\ldelim\{{5}{3mm}} & 
        \aps~\citep{romano2020classification} & 
        $S_{\aps}(x,y)\;=\;
          {\sum_{k=1}^{K}\hat{\pi}(k\mid x)\,
              \mathbb{I}\bigl\{\hat{\pi}(k\mid x)>\hat{\pi}(y\mid x)\bigr\}}
          \;+\;
          u\,\cdot\hat{\pi}(y\mid x)$ \\[10pt]
        & & \raps~\citep{angelopoulos2020uncertainty} & 
        $S_{\raps}(x,y)\;=\;
          S_{\aps}(x, y)
          \;+\;
          \lambda\,
          \bigl(o_x(y)-k_{\text{reg}}\bigr)^{+}$ \\[6pt]
        
        & & \saps~\citep{huang2023conformal} &
        $S_{\saps}(x,y)\;=\;
        \begin{cases}
            u\,\cdot\hat{\pi}_{\max}(x), &
            o_x(y)=1,\\[4pt]
            \hat{\pi}_{\max}(x)
            +\bigl(o_x(y)-2+u\bigr)\,\lambda , &
            \text{otherwise},
        \end{cases}$\\
    \end{tabular}
    }
    \label{tab:ncs}
\end{table}

Below, we briefly describe how each method works:

\textbf{Least Ambiguous Class (LAC/THR)} \citep{sadinle2019least} is one of the simplest and earliest scores. Its non-conformity is defined as $S_{\lac}(x,y) = 1 - \hat{\pi}(y \mid x)$. The score is thus inversely proportional to the model's confidence; a high probability for the true class yields a low non-conformity score. It is worth noting that using the negative probability, $S(x,y) = -\hat{\pi}(y \mid x)$, is mathematically equivalent for constructing the prediction set, as the ``$+1$'' in the original formula merely shifts the range from $[-1,0]$ to $[0,1]$ for non-negative interpretability without altering the relative ordering, where higher scores indicate greater nonconformity (less conformity between label and sample). This is because the conformal procedure relies on the rank-ordering of scores to determine the quantile $\hat{q}$, and the transformation from $1-\hat{\pi}$ to $-\hat{\pi}$ is monotonic, preserving the rank order. For the Energy-based counterpart of this score, we use $-\pi$ (without the bias term). This method provably yields the smallest expected prediction sets compared to other methods proposed after this model, while preserving the marginal coverage, assuming the predicted probabilities are correct. However, this score is non-adaptive, meaning it tends to produce prediction sets of similar size regardless of the sample's intrinsic difficulty, which opened the room for adaptive nonconformity scores and their variants to be proposed later.

\textbf{Adaptive Prediction Sets (APS)} \citep{romano2020classification} introduced the concept of adaptiveness to conformal prediction. The score $S_{\aps}(x,y)$ is the cumulative probability mass of all classes deemed more likely than class $y$. Mathematically, this is the sum of softmax probabilities for all labels $k$ whose probability $\hat{\pi}(k \mid x)$ is greater than $\hat{\pi}(y \mid x)$, plus a randomized term to handle ties. This design has a crucial effect: for ``easy'' examples where the model is confident (i.e., $\hat{\pi}_{\max}(x)$ is high and Entropy is low), the scores for incorrect labels grow rapidly, leading to small prediction sets. Conversely, for ``hard'' examples where the model is uncertain (a flatter softmax distribution), the scores grow slowly, resulting in larger, more inclusive sets that reflect this uncertainty.

\textbf{Regularized Adaptive Prediction Sets (RAPS)} \citep{angelopoulos2020uncertainty} builds directly upon \aps~by adding a regularization term. Its score is $S_{\raps}(x,y) = S_{\aps}(x, y) + \lambda (o_x(y)-k_{\text{reg}})^{+}$, where $o_x(y)$ is the rank of label $y$'s probability. This term penalizes the inclusion of labels with a low rank (i.e., large $o_x(y)$), effectively preventing the prediction sets from becoming excessively large, especially for uncertain inputs. The hyperparameters $k_{\text{reg}}$ and $\lambda$ control the onset and strength of this size-regularizing penalty. 

\textbf{Sorted Adaptive Prediction Sets (SAPS)} \citep{huang2023conformal} is a more recent refinement that aims to mitigate the effects of probability miscalibration in the softmax tail. It treats the top-ranked class differently from all others. For labels not ranked first, the score is based on the maximum probability $\hat{\pi}_{\max}(x)$ plus a penalty that increases linearly with the label's rank, weighted by a hyperparameter $\lambda$. This approach avoids summing many small, potentially noisy tail probabilities (as \aps~does) and instead relies on the more stable top probability and the rank ordering. 

\subsection{Evaluation Metrics}
\label{sec:evaluation_metrics}
We assess the performance of all methods using a target miscoverage level $\alpha \in \{0.01, 0.025, 0.05, 0.1\}$. Let $\{(x_i, y_i)\}_{i=1}^{n_{\text{test}}}$ be the test set. The primary metrics are:

\begin{itemize}
    \item \textbf{Empirical Coverage:} The fraction of test samples where the true label is included in the prediction set.
    \begin{equation}
        \text{Coverage} = \frac{1}{n_{\text{test}}} \sum_{i=1}^{n_{\text{test}}} \mathbb{I}[y_i \in \mathcal{C}(x_i)]
    \end{equation}    

    \item \textbf{Macro-Coverage (MacroCov):} While empirical coverage reflects marginal reliability over the entire test distribution, MacroCov measures the average per-class coverage, giving each class equal weight regardless of its frequency. Let $\hat{c}_y = \frac{1}{|I_y|} \sum_{i \in I_y} \mathbb{I}[y_i \in \mathcal{C}(x_i)]$ denote the empirical coverage for class $y$, where $I_y = \{i : y_i = y\}$. Then,
    \begin{equation}
        \text{MacroCov} = \frac{1}{K} \sum_{y=1}^{K} \hat{c}_y.
    \end{equation}
    This metric is particularly informative in imbalanced or long-tailed settings, since it prevents head classes from dominating the overall coverage and highlights systematic under-coverage of rare classes.

    \item \textbf{Average Prediction Set Size:} The mean size of the prediction sets over the test data.
    \begin{equation}
        \text{Size} = \frac{1}{n_{\text{test}}}\sum_{i=1}^{n_{\text{test}}} |\mathcal{C}(x_i)|
    \end{equation}
\end{itemize}

To assess class-conditional reliability, we report the following metrics to verify that they are maintained while reducing the average prediction set size:

\begin{itemize}
    \item \textbf{Average Class Coverage Gap (CovGap):} This metric measures the average absolute deviation of per-class coverage from the target coverage level $1-\alpha$~\citep{ding2024class}. Let $I_y = \{i : y_i = y\}$ be the indices of test samples for class $y$. The empirical coverage for class $y$ is $\hat{c}_y = \frac{1}{|I_y|} \sum_{i \in I_y} \mathbb{I}[y_i \in \mathcal{C}(x_i)]$. The gap is then:
    \begin{equation}
        \text{CovGap} = \frac{1}{K}\sum_{y=1}^{K} |\hat{c}_y - (1-\alpha)|
    \end{equation}
    We report this value as a percentage.
\end{itemize}

To measure the adaptiveness of the prediction sets, we use:
\begin{itemize}
    \item \textbf{Size-Stratified Coverage Violation (SSCV):} As introduced by \citet{angelopoulos2020uncertainty}, SSCV quantifies whether coverage is maintained across different prediction set sizes. We define disjoint set-size strata $\{S_j\}_{j=1}^{s}$ and group test indices into bins $\mathcal{J}_j = \{ i : |\mathcal{C}(x_i)| \in S_j \}$. The SSCV is the maximum deviation from the target coverage across all bins:
    \begin{equation}
        \text{SSCV} = \sup_{j} \left| \frac{|\{i \in \mathcal{J}_j : y_i \in \mathcal{C}(x_i)\}|}{|\mathcal{J}_j|} - (1-\alpha) \right|
    \end{equation}
\end{itemize}

To probe feature-conditional reliability beyond class-conditional metrics, we report the worst-slab coverage \citep{cauchois2021knowing}:
\begin{itemize}
    \item \textbf{Worst-Slab Coverage (WSC):}  Let $v \in \mathbb{R}^d$ be a unit direction and define a \emph{slab} along $v$ as
    \begin{equation}
        \mathcal{S}(v,a,b) = \{x \in \mathbb{R}^d : a \le v^\top x \le b\},
    \end{equation}
    for thresholds $a<b$. For a mass parameter $\delta \in (0,1]$, the empirical WSC along direction $v$ is
    \begin{equation}
        \mathrm{WSC}_n(v;\delta) \;=\; \inf_{\substack{a<b:\\ \hat{P}_n(X \in \mathcal{S}(v,a,b)) \ge \delta}}
        \hat{P}_n\!\left( y \in \mathcal{C}(x) \,\middle|\, x \in \mathcal{S}(v,a,b) \right),
    \end{equation}
    where $\hat{P}_n$ denotes the empirical distribution over the test set. We estimate overall WSC by scanning a set of directions $\mathcal{V}$ (e.g., random unit vectors) and taking the worst case:
    \begin{equation}
        \mathrm{WSC}_n(\delta) \;=\; \inf_{v \in \mathcal{V}} \mathrm{WSC}_n(v;\delta).
    \end{equation}
    Larger values indicate better approximate conditional coverage, while low WSC reveals a feature-space region of mass at least $\delta$ where the prediction sets under-cover.
\end{itemize}

\vspace{-5pt}
\section{Reproducibility Details}
\label{sec:reproducibility}
\vspace{-4pt}

To ensure reproducibility, we detail our experimental setup, key hyperparameters, and implementation. All source code will be made publicly available. All experiments are implemented based on the \textbf{TorchCP} library~\citep{huang2024torchcp}, which provides a robust framework for conformal prediction on deep learning models. The pre-trained backbone models are sourced from \textbf{TorchVision}~\citep{torchvision2016}.
\vspace{-3pt}
\subsection{Computational Environment}
\begin{itemize}
    \item \textbf{Operating System:} Linux kernel 5.14.0-427.42.1.el9\_4.x86\_64.
    \item \textbf{GPU Hardware:} NVIDIA H100 80GB HBM3.
    \item \textbf{CPU Hardware:} 8 cores.
    \item \textbf{System Memory:} 32 GB RAM.
    \item \textbf{NVIDIA Driver Version:} 550.144.03.
    \item \textbf{CUDA Version:} 12.2.
    \item \textbf{Python Version:} 3.9.21.
    \item \textbf{PyTorch Version:} 2.0.0+ (with CUDA support).
\end{itemize}
\vspace{-4pt}

\subsection{Datasets and Models}
\vspace{-4pt}

Our experiments are conducted on several standard image classification benchmarks: \textbf{CIFAR-100}~\citep{krizhevsky2009learning}, \textbf{ImageNet-Val}~\citep{deng2009imagenet}, and \textbf{Places365}~\citep{zhou2017places}. These benchmarks are chosen because they contain a large number of classes, which makes performance differences between methods more evident. We use pre-trained ResNet~\citep{he2016deep}, VGG~\citep{simonyan2014very}, ViT~\citep{dosovitskiy2020image}, Swin Transformer~\citep{liu2021swin}, EfficientNet~\citep{tan2019efficientnet}, and ShuffleNet~\citep{zhang2018shufflenet} architectures from TorchVision as our base classifiers. To evaluate performance under distributional shift, we use a model trained on CIFAR-100 and test its out-of-distribution (OOD) performance on the Places365 dataset.

To investigate the methods' robustness to class imbalance, we create four long-tailed variants of CIFAR-100, denoted as \textbf{CIFAR-100-LT}. The number of training samples for class $j \in \{1, \dots, 100\}$, denoted $n_j$, is set to be proportional to $\exp(-\lambda \cdot j)$. The imbalance factor $\lambda \in \{0.005, 0.01, 0.02, 0.03\}$ controls the severity of the class imbalance, with larger values of $\lambda$ creating a more pronounced long-tail distribution. For evaluation, we have considered two scenarios: (i) the calibration and test sets are balanced while the training data remain imbalanced, and (ii) the calibration and test sets follow the same imbalance ratios as the training data.
\vspace{-4pt}
\subsection{Hyperparameter Settings}
\vspace{-4pt}

In our experiments, we set $k_{\text{reg}} = 2$ and $\lambda = 0.2$ for \raps~and we use $\lambda = 0.2$ for \saps. The softmax probabilities $\hat{\pi}(y|x)$ used by all scores are computed with a temperature parameter $T$, while the free energy $F_\tau(x)$ is calculated with its own temperature $\tau$. Crucially, to ensure a fair comparison, the softmax temperature $T$ was tuned for all baseline and proposed methods to optimize their performance. We tune the temperature energy parameter $\tau$ with $\ln(\tau) \in [-9, 9]$ and the calibration temperature $T \in \{0.01, \dots, 25\}$.
\begin{remark}
A critical consideration in our proposed modulation is the positivity of the reweighting factor. When reweighting a base nonconformity score, it is critical the scaling factor must be strictly positive. A negative factor would reverse the score's ordering, invalidating the fundamental assumption of conformal prediction that lower scores indicate higher conformity. While our uncertainty signal is the negative free energy, $-F(x)$, it is not guaranteed to be positive. Mathematically, $-F(x)$ becomes negative if $\sum_{k=1}^{K} \exp(f_k(x)/\tau) < 1$, a condition which implies that the maximum logit $f_{\max}(x)$ is negative (a necessary, though not always sufficient, condition). This scenario signifies extreme model uncertainty, where the model lacks evidence for any class and typically occurs only for far out-of-distribution inputs.

Although we empirically observe that $-F(x)$ is positive for nearly all in-distribution and OOD samples in our experiments, to ensure the theoretical robustness of our method, we scale the base scores by the softplus of the negative free energy. The hyperparameter $\beta$ in the softplus function, $\frac{1}{\beta}\log(1+e^{\beta z})$, controls its approximation to the Rectified Linear Unit (ReLU) function. By choosing a large value for $\beta$, the scaling factor $\text{softplus}(-F(x))$ behaves almost identically to $-F(x)$ when it is positive, but smoothly transitions to a value near zero in the rare cases where $-F(x) < 0$. This behavior is highly beneficial for conformal prediction. For such uncertain inputs, the near-zero scaling factor drives the modulated scores for all labels toward zero, causing most or all of them to fall below the conformal quantile $\hat{q}$. This correctly produces an extensively large prediction set, signaling the model's high epistemic uncertainty. Throughout our experiments, we set $\beta=1$. See Appendix~\ref{app:beta_ablation} for a detailed ablation study on~$\beta$.
\end{remark}

\newpage
\section{Proofs}
\subsection{Theoretical Validity of Energy-Based Scores}
\label{sec:exchangeability}

In conformal prediction, the validity of the coverage guarantee relies on the exchangeability of the nonconformity scores. Specifically, for a set of exchangeable data points, their corresponding nonconformity scores must also be exchangeable. This property ensures that the prediction sets contain the true label with the desired probability. We now establish that our proposed Energy-modulated scores, as defined in Equation~\ref{eq:energy_based_nonconformity}, satisfy this critical property under standard assumptions. This ensures that using these modulated scores yields valid prediction sets with the guaranteed marginal coverage central to conformal prediction theory \citep{vovk2005algorithmic}.

\begin{theorem}[Exchangeability of Energy-Based Nonconformity Scores]
\label{thm:exch_modulated}
Let $(X_i,Y_i)_{i=1}^{n+1}$ be an exchangeable sequence of random variables drawn from a distribution $P_{XY}$. Assume that:
\begin{enumerate}
    \item[(i)] The base nonconformity score $S(x,y)$ is a deterministic function of its arguments.
    \item[(ii)] The free energy $F(x)$ is a deterministic function of $x$ as defined in Equation~\ref{eq:free_energy}.
\end{enumerate}
Define the modulated score for $i=1,\dots,n+1$ as:
\begin{equation}
    S'_i = S_{\text{Energy-based}}(X_i,Y_i) := S(X_i, Y_i) \cdot \frac{1}{\beta}\log\left(1 + e^{-\beta F(X_i)}\right).
\end{equation}
Then the sequence of modulated scores $(S'_1,\dots,S'_{n+1})$ is exchangeable.
\end{theorem}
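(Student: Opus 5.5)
The plan is to show that the modulated score is a single fixed measurable map applied coordinatewise, and then use the standard fact that exchangeability is preserved under such maps. Define
\[
h(x,y) := S(x,y)\cdot \tfrac{1}{\beta}\log\bigl(1+e^{-\beta F(x)}\bigr),
\]
so that $S'_i = h(X_i,Y_i)$ for every $i=1,\dots,n+1$. The key observation is that $h$ is the same function for every index. The logits $\mathbf{f}$ come from a classifier trained only on $\mathcal{D}_{\text{train}}$, which is disjoint from the calibration and test points. Consequently $F$, built from $\mathbf{f}$ via \eqref{eq:free_energy} with fixed $\tau$, is deterministic by assumption (ii). Likewise $S$ is deterministic by assumption (i), and $\beta>0$ is fixed.

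The main step is to verify that $h$ is measurable. The logits $\mathbf{f}$ are continuous, being a neural network, or at least measurable. Log-sum-exp is continuous, so $F$ is measurable. Softplus is continuous, so the scaling factor $G(x)=\tfrac{1}{\beta}\log(1+e^{-\beta F(x)})$ is measurable and in fact strictly positive. Since $S$ is measurable, the product $h$ is measurable.

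Next comes the transfer of exchangeability. Fix any permutation $\sigma$ of $\{1,\dots,n+1\}$ and any Borel set $A\subseteq\mathbb{R}^{n+1}$. Let $H:(\mathcal{X}\times\mathcal{Y})^{n+1}\to\mathbb{R}^{n+1}$ be the coordinatewise map $H(z_1,\dots,z_{n+1})=(h(z_1),\dots,h(z_{n+1}))$. Because $H$ commutes with permutations of coordinates, we have
\[
\P\bigl((S'_{\sigma(1)},\dots,S'_{\sigma(n+1)})\in A\bigr) = \P\bigl((Z_{\sigma(1)},\dots,Z_{\sigma(n+1)})\in H^{-1}(A)\bigr),
\]
where $Z_i=(X_i,Y_i)$. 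By exchangeability of the $Z_i$, this probability equals $\P\bigl((Z_1,\dots,Z_{n+1})\in H^{-1}(A)\bigr)=\P\bigl((S'_1,\dots,S'_{n+1})\in A\bigr)$, which is exactly the exchangeability of the modulated scores.

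The randomized tie-breaking variable $u$ in \aps, \raps and \saps requires a remark. In this case I would treat $Z_i=(X_i,Y_i,U_i)$, with $U_i$ drawn i.i.d.\ uniform and independent of the data. The augmented sequence remains exchangeable, and the same argument applies to $h(x,y,u)$.

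I expect the only real obstacle to be making explicit that $F$ does not depend on the calibration or test data. If the temperature $\tau$ or $T$ were tuned on $\mathcal{D}_{\text{cal}}$, $h$ would become data-dependent and the argument would break. I would state this as a standing assumption, namely that the tuning is done on separate data, consistent with assumption (ii). Under that assumption, coverage for the modulated scores then follows from the standard split-conformal quantile argument.
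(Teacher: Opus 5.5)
Your proof is correct and follows the same route as the paper: write $S'_i = h(X_i,Y_i)$ for a single fixed measurable map $h$, then use the fact that applying a measurable map coordinatewise preserves exchangeability. Your explicit preimage computation via $H^{-1}(A)$, the augmentation $Z_i=(X_i,Y_i,U_i)$ for the tie-breaking variable $u$, and your caveat that $\tau$ and $T$ must not be tuned on calibration data all make rigorous points that the paper's proof only asserts or leaves implicit.
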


\begin{proof}
A sequence of random variables is exchangeable if its joint distribution is invariant under any finite permutation of its indices.

Let the transformation be defined as $h(x,y) = S(x, y) \cdot \frac{1}{\beta}\log\left(1 + e^{-\beta F(x)}\right)$. By assumptions (i) and (ii), the base score $S(x,y)$ and the energy function $F(x)$ are deterministic. Since the softplus function and multiplication are also deterministic operations, the entire mapping $h(x,y)$ is deterministic and measurable.

A fundamental property of exchangeable sequences is that they remain exchangeable after applying a measurable transformation. That is, if $(Z_i)$ is an exchangeable sequence and $g$ is a measurable function, then the sequence $(g(Z_i))$ is also exchangeable.

Applying this principle with $Z_i = (X_i, Y_i)$ and the transformation $g = h$, we find that the sequence of scores $(S'_i) = (h(X_i, Y_i))$ inherits exchangeability from the data sequence $((X_i, Y_i))$.
Formally, for any permutation $\sigma$ of $\{1,\dots,n+1\}$,
\begin{equation}
(S'_{\sigma(1)},\dots,S'_{\sigma(n+1)})
  \;\stackrel{d}{=}\;
  (S'_1,\dots,S'_{n+1}),
\end{equation}
where $\stackrel{d}{=}$ denotes equality in distribution. Hence, the sequence $(S'_i)_{i=1}^{n+1}$ is exchangeable.
\end{proof}

\subsection{Proof of Proposition~\ref{prop:energy-uncertainty}}
\label{prf:energy-uncertainty}
\begin{proof}
Let the epistemic uncertainty $U_E(x)$ be defined as the negative logarithm of the model-induced input density, $U_E(x) = -\log p(x)$. This definition captures the intuition that uncertainty is high where the model assigns low probability density.

Starting from the definition of the input density in Equation~\ref{eq:distr}:
\begin{equation*}
    p(x) = \frac{\exp(-F(x)/\tau)}{Z}.
\end{equation*}
Taking the logarithm of both sides yields:
\begin{align*}
    \log p(x) &= \log\left(\exp(-F(x)/\tau)\right) - \log Z \\
    \log p(x) &= -F(x)/\tau - \log Z.
\end{align*}
Multiplying by $-1$ and rearranging for $F(x)$, we obtain:
\begin{align*}
    -\log p(x) &= F(x)/\tau + \log Z \\
    F(x) &= \tau(-\log p(x)) - \tau\log Z.
\end{align*}
Substituting $U_E(x) = -\log p(x)$ and letting $C = -\tau\log Z$ (a constant with respect to $x$), we arrive at:
\begin{equation*}
    F(x) = \tau \cdot U_E(x) + C.
\end{equation*}
This shows that the free energy $F(x)$ is linearly proportional to the epistemic uncertainty $U_E(x)$, scaled by the temperature $\tau$ and shifted by a constant. Therefore, a higher free energy value directly corresponds to higher epistemic uncertainty.
\end{proof}

\subsection{Proof of Theorem \ref{thm:energy_difficulty}}
\label{prf:energy_difficulty}
\begin{proof}
The proof proceeds by first establishing the relationship between the negative free energy $-F(x)$ and the maximum logit, and then arguing that the expected maximum logit decreases as sample difficulty increases for a well-trained model.

\textbf{Step 1: Relating Negative Free Energy to the Maximum Logit.}
The negative free energy, $-F(x)$, is the LogSumExp (LSE) of the scaled logits. The LSE function is a smooth approximation of the maximum function and is tightly bounded by it. For any vector $\mathbf{z} \in \R^K$, the sum of exponentials can be bounded relative to its maximum term, $z_{\max} = \max_k z_k$:
\begin{equation}
e^{z_{\max}} \le \sum_{k=1}^K e^{z_k} \le K \cdot e^{z_{\max}}.
\end{equation}
By taking the logarithm across all parts of the inequality, we obtain the standard bounds for the LSE function:
\begin{equation}
\max_k z_k \le \log \sum_{k=1}^K e^{z_k} \le \max_k z_k + \log K.
\end{equation}

Applying this to our scaled logits, $z_k = f_k(x)/\tau$, and multiplying by $\tau$ gives:
\begin{equation} \label{eq:lse_bounds}
\max_k f_k(x) \le -F(x) \le \max_k f_k(x) + \tau \log K.
\end{equation}
This inequality demonstrates that $-F(x)$ is a tight, monotonically increasing function of the maximum logit, $\max_k f_k(x)$. Therefore, proving Theorem~\ref{thm:energy_difficulty} is equivalent to proving that the expected maximum logit is a strictly monotonically decreasing function of difficulty:
\begin{equation}
    \E[\max_k f_k(X) \mid D(X,Y_{\text{true}})=d_1] > \E[\max_k f_k(X) \mid D(X,Y_{\text{true}})=d_2].
\end{equation}

\textbf{Step 2: Characterizing the Maximum Logit by Difficulty Level.}
We analyze the maximum logit for samples conditioned on their difficulty.
\begin{itemize}
    \item \textbf{Low Difficulty ($d=1$):} A sample $(x, y)$ has difficulty $d=1$ if and only if its true label $y$ receives the highest logit. Thus, for this subpopulation of data, the maximum logit is the logit of the true class:
    \begin{equation}
        D(x,y_{\text{true}})=1 \implies \max_k f_k(x) = f_y(x).
    \end{equation}
    A model trained via a standard objective like cross-entropy is explicitly optimized to increase the value of $f_y(x)$ for all training samples. Consequently, the set of samples where the model succeeds ($d=1$) corresponds to inputs for which the model produces a large, dominant logit for the correct class.

    \item \textbf{High Difficulty ($d > 1$):} A sample $(x, y)$ has difficulty $d > 1$ if and only if the model's prediction is incorrect. This implies that the maximum logit corresponds to an incorrect class $k' \neq y$:
    \begin{equation}
        D(x,y_{\text{true}})=d>1 \implies \max_k f_k(x) = f_{k'}(x) \text{ for some } k' \neq y.
    \end{equation}
\end{itemize}

\textbf{Step 3: Comparing Conditional Expectations.}
We compare the expected maximum logit over the subpopulation of correctly classified samples ($d_1=1$) versus incorrectly classified samples ($d_2 > 1$). The training objective directly pushes the values in the set $\{f_Y(X) \mid D(X,Y_{\text{true}})=1\}$ to be as large as possible. In contrast, the values in the set $\{\max_k f_k(X) \mid D(X,Y_{\text{true}})>1\}$ arise from the model's failure to generalize.

A fundamental property of a successfully trained and well-generalized, and well-calibrated classifier is that its confidence on the examples it classifies correctly is, on average, higher than its confidence on the examples it classifies incorrectly~\citep{guo2017calibration}. If this were not the case, the model would not have learned a meaningful decision boundary from the data. Thus, the average maximum logit for the population of ``easy'' samples must be greater than that for the population of ``hard'' samples. Formally, for $d_1 < d_2$:
\begin{equation}
    \E[\max_k f_k(X) \mid D(X,Y_{\text{true}})=d_1] > \E[\max_k f_k(X) \mid D(X,Y_{\text{true}})=d_2].
\end{equation}
Given the monotonic relationship established in Equation~\ref{eq:lse_bounds}, it follows directly that the expected negative free energy also decreases with increasing difficulty. This completes the proof and aligns with our empirical observation in Section \ref{sec:motivation_and_method}.
\end{proof}

\subsection{Proof of Proposition~\ref{prop:adaptive_threshold}}
\label{prf:adaptive_threshold}
\begin{proof}
The proof follows directly from the definition of a conformal prediction set. By definition, the prediction set \(\mathcal{C}_G(x)\) for a new instance \(x\) includes all labels \(y\) for which the scaled nonconformity score does not exceed the calibrated quantile \(\widehat{q}^{(G)}_{1-\alpha}\).
\begin{equation}
    \mathcal{C}_G(x) = \left\{ y \;\middle|\; S_G(x,y) \le \widehat{q}^{(G)}_{1-\alpha} \right\}.
\end{equation}
Substituting the definition of the scaled score, \(S_G(x,y) = G(x)S(x,y)\), we have:
\begin{equation}
    \mathcal{C}_G(x) = \left\{ y \;\middle|\; G(x)S(x,y) \le \widehat{q}^{(G)}_{1-\alpha} \right\}.
\end{equation}
Since we assume \(G(x)\) is strictly positive, we can divide both sides of the inequality by \(G(x)\) without changing its direction:
\begin{equation}
    \mathcal{C}_G(x) = \left\{ y \;\middle|\; S(x,y) \le \frac{\widehat{q}^{(G)}_{1-\alpha}}{G(x)} \right\}.
\end{equation}
By defining the instance-adaptive threshold \(\theta(x) = \widehat{q}^{(G)}_{1-\alpha} / G(x)\), we arrive at the equivalent formulation:
\begin{equation}
    \mathcal{C}_G(x) = \left\{ y \;\middle|\; S(x,y) \le \theta(x) \right\}.
\end{equation}
This concludes the proof.
\end{proof}

\begin{remark}
\label{rem:quantiles}
It is important to emphasize that the new quantile, \(\widehat{q}^{(G)}_{1-\alpha}\), is fundamentally different from the baseline quantile, \(\widehat{q}_{1-\alpha}\), which would be computed from the unscaled scores.

Let \(\mathcal{S}_{\text{base}} = \{S(X_i, Y_i)\}_{i=1}^n\) be the set of baseline calibration scores, and \(\mathcal{S}_{G} = \{G(X_i)S(X_i, Y_i)\}_{i=1}^n\) be the set of Energy-reweighted calibration scores.
\begin{itemize}
    \item \(\widehat{q}_{1-\alpha}\) is the \((1-\alpha)\)-quantile of the empirical distribution defined by \(\mathcal{S}_{\text{base}}\).
    \item \(\widehat{q}^{(G)}_{1-\alpha}\) is the \((1-\alpha)\)-quantile of the empirical distribution defined by \(\mathcal{S}_{G}\).
\end{itemize}
There is no simple, closed-form relationship between \(\widehat{q}_{1-\alpha}\) and \(\widehat{q}^{(G)}_{1-\alpha}\). Scaling each score \(S_i\) by a different factor \(G(X_i)\) changes the distribution of scores, including the relative ordering of the calibration samples. For instance, a sample \((X_j, Y_j)\) that had a median score in \(\mathcal{S}_{\text{base}}\) might have a very high score in \(\mathcal{S}_{G}\) if its corresponding energy factor \(G(X_j)\) is large.

Consequently, the sample that happens to fall at the \(\lceil(1-\alpha)(n+1)\rceil\)-th position (thus defining the quantile) will almost certainly be different in the baseline and Energy-based cases. In other words, one cannot simply take the baseline quantile \(\widehat{q}_{1-\alpha}\) and scale it by some factor. The entire set of calibration scores must be re-computed and re-sorted to find the new, correct quantile \(\widehat{q}^{(G)}_{1-\alpha}\).
\end{remark}

\subsection{Proof of Theorem~\ref{prop:imbalance}}
\label{prf:imbalance}
\begin{proof}
The proof relies on the connection between the logits of a classifier trained with cross-entropy and the Bayesian posterior probability, which shows that the model's parameters internalize the training set's class priors.

\textbf{Step 1: Bayesian Decomposition of Logits.}
A classifier trained to minimize cross-entropy loss learns to approximate the posterior probability $P(Y=y|X=x)$. Its logits $f_y(x)$ thus approximate the log-posterior, up to an instance-specific normalization constant. Using Bayes' rule, we can decompose the log-posterior:
\begin{equation}
    \log P(Y=y|X=x) = \log P(X=x|Y=y) + \log P(Y=y) - \log P(X=x).
\end{equation}
When trained on $\Ptrain$, the model's logits learn to reflect this structure:
\begin{equation} \label{eq:logit_decomposition}
    f_y(x) \approx \log \Ptrain(X=x|Y=y) + \log \Ptrain(Y=y) + C(x),
\end{equation}
where the term $C(x)$ absorbs instance-dependent factors like $-\log \Ptrain(X=x)$ and other model-specific biases. Critically, the logit $f_y(x)$ encodes the log-prior probability of class $y$ from the training distribution.

\textbf{Step 2: Connecting Negative Free Energy to the Maximum Logit.}
As established in the proof of Theorem~\ref{thm:energy_difficulty}, the negative free energy $-F(x)$ is tightly and monotonically related to the maximum logit, $-F(x) \approx \max_k f_k(x)$. For a reasonably accurate classifier, the expectation of the maximum logit over test samples from a given class $y$ is dominated by instances where the model is correct. For a correct classification of a sample $(x, y)$, the maximum logit is the logit of the true class: $\max_k f_k(x) = f_y(x)$. Building on this, we can state that the expected negative free energy for class $y$ is primarily driven by the expected logit for that class:
\begin{equation}
    \E_{X \sim \Ptest(X|Y=y)}[-F(X)] \approx \E_{X \sim \Ptest(X|Y=y)}[f_y(X)].
\end{equation}

\textbf{Step 3: Comparing Expected Logits for Majority and Minority Classes.}
Using the decomposition from Equation~\ref{eq:logit_decomposition}, we can express the expected logit for a class $y$ as:
\begin{equation}
    \E_{X \sim \Ptest(X|Y=y)}[f_y(X)] \approx \E_{X \sim \Ptest(X|Y=y)}[\log \Ptrain(X|Y=y) + C(X)] + \log \Ptrain(Y=y).
\end{equation}
Let us define the term $A(y) = \E_{X \sim \Ptest(X|Y=y)}[\log \Ptrain(X|Y=y) + C(X)]$. This term represents the average "data-fit" or "evidence" for class $y$, as learned by the model. Under our assumption that classes $y_{\text{maj}}$ and $y_{\text{min}}$ have comparable intrinsic complexity and are well-represented, this evidence term should be similar for both, i.e., $A(y_{\text{maj}}) \approx A(y_{\text{min}})$.

We can now compare the expected negative free energy for the two classes:
\begin{align}
    \E[-F(X) | Y=y_{\text{maj}}] & \approx A(y_{\text{maj}}) + \log \Ptrain(Y=y_{\text{maj}}) \\
    \E[-F(X) | Y=y_{\text{min}}] & \approx A(y_{\text{min}}) + \log \Ptrain(Y=y_{\text{min}})
\end{align}
By the proposition's premise, $\Ptrain(Y=y_{\text{maj}}) > \Ptrain(Y=y_{\text{min}})$, which implies $\log \Ptrain(Y=y_{\text{maj}}) > \log \Ptrain(Y=y_{\text{min}})$. Since $A(y_{\text{maj}}) \approx A(y_{\text{min}})$, the additive log-prior term learned during training becomes the dominant factor driving the difference. Therefore, we conclude that:
\begin{equation}
    \E_{X \sim \Ptest(X|Y=y_{\text{maj}})}[-F(X)] > \E_{X \sim \Ptest(X|Y=y_{\text{min}})}[-F(X)].
\end{equation}
This result confirms that the model's systematically higher epistemic uncertainty (lower negative free energy) for minority classes is a bias inherited from the training distribution's class priors. This aligns with prior work showing that models have lower expected logits for minority classes~\citep{ren2020balanced, lyu2025statistical, kato2023enlarged,chen2023transfer}.
\end{proof}

\newpage
\section{Analysis of Softmax Saturation and Energy-Based Adaptivity}
\label{app:theoretical_analysis}

In this section, we provide a formal mechanism linking the limitations of softmax-based nonconformity scores to conformal inefficiency. We demonstrate that while softmax probabilities saturate rapidly away from the decision boundary, thereby losing information about sample difficulty, the Helmholtz Free Energy retains this geometric information. We validate this analysis with a toy experiment visualizing the decision landscapes.

\subsection{Distance to Decision Boundary and Logit Magnitude}

Consider a deep classifier $f: \mathcal{X} \to \mathbb{R}^K$. For a given input $x$, let $\hat{y} = \arg\max_k f_k(x)$ be the predicted class. The decision boundary between the predicted class $\hat{y}$ and the second most likely class $j$ is defined by the hyperplane where $f_{\hat{y}}(x) = f_j(x)$.

It has been established that for neural networks, the magnitude of the logit vector typically scales with the distance of the input from the decision boundary \citep{hein2019relu}. Let $\delta(x)$ denote the geometric distance of $x$ from the decision boundary. We observe the following proportionality:
\begin{equation}
    \delta(x) \propto \max_k f_k(x).
\end{equation}
Therefore, an ``easy'' sample (one far from the boundary in a high-density region) is characterized by logits with large magnitudes, while a ``hard'' sample (near the boundary) yields logits with smaller or entangled magnitudes. Ideally, an adaptive conformal predictor should produce smaller sets as $\delta(x)$ increases.

\subsection{The Saturation of Softmax and Entropy}

Standard conformal scores rely on the softmax distribution $\hat{\pi}(y|x) = \exp(f_y(x)) / \sum_k \exp(f_k(x))$. A critical limitation of this mapping is \textit{gradient saturation}.

Consider a sample $x$ moving away from the decision boundary such that its logit magnitude scales by a factor $\alpha > 1$. As $\alpha \to \infty$, $\hat{\pi}(\hat{y}|x) \to 1$. The gradient of the softmax output with respect to the dominant logit $f_{\hat{y}}$ is given by:
\begin{equation}
    \frac{\partial \hat{\pi}(\hat{y}|x)}{\partial f_{\hat{y}}} = \hat{\pi}(\hat{y}|x) (1 - \hat{\pi}(\hat{y}|x)).
\end{equation}
As $\hat{\pi} \to 1$, this gradient approaches 0. Similarly, the Shannon Entropy $\mathbb{H}(x)$ of the distribution approaches 0.

\textbf{Implication for Conformal Prediction:} Once a sample is sufficiently far from the boundary to achieve a high confidence (e.g., $\hat{\pi} > 0.99$), the softmax score saturates. The model becomes geometrically insensitive: a sample at distance $d$ and a sample at distance $10d$ yield indistinguishable conformal scores. This saturation restricts the adaptive capacity of the prediction sets. For high-confidence samples, the sets cannot achieve greater efficiency because the score yields no further signal.

\subsection{Non-Saturation of Free Energy}

In contrast, the negative Helmholtz Free Energy is defined as $-F(x) = \tau \log \sum_k \exp(f_k(x)/\tau)$. As derived in Appendix G.3, this quantity is bounded by the maximum logit:
\begin{equation}
    -F(x) \approx \max_k f_k(x).
\end{equation}
Unlike softmax, the Free Energy does not saturate. Its derivative with respect to the dominant logit is approximately 1:
\begin{equation}
    \frac{\partial (-F(x))}{\partial f_{\hat{y}}} \approx 1.
\end{equation}
This indicates that $-F(x)$ grows linearly with the logit magnitude, thereby acting as a faithful proxy for the distance $\delta(x)$ even in high-confidence regimes.

\subsection{Empirical Visualization of Uncertainty Landscapes}

To empirically validate this behavior, we trained a 3-layer Multilayer Perceptron (MLP) on a 2D toy dataset consisting of two concentric classes. Figure \ref{fig:landscapes} visualizes the value of Max Softmax Confidence, Shannon Entropy, and Negative Free Energy across the input space $\mathcal{X}$.
\begin{figure}[ht]
    \centering
    \begin{minipage}{\textwidth}
        \centering
        \includegraphics[width=0.9\textwidth]{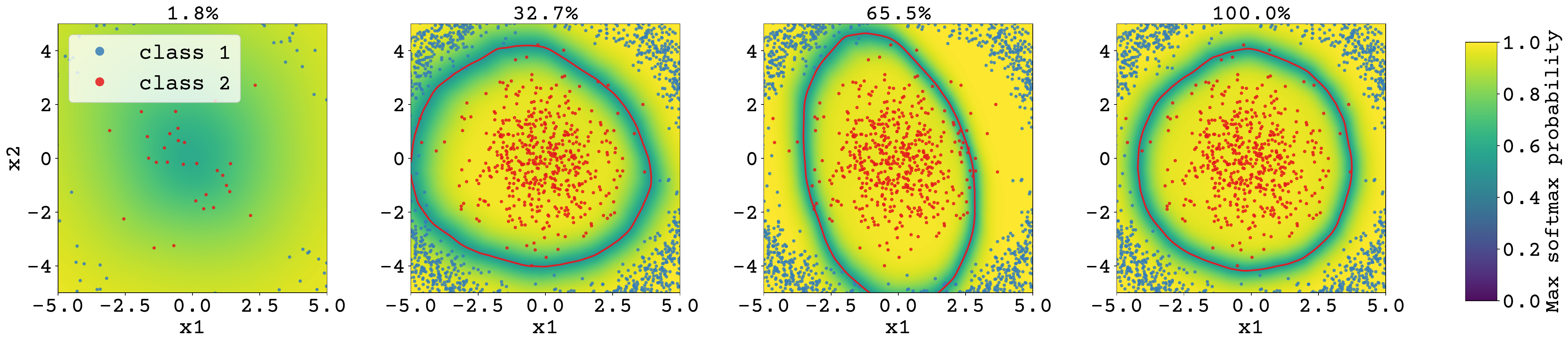}
        \\
        {\footnotesize (a) Softmax Confidence Landscape}
    \end{minipage}
    
    \vspace{0.2cm}
    
    \begin{minipage}{\textwidth}
        \centering
        \includegraphics[width=0.9\textwidth]{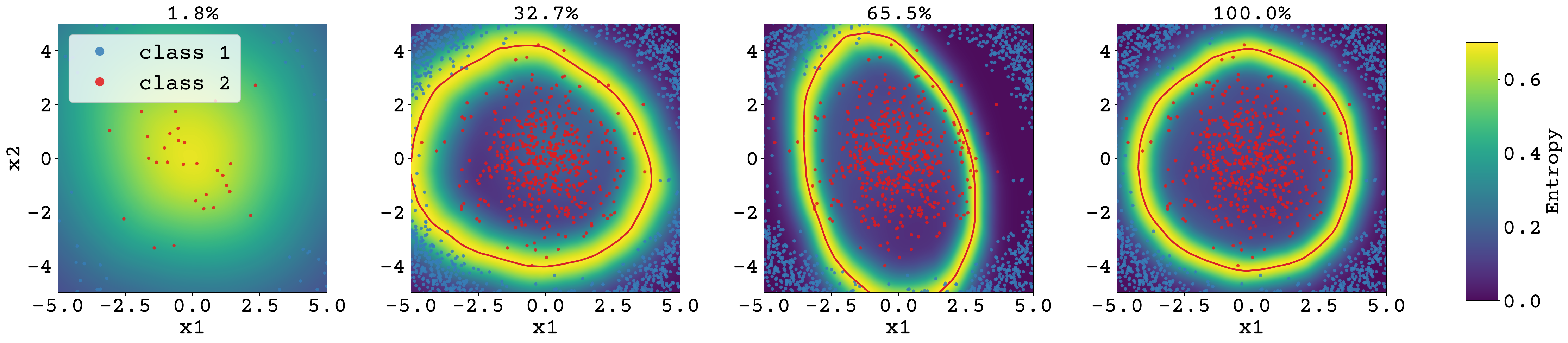}
        \\
        {\footnotesize (b) Entropy Landscape}
    \end{minipage}

    \vspace{0.2cm}

    \begin{minipage}{\textwidth}
        \centering
        \includegraphics[width=0.9\textwidth]{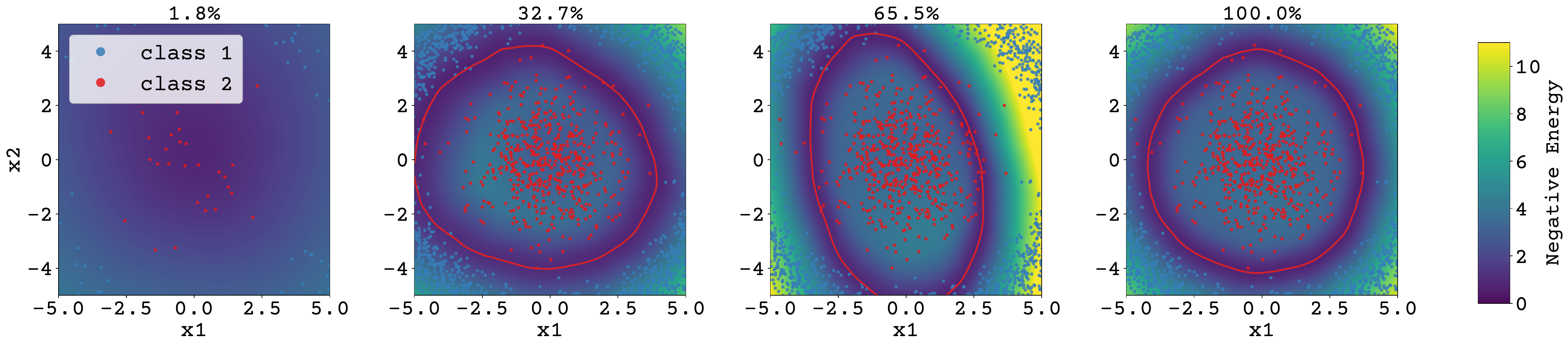}
        \\
        {\footnotesize (c) Negative Energy Score Landscape}
    \end{minipage}
    
    \caption{Evolution of uncertainty landscapes on a 2D toy dataset throughout the training process. Columns represent progressive checkpoints from the early training phase (left) to full convergence (right). The red line indicates the decision boundary. (a) Softmax probabilities saturate rapidly. The yellow region (confidence $\approx 1.0$) is flat, making points near the boundary indistinguishable from points far away. (b) Entropy exhibits similar saturation (dark blue region), vanishing to zero for most of the domain. (c) Negative Free Energy retains gradients throughout the domain. Note the continuous color transition scaling with the distance from the decision boundary, identifying ``easier'' points with higher values even when softmax is saturated.}
    \label{fig:landscapes}
\end{figure}

\subsection{Mechanism of Energy-Based Efficiency}

Our proposed method leverages this non-saturating property by modulating the base nonconformity score $S(x,y)$ with a sample-specific scaler $G(x) \propto \text{softplus}(-F(x))$.

For "easy" samples (large $\delta(x)$), $-F(x)$ is large positive. This results in a large scaling factor $G(x) \gg 1$.
\begin{enumerate}
    \item The nonconformity scores for incorrect labels (which are naturally non-zero) are magnified significantly by $G(x)$, pushing them well above the calibrated quantile $\hat{q}$.
    \item The nonconformity score for the true label (typically near zero) remains small even after scaling.
\end{enumerate}
This amplification forces the exclusion of incorrect classes that might otherwise have been included due to the looseness of the global quantile $\hat{q}$, thereby reducing the prediction set size. Because $F(x)$ does not saturate, this efficiency gain continues to improve as samples get "easier," a property unattainable with softmax-based modulation.

\newpage
\section{Performance Analysis Stratified by Sample Difficulty}
\label{sec:diff_stratified}

In this section, we report the performance of different nonconformity scores by stratifying test samples based on their difficulty. Sample difficulty is defined as the rank of the true label in the model's predicted probability ordering; a lower rank indicates an ``easier'' sample, while a higher rank signifies a ``harder'', often misclassified, one.

In Section~\ref{sec:exp_balanced}, we established that energy-reweighting reduces the overall average prediction set size while maintaining target coverage. However, this aggregate metric does not reveal whether these efficiency gains are distributed evenly or are concentrated on specific types of samples. This stratified analysis provides a more granular view to answer that question.

We expect an adaptive method to produce the most significant set size reductions for easy samples, where the model is confident, while appropriately adjusting for harder samples where uncertainty is higher. Table~\ref{tab:difficulty_stratified_results} presents these stratified results, detailing how coverage and average set size vary across the different difficulty levels.

\begin{table*}[!ht]
\centering
\caption{
    Coverage and average set size on ImageNet, stratified by sample difficulty. Results are for a ResNet-50 model at $\alpha=0.01$ and averaged over 10 trials. The table compares baseline adaptive scores with their energy-based variants, which generally produce smaller sets for easier samples while maintaining coverage.
}
\label{tab:difficulty_stratified_results}
\renewcommand{\arraystretch}{1.4}
\resizebox{\textwidth}{!}{%
\begin{tabular}{l r | r r | >{\columncolor{lightgray}}r >{\columncolor{lightgray}}r | r r | >{\columncolor{lightgray}}r >{\columncolor{lightgray}}r | r r | >{\columncolor{lightgray}}r >{\columncolor{lightgray}}r}
\toprule
& & \multicolumn{4}{c|}{\textbf{APS}} & \multicolumn{4}{c|}{\textbf{RAPS}} & \multicolumn{4}{c}{\textbf{SAPS}} \\
\cmidrule(lr){3-6} \cmidrule(lr){7-10} \cmidrule(lr){11-14}
& & \multicolumn{2}{c|}{w/o \energy} & \multicolumn{2}{c|}{w/ \energy} & \multicolumn{2}{c|}{w/o \energy} & \multicolumn{2}{c|}{w/ \energy} & \multicolumn{2}{c|}{w/o \energy} & \multicolumn{2}{c}{w/ \energy} \\
\cmidrule(lr){3-4} \cmidrule(lr){5-6} \cmidrule(lr){7-8} \cmidrule(lr){9-10} \cmidrule(lr){11-12} \cmidrule(lr){13-14}
\textbf{Difficulty} & \textbf{Count} & \textbf{Cov.} & \textbf{Set Size} & \textbf{Cov.} & \textbf{Set Size} & \textbf{Cov.} & \textbf{Set Size} & \textbf{Cov.} & \textbf{Set Size} & \textbf{Cov.} & \textbf{Set Size} & \textbf{Cov.} & \textbf{Set Size} \\
\midrule
1 to 1      & 15990 & 1.00 & 39.08 & 1.00 & 32.31 & 1.00 & 36.97 & 1.00 & 30.53 & 1.00 & 35.69 & 1.00 & 29.11 \\
2 to 3      & 2547  & 1.00 & 38.38 & 1.00 & 34.96 & 1.00 & 37.17 & 1.00 & 34.10 & 1.00 & 36.98 & 1.00 & 33.69 \\
4 to 6      & 638   & 1.00 & 37.86 & 1.00 & 36.46 & 1.00 & 37.32 & 1.00 & 36.29 & 1.00 & 37.35 & 1.00 & 36.02 \\
7 to 10     & 306   & 1.00 & 37.66 & 1.00 & 37.09 & 1.00 & 37.40 & 1.00 & 37.20 & 1.00 & 37.47 & 1.00 & 36.96 \\
11 to 100   & 453   & 0.76 & 37.49 & 0.76 & 37.84 & 0.76 & 37.51 & 0.76 & 38.27 & 0.76 & 37.58 & 0.76 & 37.98 \\
101 to 1000 & 66    & 0.00 & 37.82 & 0.00 & 38.45 & 0.00 & 37.60 & 0.00 & 38.60 & 0.00 & 37.34 & 0.00 & 38.05 \\
\bottomrule
\end{tabular}
} 
\end{table*}

\newpage
\section{Energy-Based LAC}
\label{sec:energy_based_lac}
The standard LAC score is inversely proportional to the softmax probability. For its energy-based variant, as mentioned in Appendix~\ref{sec:nonconformity_scores}, we use a base score of $S_{\text{LAC}}(x, y) = -\hat{\pi}(y|x)$. For a difficult input, which corresponds to a low negative free energy score, the objective is to produce a larger prediction set. This requires the nonconformity scores of more classes to fall below the calibrated threshold. Conversely, for an easy input (high negative free energy), the nonconformity scores should be scaled to produce a smaller set. Therefore, for the energy-based \lac, we divide the base score by the energy-based scaling factor. This adjustment ensures that for difficult inputs, the small scaling factor in the denominator makes the nonconformity scores smaller, including more labels in the set.

So formally, we define the Energy-based \lac~nonconformity score as:
\begin{equation}
    S_{\text{EB-LAC}}(x, y) = \frac{-\hat{\pi}(y|x)}{\frac{1}{\beta}\log(1 + e^{-\beta F(x)})}
\end{equation}
where $\hat{\pi}(y|x)$ is the softmax probability, $F(x)$ is the Helmholtz free energy, and $\beta$ is the softplus sharpness parameter.

\subsection{Energy-Based Lac Performance in Standard Scenario}
\begin{table*}[ht]
\centering
\caption{Performance of the LAC nonconformity score function and its energy-based variant on CIFAR-100, ImageNet, and Places365 at miscoverage levels ${\alpha \in \{0.01, 0.025, 0.05, 0.1\}}$. Results are averaged over 10 trials. For the \textbf{Set Size} column, lower is better. \textbf{Bold} values indicate the best performance within the method family (with and without \energy).}
\label{tab:lac_results_balanced}
\renewcommand{\arraystretch}{1.4}
\resizebox{\textwidth}{!}{%
\begin{tabular}{c l cc cc cc cc}
\toprule
& & \multicolumn{2}{c}{\textbf{$\bm{\alpha=0.1}$}} & \multicolumn{2}{c}{\textbf{$\bm{\alpha=0.05}$}} & \multicolumn{2}{c}{\textbf{$\bm{\alpha=0.025}$}} & \multicolumn{2}{c}{\textbf{$\bm{\alpha=0.01}$}} \\
\cmidrule(lr){3-4} \cmidrule(lr){5-6} \cmidrule(lr){7-8} \cmidrule(lr){9-10}
\textbf{Method} & & \textbf{Coverage} & \textbf{Set Size} & \textbf{Coverage} & \textbf{Set Size} & \textbf{Coverage} & \textbf{Set Size} & \textbf{Coverage} & \textbf{Set Size} \\
\midrule
\multicolumn{10}{c}{\textbf{CIFAR-100 \footnotesize (ResNet-56)}} \\
\midrule
\addlinespace[5pt]
& w/o \energy & \resultpm{0.90}{0.01} & \resultpm{2.54}{0.06}          & \resultpm{0.95}{0.00} & \resultpm{5.26}{0.17}          & \resultpm{0.974}{0.001} & \resultpm{9.50}{0.21}           & \resultpm{0.99}{0.00} & \resultpm{21.14}{0.74}          \\
\multirow{-2}{*}{\rotatebox[origin=c]{90}{\lacbf}} & \cellcolor{lightgray}w/ \energy  & \cellcolor{lightgray}\resultpm{0.90}{0.01} & \cellcolor{lightgray}\bresultpm{2.52}{0.04}         & \cellcolor{lightgray}\resultpm{0.95}{0.00} & \cellcolor{lightgray}\bresultpm{5.21}{0.15}         & \cellcolor{lightgray}\resultpm{0.974}{0.002} & \cellcolor{lightgray}\bresultpm{9.09}{0.16}          & \cellcolor{lightgray}\resultpm{0.99}{0.00} & \cellcolor{lightgray}\bresultpm{20.65}{0.83}         \\
\midrule
\multicolumn{10}{c}{\textbf{ImageNet \footnotesize (ResNet-50)}} \\
\midrule
\addlinespace[5pt]
& w/o \energy & \resultpm{0.90}{0.00} & \resultpm{1.49}{0.01}          & \resultpm{0.95}{0.00} & \bresultpm{2.68}{0.05}          & \resultpm{0.975}{0.001} & \resultpm{5.48}{0.18}           & \resultpm{0.99}{0.00} & \resultpm{14.79}{0.80}         \\
\multirow{-2}{*}{\rotatebox[origin=c]{90}{\lacbf}} & \cellcolor{lightgray}w/ \energy  & \cellcolor{lightgray}\resultpm{0.90}{0.00} & \cellcolor{lightgray}\bresultpm{1.48}{0.01}         & \cellcolor{lightgray}\resultpm{0.95}{0.00} & \cellcolor{lightgray}\bresultpm{2.68}{0.04}         & \cellcolor{lightgray}\resultpm{0.975}{0.001} & \cellcolor{lightgray}\bresultpm{5.42}{0.15}          & \cellcolor{lightgray}\resultpm{0.99}{0.00} & \cellcolor{lightgray}\bresultpm{13.89}{0.64}        \\
\midrule
\multicolumn{10}{c}{\textbf{Places365 \footnotesize (ResNet-50)}} \\
\midrule
\addlinespace[5pt]
& w/o \energy & \resultpm{0.90}{0.001} & \resultpm{6.21}{0.03}          & \resultpm{0.95}{0.00} & \resultpm{11.36}{0.04}         & \resultpm{0.973}{0.001} & \resultpm{19.55}{0.41}         & \resultpm{0.99}{0.001} & \resultpm{37.12}{0.93}        \\
\multirow{-2}{*}{\rotatebox[origin=c]{90}{\lacbf}} & \cellcolor{lightgray}w/ \energy  & \cellcolor{lightgray}\resultpm{0.90}{0.001} & \cellcolor{lightgray}\bresultpm{6.19}{0.03}         & \cellcolor{lightgray}\resultpm{0.95}{0.001} & \cellcolor{lightgray}\bresultpm{11.11}{0.11}        & \cellcolor{lightgray}\resultpm{0.973}{0.001} & \cellcolor{lightgray}\bresultpm{19.28}{0.43}        & \cellcolor{lightgray}\resultpm{0.99}{0.001} & \cellcolor{lightgray}\bresultpm{35.28}{0.63}         \\
\bottomrule
\end{tabular}
} % end \resizebox
\end{table*}

\newpage
\subsection{Energy-Based LAC Performance with Imbalanced Training Priors}
\begin{table*}[ht]
\centering
\caption{Performance of \lac~and its energy-based variant on imbalanced CIFAR-100 with varying imbalance factors ($\lambda \in \{0.005, 0.01, 0.02, 0.03\}$) and at miscoverage levels ${\alpha \in \{0.01, 0.025, 0.05, 0.1\}}$. Results are averaged over 10 trials with a ResNet-56 model. For the average set size, lower is better. \textbf{Bold} values indicate the best performance.}
\label{tab:imbalanced_results_merged}
\renewcommand{\arraystretch}{1.4}
\resizebox{\textwidth}{!}{%
\begin{tabular}{c l cc cc cc cc}
\toprule
& & \multicolumn{2}{c}{\textbf{$\bm{\alpha=0.1}$}} & \multicolumn{2}{c}{\textbf{$\bm{\alpha=0.05}$}} & \multicolumn{2}{c}{\textbf{$\bm{\alpha=0.025}$}} & \multicolumn{2}{c}{\textbf{$\bm{\alpha=0.01}$}} \\
\cmidrule(lr){3-4} \cmidrule(lr){5-6} \cmidrule(lr){7-8} \cmidrule(lr){9-10}
\textbf{Method} & \textbf{} & \textbf{Coverage} & \textbf{Set Size} & \textbf{Coverage} & \textbf{Set Size} & \textbf{Coverage} & \textbf{Set Size} & \textbf{Coverage} & \textbf{Set Size} \\
\addlinespace[3pt]
\midrule
\multicolumn{10}{c}{\textbf{CIFAR-100-LT ($\bm{\lambda=0.005}$, mild imbalance) \footnotesize (ResNet-56)}} \\
\midrule
\addlinespace[5pt]
& w/o \energy & \resultpm{0.897}{0.007} & \resultpm{7.04}{0.24}          & \resultpm{0.947}{0.004} & \resultpm{12.98}{0.49}          & \resultpm{0.973}{0.003} & \resultpm{21.31}{0.84}          & \resultpm{0.988}{0.002} & \resultpm{33.97}{1.37}          \\
\multirow{-2}{*}{\rotatebox[origin=c]{90}{\lacbf}} & \cellcolor{lightgray}w/ \energy  & \cellcolor{lightgray}\resultpm{0.897}{0.006} & \cellcolor{lightgray}\bresultpm{6.91}{0.19}         & \cellcolor{lightgray}\resultpm{0.947}{0.005} & \cellcolor{lightgray}\bresultpm{12.68}{0.51}         & \cellcolor{lightgray}\resultpm{0.973}{0.004} & \cellcolor{lightgray}\bresultpm{20.56}{1.01}         & \cellcolor{lightgray}\resultpm{0.989}{0.002} & \cellcolor{lightgray}\bresultpm{32.99}{0.87}         \\
\addlinespace[3pt]
\midrule
\multicolumn{10}{c}{\textbf{CIFAR-100-LT ($\bm{\lambda=0.01}$) \footnotesize (ResNet-56)}} \\
\midrule
\addlinespace[5pt]
& w/o \energy & \resultpm{0.900}{0.007} & \resultpm{11.92}{0.41}          & \resultpm{0.951}{0.003} & \resultpm{20.75}{0.51}          & \resultpm{0.975}{0.003} & \resultpm{30.58}{0.68}           & \resultpm{0.990}{0.001} & \resultpm{45.93}{1.02}          \\
\multirow{-2}{*}{\rotatebox[origin=c]{90}{\lacbf}} & \cellcolor{lightgray}w/ \energy  & \cellcolor{lightgray}\resultpm{0.900}{0.007} & \cellcolor{lightgray}\bresultpm{11.52}{0.32}         & \cellcolor{lightgray}\resultpm{0.950}{0.003} & \cellcolor{lightgray}\bresultpm{20.32}{0.39}         & \cellcolor{lightgray}\resultpm{0.976}{0.003} & \cellcolor{lightgray}\bresultpm{30.51}{0.64}          & \cellcolor{lightgray}\resultpm{0.990}{0.001} & \cellcolor{lightgray}\bresultpm{44.49}{0.79}         \\
\addlinespace[3pt]
\midrule
\multicolumn{10}{c}{\textbf{CIFAR-100-LT ($\bm{\lambda=0.02}$) \footnotesize (ResNet-56)}} \\
\midrule
\addlinespace[5pt]
& w/o \energy & \resultpm{0.901}{0.007} & \resultpm{27.78}{0.73}          & \resultpm{0.950}{0.007} & \resultpm{42.03}{1.67}          & \resultpm{0.975}{0.003} & \resultpm{54.88}{1.13}           & \resultpm{0.990}{0.002} & \resultpm{68.21}{1.00}          \\
\multirow{-2}{*}{\rotatebox[origin=c]{90}{\lacbf}} & \cellcolor{lightgray}w/ \energy  & \cellcolor{lightgray}\resultpm{0.900}{0.007} & \cellcolor{lightgray}\bresultpm{27.63}{0.65}         & \cellcolor{lightgray}\resultpm{0.951}{0.006} & \cellcolor{lightgray}\bresultpm{41.49}{1.19}         & \cellcolor{lightgray}\resultpm{0.976}{0.003} & \cellcolor{lightgray}\bresultpm{54.13}{0.93}          & \cellcolor{lightgray}\resultpm{0.990}{0.001} & \cellcolor{lightgray}\bresultpm{67.28}{1.47}         \\
\addlinespace[3pt]
\midrule
\multicolumn{10}{c}{\textbf{CIFAR-100-LT ($\bm{\lambda=0.03}$, severe imbalance) \footnotesize (ResNet-56)}} \\
\midrule
\addlinespace[5pt]
& w/o \energy & \resultpm{0.901}{0.006} & \resultpm{28.34}{0.47}          & \resultpm{0.951}{0.004} & \resultpm{41.79}{0.73}          & \resultpm{0.976}{0.002} & \resultpm{54.71}{0.82}           & \resultpm{0.990}{0.002} & \resultpm{68.73}{1.14}          \\
\multirow{-2}{*}{\rotatebox[origin=c]{90}{\lacbf}} & \cellcolor{lightgray}w/ \energy  & \cellcolor{lightgray}\resultpm{0.901}{0.006} & \cellcolor{lightgray}\bresultpm{27.93}{0.42}         & \cellcolor{lightgray}\resultpm{0.952}{0.004} & \cellcolor{lightgray}\bresultpm{41.50}{0.71}         & \cellcolor{lightgray}\resultpm{0.975}{0.003} & \cellcolor{lightgray}\bresultpm{54.02}{1.07}          & \cellcolor{lightgray}\resultpm{0.990}{0.003} & \cellcolor{lightgray}\bresultpm{68.20}{1.45}         \\
\addlinespace[3pt]
\bottomrule
\end{tabular}
}
\end{table*}

\subsection{Energy-Based LAC Performance Analysis Stratified by Sample Difficulty}
\begin{table*}[!ht]
\centering
\caption{
    Coverage and average set size for the LAC method on ImageNet, stratified by sample difficulty. Results are shown for $\alpha=0.01$ and $\alpha=0.025$. The table compares the baseline LAC with its energy-based variant.
}
\label{tab:lac_difficulty_stratified_results}
\renewcommand{\arraystretch}{1.4}
\resizebox{0.825\textwidth}{!}{%
\begin{tabular}{l r | r r | >{\columncolor{lightgray}}r >{\columncolor{lightgray}}r | r r | >{\columncolor{lightgray}}r >{\columncolor{lightgray}}r}
\toprule
& & \multicolumn{8}{c}{\textbf{LAC}} \\
\cmidrule(lr){3-10}
& & \multicolumn{4}{c|}{\textbf{$\alpha=0.01$}} & \multicolumn{4}{c}{\textbf{$\alpha=0.025$}} \\
\cmidrule(lr){3-6} \cmidrule(lr){7-10}
& & \multicolumn{2}{c|}{w/o \energy} & \multicolumn{2}{c|}{w/ \energy} & \multicolumn{2}{c|}{w/o \energy} & \multicolumn{2}{c}{w/ \energy} \\
\cmidrule(lr){3-4} \cmidrule(lr){5-6} \cmidrule(lr){7-8} \cmidrule(lr){9-10}
\textbf{Difficulty Level} & \textbf{Count} & \textbf{Cov.} & \textbf{Set Size} & \textbf{Cov.} & \textbf{Set Size} & \textbf{Cov.} & \textbf{Set Size} & \textbf{Cov.} & \textbf{Set Size} \\
\midrule
1 to 1      & 15990 & 1.00 & 10.77 & 1.00 &  9.24 & 1.00 &  4.34 & 1.00 &  4.25 \\
2 to 3      & 2547  & 1.00 & 21.47 & 1.00 & 20.61 & 0.99 &  8.86 & 0.99 &  8.78 \\
4 to 6      & 638   & 0.99 & 32.07 & 0.98 & 32.96 & 0.93 & 13.29 & 0.93 & 13.36 \\
7 to 10     & 306   & 0.96 & 38.18 & 0.96 & 40.00 & 0.84 & 15.86 & 0.85 & 16.14 \\
11 to 25    & 275   & 0.92 & 44.22 & 0.91 & 47.14 & 0.60 & 17.07 & 0.56 & 17.05 \\
26 to 50    & 104   & 0.62 & 47.37 & 0.61 & 50.88 & 0.04 & 17.60 & 0.06 & 17.59 \\
51 to 100   & 74    & 0.32 & 56.19 & 0.39 & 64.61 & 0.00 & 19.99 & 0.00 & 21.07 \\
\bottomrule
\end{tabular}
}
\end{table*}

\newpage
\section{Additional Experimental Results for Imbalanced Data}\label{apd:imbalanced}
To evaluate performance under class imbalance as described in Section~\ref{sec:exp_imbalanced}, we construct several long-tailed variants of the CIFAR-100 dataset. In these variants, the number of training samples per class follows an exponential decay controlled by an imbalance factor $\lambda$. Figure~\ref{fig:imb_dist} illustrates how different values of $\lambda$ create varying levels of imbalance in the training distribution.
\begin{figure}[ht!]
    \centering
    \includegraphics[width=0.525\textwidth]{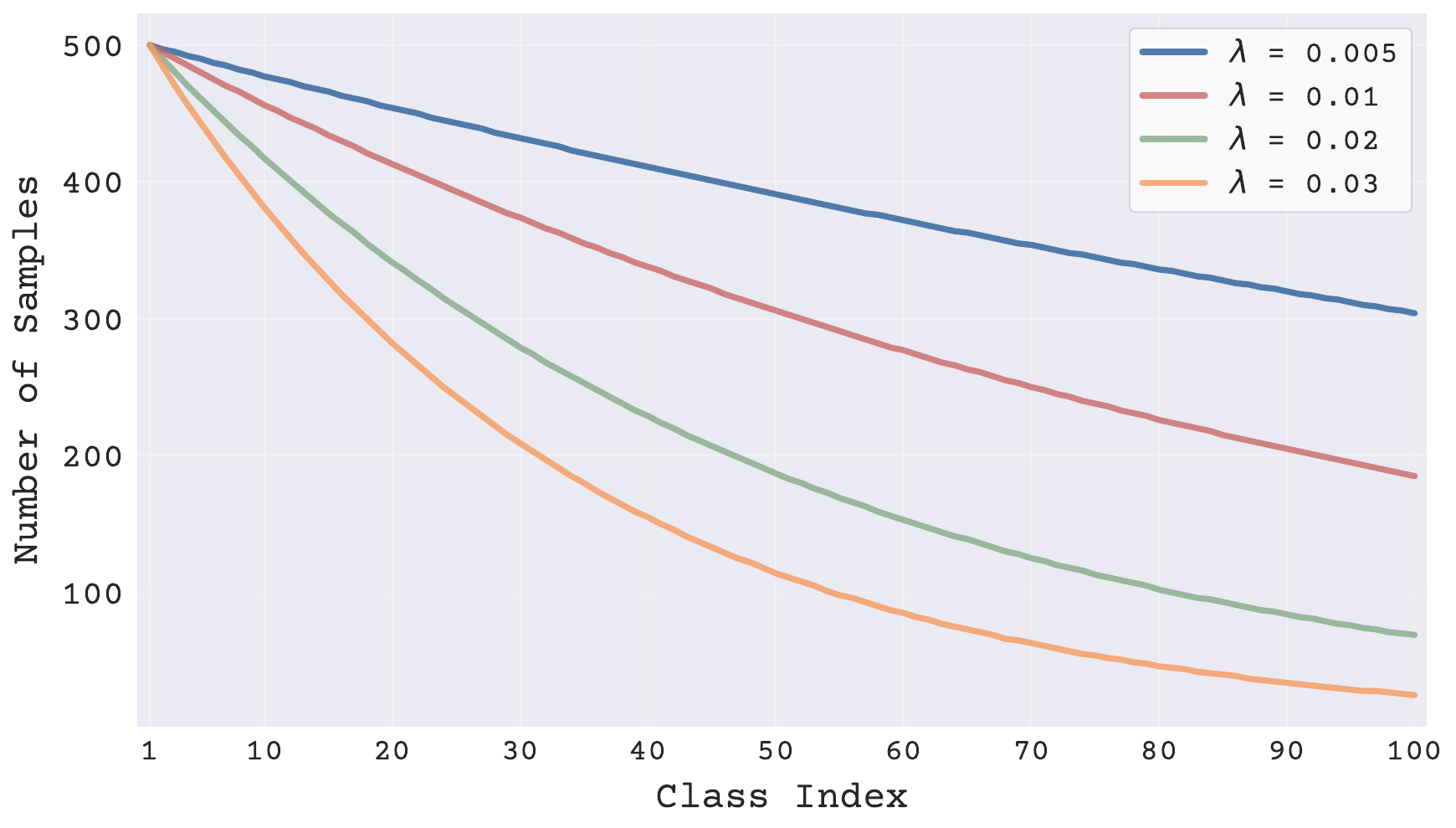}
    \caption{Class distributions under varying imbalance levels. The number of samples per class follows an exponential decay pattern proportional to $\exp(-\lambda \cdot j)$, where larger $\lambda$ values induce stronger imbalance.}
    \label{fig:imb_dist}
\end{figure}
\vspace{-8pt}

\subsection{Results for different imbalance factor $\lambda$}
\begin{table*}[ht!]
\centering
\caption{Performance comparison of \aps, \raps, \saps, and their Energy-based variants on imbalanced CIFAR-100 with varying imbalance factors ($\lambda \in \{0.01, 0.02, 0.03\}$) and at miscoverage levels ${\alpha \in \{0.01, 0.025, 0.05, 0.1\}}$. Results are averaged over 10 trials with a ResNet-56 model. For the \textbf{Set Size} column, lower is better. \textbf{Bold} values indicate the best performance within each method family. }
\label{tab:aps_raps_saps_results_imbalanced_cifar100}
\renewcommand{\arraystretch}{1.4}
\resizebox{\textwidth}{!}{%
\begin{tabular}{c l cc cc cc cc}
\toprule
& & \multicolumn{2}{c}{\textbf{$\bm{\alpha=0.1}$}} & \multicolumn{2}{c}{\textbf{$\bm{\alpha=0.05}$}} & \multicolumn{2}{c}{\textbf{$\bm{\alpha=0.025}$}} & \multicolumn{2}{c}{\textbf{$\bm{\alpha=0.01}$}} \\
\cmidrule(lr){3-4} \cmidrule(lr){5-6} \cmidrule(lr){7-8} \cmidrule(lr){9-10}
\textbf{Method} & \textbf{} & \textbf{Coverage} & \textbf{Set Size} & \textbf{Coverage} & \textbf{Set Size} & \textbf{Coverage} & \textbf{Set Size} & \textbf{Coverage} & \textbf{Set Size} \\
\addlinespace[3pt]
\midrule
\multicolumn{10}{c}{\textbf{CIFAR-100-LT ($\bm{\lambda=0.01}$, mild imbalance) \footnotesize (ResNet-56)}} \\
\midrule
\addlinespace[5pt]
& w/o \energy & \resultpm{0.90}{0.01} & \resultpm{14.56}{0.46}          & \resultpm{0.95}{0.00} & \resultpm{25.59}{0.25}          & \resultpm{0.975}{0.003} & \resultpm{37.56}{0.98}         & \resultpm{0.99}{0.00} & \resultpm{55.26}{2.49}         \\
\multirow{-2}{*}{\rotatebox[origin=c]{90}{\apsbf}} & \cellcolor{lightgray}w/ \energy  & \cellcolor{lightgray}\resultpm{0.90}{0.01} & \cellcolor{lightgray}\bresultpm{11.86}{0.37}         & \cellcolor{lightgray}\resultpm{0.95}{0.00} & \cellcolor{lightgray}\bresultpm{21.44}{0.52}         & \cellcolor{lightgray}\resultpm{0.975}{0.003} & \cellcolor{lightgray}\bresultpm{31.70}{0.97}        & \cellcolor{lightgray}\resultpm{0.99}{0.00} & \cellcolor{lightgray}\bresultpm{46.40}{0.72}        \\
\addlinespace[4pt]
\cdashline{3-10}
\addlinespace[4pt]
& w/o \energy & \resultpm{0.90}{0.01} & \resultpm{15.48}{0.47}          & \resultpm{0.95}{0.00} & \resultpm{27.62}{0.33}          & \resultpm{0.974}{0.003} & \resultpm{40.87}{1.00}         & \resultpm{0.99}{0.00} & \resultpm{60.13}{1.86}         \\
\multirow{-2}{*}{\rotatebox[origin=c]{90}{\rapsbf}} & \cellcolor{lightgray}w/ \energy  & \cellcolor{lightgray}\resultpm{0.90}{0.01} & \cellcolor{lightgray}\bresultpm{11.77}{0.42}         & \cellcolor{lightgray}\resultpm{0.95}{0.00} & \cellcolor{lightgray}\bresultpm{21.57}{0.56}         & \cellcolor{lightgray}\resultpm{0.975}{0.003} & \cellcolor{lightgray}\bresultpm{31.93}{0.98}        & \cellcolor{lightgray}\resultpm{0.99}{0.00} & \cellcolor{lightgray}\bresultpm{47.30}{1.01}        \\
\addlinespace[4pt]
\cdashline{3-10}
\addlinespace[4pt]
& w/o \energy & \resultpm{0.90}{0.01} & \resultpm{15.02}{0.50}          & \resultpm{0.95}{0.00} & \resultpm{26.98}{0.32}          & \resultpm{0.974}{0.002} & \resultpm{40.38}{1.26}         & \resultpm{0.99}{0.00} & \resultpm{59.12}{1.76}         \\
\multirow{-2}{*}{\rotatebox[origin=c]{90}{\sapsbf}} & \cellcolor{lightgray}w/ \energy  & \cellcolor{lightgray}\resultpm{0.90}{0.01} & \cellcolor{lightgray}\bresultpm{11.80}{0.40}         & \cellcolor{lightgray}\resultpm{0.95}{0.00} & \cellcolor{lightgray}\bresultpm{21.35}{0.55}         & \cellcolor{lightgray}\resultpm{0.975}{0.003} & \cellcolor{lightgray}\bresultpm{31.84}{1.02}        & \cellcolor{lightgray}\resultpm{0.99}{0.00} & \cellcolor{lightgray}\bresultpm{47.17}{1.05}        \\
\addlinespace[3pt]
\midrule
\multicolumn{10}{c}{\textbf{CIFAR-100-LT ($\bm{\lambda=0.02}$) \footnotesize (ResNet-56)}} \\
\midrule
\addlinespace[5pt]
& w/o \energy & \resultpm{0.90}{0.01} & \resultpm{29.62}{0.56}          & \resultpm{0.95}{0.01} & \resultpm{45.76}{1.32}          & \resultpm{0.975}{0.003} & \resultpm{59.26}{0.82}         & \resultpm{0.99}{0.00} & \resultpm{73.49}{1.36}         \\
\multirow{-2}{*}{\rotatebox[origin=c]{90}{\apsbf}} & \cellcolor{lightgray}w/ \energy  & \cellcolor{lightgray}\resultpm{0.90}{0.01} & \cellcolor{lightgray}\bresultpm{28.23}{0.74}         & \cellcolor{lightgray}\resultpm{0.95}{0.01} & \cellcolor{lightgray}\bresultpm{42.36}{1.13}         & \cellcolor{lightgray}\resultpm{0.976}{0.003} & \cellcolor{lightgray}\bresultpm{54.95}{0.91}        & \cellcolor{lightgray}\resultpm{0.99}{0.00} & \cellcolor{lightgray}\bresultpm{69.58}{1.92}        \\
\addlinespace[4pt]
\cdashline{3-10}
\addlinespace[4pt]
& w/o \energy & \resultpm{0.90}{0.01} & \resultpm{32.86}{0.81}          & \resultpm{0.95}{0.01} & \resultpm{52.01}{1.55}          & \resultpm{0.976}{0.003} & \resultpm{66.83}{1.18}         & \resultpm{0.99}{0.00} & \resultpm{79.97}{1.36}         \\
\multirow{-2}{*}{\rotatebox[origin=c]{90}{\rapsbf}} & \cellcolor{lightgray}w/ \energy  & \cellcolor{lightgray}\resultpm{0.90}{0.01} & \cellcolor{lightgray}\bresultpm{28.72}{0.77}         & \cellcolor{lightgray}\resultpm{0.95}{0.01} & \cellcolor{lightgray}\bresultpm{42.72}{1.02}         & \cellcolor{lightgray}\resultpm{0.976}{0.003} & \cellcolor{lightgray}\bresultpm{56.31}{0.86}        & \cellcolor{lightgray}\resultpm{0.99}{0.00} & \cellcolor{lightgray}\bresultpm{70.72}{1.56}        \\
\addlinespace[4pt]
\cdashline{3-10}
\addlinespace[4pt]
& w/o \energy & \resultpm{0.90}{0.01} & \resultpm{32.37}{0.83}          & \resultpm{0.95}{0.01} & \resultpm{51.47}{1.48}          & \resultpm{0.976}{0.003} & \resultpm{66.28}{1.21}         & \resultpm{0.99}{0.00} & \resultpm{79.14}{1.54}         \\
\multirow{-2}{*}{\rotatebox[origin=c]{90}{\sapsbf}} & \cellcolor{lightgray}w/ \energy  & \cellcolor{lightgray}\resultpm{0.90}{0.01} & \cellcolor{lightgray}\bresultpm{28.73}{0.78}         & \cellcolor{lightgray}\resultpm{0.95}{0.01} & \cellcolor{lightgray}\bresultpm{42.63}{1.09}         & \cellcolor{lightgray}\resultpm{0.976}{0.003} & \cellcolor{lightgray}\bresultpm{56.16}{0.89}        & \cellcolor{lightgray}\resultpm{0.99}{0.00} & \cellcolor{lightgray}\bresultpm{70.48}{1.56}        \\
\addlinespace[3pt]
\midrule
\multicolumn{10}{c}{\textbf{CIFAR-100-LT ($\bm{\lambda=0.03}$, severe imbalance) \footnotesize (ResNet-56)}} \\
\midrule
\addlinespace[5pt]
& w/o \energy & \resultpm{0.90}{0.01} & \resultpm{30.35}{0.57}          & \resultpm{0.95}{0.00} & \resultpm{44.45}{0.79}          & \resultpm{0.975}{0.002} & \resultpm{58.05}{0.95}         & \resultpm{0.99}{0.00} & \resultpm{71.34}{2.09}         \\
\multirow{-2}{*}{\rotatebox[origin=c]{90}{\apsbf}} & \cellcolor{lightgray}w/ \energy  & \cellcolor{lightgray}\resultpm{0.90}{0.00} & \cellcolor{lightgray}\bresultpm{28.42}{0.52}         & \cellcolor{lightgray}\resultpm{0.95}{0.00} & \cellcolor{lightgray}\bresultpm{42.40}{0.63}         & \cellcolor{lightgray}\resultpm{0.975}{0.003} & \cellcolor{lightgray}\bresultpm{55.61}{1.54}        & \cellcolor{lightgray}\resultpm{0.99}{0.00} & \cellcolor{lightgray}\bresultpm{70.85}{1.58}        \\
\addlinespace[4pt]
\cdashline{3-10}
\addlinespace[4pt]
& w/o \energy & \resultpm{0.90}{0.01} & \resultpm{34.47}{0.72}          & \resultpm{0.95}{0.01} & \resultpm{49.94}{1.14}          & \resultpm{0.975}{0.003} & \resultpm{64.66}{1.13}         & \resultpm{0.99}{0.00} & \resultpm{78.91}{1.06}         \\
\multirow{-2}{*}{\rotatebox[origin=c]{90}{\rapsbf}} & \cellcolor{lightgray}w/ \energy  & \cellcolor{lightgray}\resultpm{0.90}{0.01} & \cellcolor{lightgray}\bresultpm{29.29}{0.66}         & \cellcolor{lightgray}\resultpm{0.95}{0.00} & \cellcolor{lightgray}\bresultpm{43.68}{0.78}         & \cellcolor{lightgray}\resultpm{0.975}{0.003} & \cellcolor{lightgray}\bresultpm{57.23}{1.02}        & \cellcolor{lightgray}\resultpm{0.99}{0.00} & \cellcolor{lightgray}\bresultpm{72.74}{1.54}        \\
\addlinespace[4pt]
\cdashline{3-10}
\addlinespace[4pt]
& w/o \energy & \resultpm{0.90}{0.01} & \resultpm{34.05}{0.71}          & \resultpm{0.95}{0.01} & \resultpm{49.45}{1.33}          & \resultpm{0.974}{0.003} & \resultpm{64.35}{1.33}         & \resultpm{0.99}{0.00} & \resultpm{78.41}{1.06}         \\
\multirow{-2}{*}{\rotatebox[origin=c]{90}{\sapsbf}} & \cellcolor{lightgray}w/ \energy  & \cellcolor{lightgray}\resultpm{0.90}{0.01} & \cellcolor{lightgray}\bresultpm{29.19}{0.61}         & \cellcolor{lightgray}\resultpm{0.95}{0.01} & \cellcolor{lightgray}\bresultpm{43.61}{0.85}         & \cellcolor{lightgray}\resultpm{0.975}{0.003} & \cellcolor{lightgray}\bresultpm{57.12}{0.97}        & \cellcolor{lightgray}\resultpm{0.99}{0.00} & \cellcolor{lightgray}\bresultpm{72.48}{1.49}        \\
\addlinespace[3pt]
\bottomrule
\end{tabular}
}
\end{table*}

\subsection{Performance Under Imbalanced Calibration and Test Sets}

The results reported in Table~\ref{tab:imbalanced_results_main} and Table~\ref{tab:aps_raps_saps_results_imbalanced_cifar100} evaluate models trained on imbalanced data but calibrated and tested on balanced sets. We now consider a more realistic scenario where the calibration and test sets also follow the same imbalanced distribution as the training set.

This setting is particularly challenging for standard CP methods, as the limited number of calibration samples for minority classes can impede reliable coverage guarantees. In such cases, approaches like clustered conformal prediction~\cite{ding2024class}, which can provide coverage with fewer calibration samples, are practical alternatives.

Our energy-based method is designed to address this challenge by adaptively enlarging prediction sets for uncertain inputs, which often correspond to minority class samples. To benchmark this behavior, we compare it against another principled reweighting strategy that directly uses class priors. This approach was introduced by~\cite{ding2025conformal}, who proposed the \textbf{Prevalence-Adjusted Softmax (PAS)} score. The PAS score modifies the non-adaptive score by dividing the negative softmax probability (\lac~score without bias term) by the empirical class prior, $\hat{p}(y)$, to improve coverage for rare classes. The nonconformity score is defined as:
\begin{equation}
S_{\text{PAS}}(x, y) = \frac{-\hat{\pi}(y|x)}{\hat{p}(y)}
\end{equation}

We extend this concept to adaptive nonconformity scores such as \aps~and \raps. For these scores, a smaller value indicates higher conformity. To increase the likelihood of including labels from rare classes (which have a small $\hat{p}(y)$), we multiply the base score by the class prior. This makes the minority classes more likely to be included in the final prediction set. We refer to this method as \textbf{Prevalence-Adjusted (PA) Nonconformity Scores}. The general formulation is:
\begin{equation}
S_{\text{PA}}(x, y) = S_{\text{adaptive}}(x, y) \cdot \hat{p}(y)
\end{equation}
where $S_{\text{adaptive}}(x, y)$ is an adaptive score like $S_{\aps}(x, y)$.

Given this, Table~\ref{tab:fully_imbalanced_1} and Table~\ref{tab:fully_imbalanced_2} present a comparison between the standard adaptive scores, our energy-based variants, and the prevalence-adjusted variants in this fully imbalanced setting. We report marginal coverage, average set size, and MacroCov to provide a comprehensive view of performance.

\begin{table*}[ht!]
\centering
\caption{Performance on fully imbalanced CIFAR-100-LT for high confidence levels ($\alpha \in \{0.025, 0.01\}$). For each method, we compare the \textbf{Standard} baseline, the \textbf{Prevalence-Adj.} variant, and our \textbf{Energy-based} variant. Lower \textbf{Set Size} is better.}
\label{tab:fully_imbalanced_1}
\renewcommand{\arraystretch}{1.04}
\resizebox{\textwidth}{!}{%
\begin{tabular}{c l ccc ccc}
\toprule
& & \multicolumn{3}{c}{\textbf{$\alpha=0.025$}} & \multicolumn{3}{c}{\textbf{$\alpha=0.01$}} \\
\cmidrule(lr){3-5} \cmidrule(lr){6-8}
\textbf{Method} & \textbf{Variant} & \textbf{Cov} & \textbf{Size} & \textbf{MacroCov} & \textbf{Cov} & \textbf{Size} & \textbf{MacroCov} \\
\midrule
\multicolumn{8}{c}{\textbf{CIFAR-100-LT ($\lambda=0.005$, mild imbalance)}} \\
\midrule
\multirow{3}{*}{\textbf{LAC}} & Standard & \resultpm{0.97}{0.00} & \resultpm{21.48}{0.86} & \resultpm{0.97}{0.00} & \resultpm{0.99}{0.00} & \resultpm{34.60}{1.77} & \resultpm{0.99}{0.00} \\
& Prevalence-Adj. (PAS) & \resultpm{0.98}{0.00} & \resultpm{22.84}{0.99} & \resultpm{0.98}{0.00} & \resultpm{0.99}{0.00} & \resultpm{36.23}{0.98} & \resultpm{0.99}{0.00} \\
& \cellcolor{lightgray}Energy-based & \cellcolor{lightgray}\resultpm{0.97}{0.00} & \cellcolor{lightgray}\bresultpm{21.11}{0.79} & \cellcolor{lightgray}\resultpm{0.97}{0.00} & \cellcolor{lightgray}\resultpm{0.99}{0.00} & \cellcolor{lightgray}\bresultpm{33.18}{1.44} & \cellcolor{lightgray}\resultpm{0.99}{0.00} \\
\midrule
\multirow{3}{*}{\textbf{APS}} & Standard & \resultpm{0.97}{0.00} & \resultpm{28.61}{1.23} & \resultpm{0.97}{0.00} & \resultpm{0.99}{0.00} & \resultpm{46.37}{1.39} & \resultpm{0.99}{0.00} \\
& Prevalence-Adj. & \resultpm{0.98}{0.00} & \resultpm{29.72}{1.80} & \resultpm{0.97}{0.00} & \resultpm{0.99}{0.00} & \resultpm{50.57}{2.08} & \resultpm{0.99}{0.00} \\
& \cellcolor{lightgray}Energy-based & \cellcolor{lightgray}\resultpm{0.97}{0.00} & \cellcolor{lightgray}\bresultpm{22.09}{1.19} & \cellcolor{lightgray}\resultpm{0.97}{0.00} & \cellcolor{lightgray}\resultpm{0.99}{0.00} & \cellcolor{lightgray}\bresultpm{35.78}{1.43} & \cellcolor{lightgray}\resultpm{0.99}{0.00} \\
\midrule
\multirow{3}{*}{\textbf{RAPS}} & Standard & \resultpm{0.97}{0.00} & \resultpm{30.52}{1.44} & \resultpm{0.97}{0.00} & \resultpm{0.99}{0.00} & \resultpm{51.46}{1.84} & \resultpm{0.99}{0.00} \\
& Prevalence-Adj. & \resultpm{0.97}{0.00} & \resultpm{31.45}{1.66} & \resultpm{0.97}{0.00} & \resultpm{0.99}{0.00} & \resultpm{51.41}{1.66} & \resultpm{0.99}{0.00} \\
& \cellcolor{lightgray}Energy-based & \cellcolor{lightgray}\resultpm{0.97}{0.00} & \cellcolor{lightgray}\bresultpm{22.57}{1.08} & \cellcolor{lightgray}\resultpm{0.97}{0.00} & \cellcolor{lightgray}\resultpm{0.99}{0.00} & \cellcolor{lightgray}\bresultpm{36.75}{1.42} & \cellcolor{lightgray}\resultpm{0.99}{0.00} \\
\midrule
\multirow{3}{*}{\textbf{SAPS}} & Standard & \resultpm{0.97}{0.00} & \resultpm{29.54}{1.48} & \resultpm{0.97}{0.00} & \resultpm{0.99}{0.00} & \resultpm{50.35}{2.04} & \resultpm{0.99}{0.00} \\
& Prevalence-Adj. & \resultpm{0.97}{0.00} & \resultpm{30.41}{1.62} & \resultpm{0.97}{0.00} & \resultpm{0.99}{0.00} & \resultpm{50.50}{1.33} & \resultpm{0.99}{0.00} \\
& \cellcolor{lightgray}Energy-based & \cellcolor{lightgray}\resultpm{0.97}{0.00} & \cellcolor{lightgray}\bresultpm{22.36}{1.13} & \cellcolor{lightgray}\resultpm{0.97}{0.00} & \cellcolor{lightgray}\resultpm{0.99}{0.00} & \cellcolor{lightgray}\bresultpm{36.39}{1.23} & \cellcolor{lightgray}\resultpm{0.99}{0.00} \\
\midrule
\multicolumn{8}{c}{\textbf{CIFAR-100-LT ($\lambda=0.01$)}} \\
\midrule
\multirow{3}{*}{\textbf{LAC}} & Standard & \resultpm{0.97}{0.00} & \resultpm{30.22}{0.83} & \resultpm{0.97}{0.00} & \resultpm{0.99}{0.00} & \resultpm{45.45}{1.77} & \resultpm{0.99}{0.00} \\
& Prevalence-Adj. (PAS) & \resultpm{0.98}{0.00} & \resultpm{33.95}{1.00} & \resultpm{0.98}{0.00} & \resultpm{0.99}{0.00} & \resultpm{50.56}{1.59} & \resultpm{0.99}{0.00} \\
& \cellcolor{lightgray}Energy-based & \cellcolor{lightgray}\resultpm{0.97}{0.00} & \cellcolor{lightgray}\bresultpm{29.93}{0.99} & \cellcolor{lightgray}\resultpm{0.97}{0.00} & \cellcolor{lightgray}\resultpm{0.99}{0.00} & \cellcolor{lightgray}\bresultpm{43.94}{1.34} & \cellcolor{lightgray}\resultpm{0.99}{0.00} \\
\midrule
\multirow{3}{*}{\textbf{APS}} & Standard & \resultpm{0.97}{0.00} & \resultpm{36.97}{1.61} & \resultpm{0.97}{0.00} & \resultpm{0.99}{0.00} & \resultpm{54.90}{2.32} & \resultpm{0.99}{0.00} \\
& Prevalence-Adj. & \resultpm{0.98}{0.00} & \resultpm{43.29}{2.15} & \resultpm{0.98}{0.00} & \resultpm{0.99}{0.00} & \resultpm{66.30}{2.14} & \resultpm{0.99}{0.00} \\
& \cellcolor{lightgray}Energy-based & \cellcolor{lightgray}\resultpm{0.97}{0.00} & \cellcolor{lightgray}\bresultpm{31.20}{1.12} & \cellcolor{lightgray}\resultpm{0.97}{0.00} & \cellcolor{lightgray}\resultpm{0.99}{0.00} & \cellcolor{lightgray}\bresultpm{45.15}{1.81} & \cellcolor{lightgray}\resultpm{0.99}{0.00} \\
\midrule
\multirow{3}{*}{\textbf{RAPS}} & Standard & \resultpm{0.97}{0.00} & \resultpm{40.39}{1.29} & \resultpm{0.97}{0.00} & \resultpm{0.99}{0.00} & \resultpm{60.09}{2.93} & \resultpm{0.99}{0.00} \\
& Prevalence-Adj. & \resultpm{0.98}{0.00} & \resultpm{43.21}{2.15} & \resultpm{0.98}{0.00} & \resultpm{0.99}{0.00} & \resultpm{65.82}{2.35} & \resultpm{0.99}{0.00} \\
& \cellcolor{lightgray}Energy-based & \cellcolor{lightgray}\resultpm{0.97}{0.00} & \cellcolor{lightgray}\bresultpm{31.64}{1.24} & \cellcolor{lightgray}\resultpm{0.97}{0.00} & \cellcolor{lightgray}\resultpm{0.99}{0.00} & \cellcolor{lightgray}\bresultpm{45.99}{1.67} & \cellcolor{lightgray}\resultpm{0.99}{0.00} \\
\midrule
\multirow{3}{*}{\textbf{SAPS}} & Standard & \resultpm{0.97}{0.00} & \resultpm{39.81}{1.53} & \resultpm{0.97}{0.00} & \resultpm{0.99}{0.00} & \resultpm{59.11}{2.61} & \resultpm{0.99}{0.00} \\
& Prevalence-Adj. & \resultpm{0.98}{0.00} & \resultpm{42.34}{2.12} & \resultpm{0.98}{0.00} & \resultpm{0.99}{0.00} & \resultpm{65.18}{2.45} & \resultpm{0.99}{0.00} \\
& \cellcolor{lightgray}Energy-based & \cellcolor{lightgray}\resultpm{0.97}{0.01} & \cellcolor{lightgray}\bresultpm{31.46}{1.26} & \cellcolor{lightgray}\resultpm{0.97}{0.01} & \cellcolor{lightgray}\resultpm{0.99}{0.00} & \cellcolor{lightgray}\bresultpm{45.73}{1.71} & \cellcolor{lightgray}\resultpm{0.99}{0.00} \\
\midrule
\multicolumn{8}{c}{\textbf{CIFAR-100-LT ($\lambda=0.02$)}} \\
\midrule
\multirow{3}{*}{\textbf{LAC}} & Standard & \resultpm{0.97}{0.00} & \resultpm{52.86}{1.54} & \resultpm{0.97}{0.00} & \resultpm{0.99}{0.00} & \resultpm{66.53}{1.36} & \resultpm{0.99}{0.00} \\
& Prevalence-Adj. (PAS) & \resultpm{0.98}{0.00} & \resultpm{61.74}{1.37} & \resultpm{0.98}{0.00} & \resultpm{0.99}{0.00} & \resultpm{75.07}{1.20} & \resultpm{0.99}{0.00} \\
& \cellcolor{lightgray}Energy-based & \cellcolor{lightgray}\resultpm{0.97}{0.00} & \cellcolor{lightgray}\bresultpm{51.96}{1.30} & \cellcolor{lightgray}\resultpm{0.97}{0.00} & \cellcolor{lightgray}\resultpm{0.99}{0.00} & \cellcolor{lightgray}\bresultpm{65.48}{2.27} & \cellcolor{lightgray}\resultpm{0.99}{0.00} \\
\midrule
\multirow{3}{*}{\textbf{APS}} & Standard & \resultpm{0.96}{0.00} & \resultpm{56.87}{1.30} & \resultpm{0.97}{0.00} & \resultpm{0.99}{0.00} & \resultpm{71.19}{0.92} & \resultpm{0.99}{0.00} \\
& Prevalence-Adj. & \resultpm{0.99}{0.00} & \resultpm{74.42}{0.39} & \resultpm{0.98}{0.00} & \resultpm{1.00}{0.00} & \resultpm{86.18}{0.30} & \resultpm{0.99}{0.00} \\
& \cellcolor{lightgray}Energy-based & \cellcolor{lightgray}\resultpm{0.97}{0.00} & \cellcolor{lightgray}\bresultpm{53.13}{1.53} & \cellcolor{lightgray}\resultpm{0.97}{0.00} & \cellcolor{lightgray}\resultpm{0.99}{0.00} & \cellcolor{lightgray}\bresultpm{68.02}{1.60} & \cellcolor{lightgray}\resultpm{0.99}{0.00} \\
\midrule
\multirow{3}{*}{\textbf{RAPS}} & Standard & \resultpm{0.96}{0.00} & \resultpm{61.92}{1.24} & \resultpm{0.97}{0.00} & \resultpm{0.99}{0.00} & \resultpm{76.89}{1.40} & \resultpm{0.99}{0.00} \\
& Prevalence-Adj. & \resultpm{0.99}{0.00} & \resultpm{73.38}{0.28} & \resultpm{0.98}{0.00} & \resultpm{1.00}{0.00} & \resultpm{86.52}{0.40} & \resultpm{0.99}{0.00} \\
& \cellcolor{lightgray}Energy-based & \cellcolor{lightgray}\resultpm{0.97}{0.00} & \cellcolor{lightgray}\bresultpm{54.13}{1.37} & \cellcolor{lightgray}\resultpm{0.97}{0.00} & \cellcolor{lightgray}\resultpm{0.99}{0.00} & \cellcolor{lightgray}\bresultpm{69.14}{1.92} & \cellcolor{lightgray}\resultpm{0.99}{0.00} \\
\midrule
\multirow{3}{*}{\textbf{SAPS}} & Standard & \resultpm{0.96}{0.00} & \resultpm{61.42}{1.18} & \resultpm{0.97}{0.00} & \resultpm{0.99}{0.00} & \resultpm{76.01}{1.11} & \resultpm{0.99}{0.00} \\
& Prevalence-Adj. & \resultpm{0.99}{0.00} & \resultpm{73.12}{0.36} & \resultpm{0.98}{0.00} & \resultpm{1.00}{0.00} & \resultpm{86.15}{0.29} & \resultpm{0.99}{0.00} \\
& \cellcolor{lightgray}Energy-based & \cellcolor{lightgray}\resultpm{0.97}{0.00} & \cellcolor{lightgray}\bresultpm{54.02}{1.58} & \cellcolor{lightgray}\resultpm{0.97}{0.00} & \cellcolor{lightgray}\resultpm{0.99}{0.00} & \cellcolor{lightgray}\bresultpm{69.01}{1.67} & \cellcolor{lightgray}\resultpm{0.99}{0.00} \\
\midrule
\multicolumn{8}{c}{\textbf{CIFAR-100-LT ($\lambda=0.03$, severe imbalance)}} \\
\midrule
\multirow{3}{*}{\textbf{LAC}} & Standard & \resultpm{0.96}{0.00} & \resultpm{51.39}{1.25} & \resultpm{0.97}{0.00} & \resultpm{0.98}{0.00} & \resultpm{67.76}{0.84} & \resultpm{0.99}{0.00} \\
& Prevalence-Adj. (PAS) & \resultpm{0.98}{0.00} & \resultpm{62.93}{0.99} & \resultpm{0.98}{0.00} & \resultpm{1.00}{0.00} & \resultpm{75.03}{0.46} & \resultpm{0.99}{0.00} \\
& \cellcolor{lightgray}Energy-based & \cellcolor{lightgray}\resultpm{0.96}{0.00} & \cellcolor{lightgray}\bresultpm{50.77}{1.35} & \cellcolor{lightgray}\resultpm{0.97}{0.00} & \cellcolor{lightgray}\resultpm{0.98}{0.00} & \cellcolor{lightgray}\bresultpm{66.87}{0.95} & \cellcolor{lightgray}\resultpm{0.99}{0.00} \\
\midrule
\multirow{3}{*}{\textbf{APS}} & Standard & \resultpm{0.96}{0.00} & \resultpm{53.95}{1.21} & \resultpm{0.97}{0.00} & \resultpm{0.99}{0.00} & \resultpm{69.98}{1.29} & \resultpm{0.99}{0.00} \\
& Prevalence-Adj. & \resultpm{0.99}{0.00} & \resultpm{79.13}{0.69} & \resultpm{0.98}{0.00} & \resultpm{1.00}{0.00} & \resultpm{86.14}{0.28} & \resultpm{0.99}{0.00} \\
& \cellcolor{lightgray}Energy-based & \cellcolor{lightgray}\resultpm{0.96}{0.01} & \cellcolor{lightgray}\bresultpm{52.00}{1.48} & \cellcolor{lightgray}\resultpm{0.97}{0.00} & \cellcolor{lightgray}\resultpm{0.99}{0.00} & \cellcolor{lightgray}\bresultpm{69.77}{1.34} & \cellcolor{lightgray}\resultpm{0.99}{0.00} \\
\midrule
\multirow{3}{*}{\textbf{RAPS}} & Standard & \resultpm{0.96}{0.00} & \resultpm{60.09}{1.28} & \resultpm{0.97}{0.00} & \resultpm{0.98}{0.00} & \resultpm{75.77}{1.34} & \resultpm{0.99}{0.00} \\
& Prevalence-Adj. & \resultpm{0.99}{0.00} & \resultpm{78.45}{0.47} & \resultpm{0.98}{0.00} & \resultpm{1.00}{0.00} & \resultpm{86.09}{0.42} & \resultpm{0.99}{0.00} \\
& \cellcolor{lightgray}Energy-based & \cellcolor{lightgray}\resultpm{0.96}{0.01} & \cellcolor{lightgray}\bresultpm{53.67}{1.54} & \cellcolor{lightgray}\resultpm{0.97}{0.00} & \cellcolor{lightgray}\resultpm{0.99}{0.00} & \cellcolor{lightgray}\bresultpm{71.64}{1.17} & \cellcolor{lightgray}\resultpm{0.99}{0.00} \\
\midrule
\multirow{3}{*}{\textbf{SAPS}} & Standard & \resultpm{0.96}{0.00} & \resultpm{59.61}{1.28} & \resultpm{0.97}{0.00} & \resultpm{0.98}{0.00} & \resultpm{75.59}{1.15} & \resultpm{0.99}{0.00} \\
& Prevalence-Adj. & \resultpm{0.99}{0.00} & \resultpm{78.38}{0.51} & \resultpm{0.98}{0.00} & \resultpm{1.00}{0.00} & \resultpm{86.02}{0.33} & \resultpm{0.99}{0.00} \\
& \cellcolor{lightgray}Energy-based & \cellcolor{lightgray}\resultpm{0.96}{0.00} & \cellcolor{lightgray}\bresultpm{53.68}{1.51} & \cellcolor{lightgray}\resultpm{0.97}{0.00} & \cellcolor{lightgray}\resultpm{0.98}{0.00} & \cellcolor{lightgray}\bresultpm{71.72}{1.30} & \cellcolor{lightgray}\resultpm{0.99}{0.00} \\
\bottomrule
\end{tabular}
}
\end{table*}

\clearpage

\begin{table*}[ht!]
\centering
\caption{Performance on fully imbalanced CIFAR-100-LT for lower confidence levels ($\alpha \in \{0.1, 0.05\}$). For each method, we compare the \textbf{Standard} baseline, the \textbf{Prevalence-Adj.} variant, and our \textbf{Energy-based} variant. Lower \textbf{Set Size} is better.}
\label{tab:fully_imbalanced_2}
\renewcommand{\arraystretch}{1.04}
\resizebox{\textwidth}{!}{%
\begin{tabular}{c l ccc ccc}
\toprule
& & \multicolumn{3}{c}{\textbf{$\alpha=0.1$}} & \multicolumn{3}{c}{\textbf{$\alpha=0.05$}} \\
\cmidrule(lr){3-5} \cmidrule(lr){6-8}
\textbf{Method} & \textbf{Variant} & \textbf{Cov} & \textbf{Size} & \textbf{MacroCov} & \textbf{Cov} & \textbf{Size} & \textbf{MacroCov} \\
\midrule
\multicolumn{8}{c}{\textbf{CIFAR-100-LT ($\lambda=0.005$, mild imbalance)}} \\
\midrule
\multirow{3}{*}{\textbf{LAC}} & Standard & \resultpm{0.90}{0.01} & \resultpm{7.12}{0.26} & \resultpm{0.90}{0.01} & \resultpm{0.95}{0.01} & \resultpm{13.08}{0.55} & \resultpm{0.95}{0.01} \\
& Prevalence-Adj. (PAS) & \resultpm{0.90}{0.01} & \resultpm{7.26}{0.30} & \resultpm{0.90}{0.01} & \resultpm{0.95}{0.01} & \resultpm{13.39}{0.60} & \resultpm{0.95}{0.01} \\
& \cellcolor{lightgray}Energy-based & \cellcolor{lightgray}\resultpm{0.90}{0.01} & \cellcolor{lightgray}\bresultpm{6.96}{0.23} & \cellcolor{lightgray}\resultpm{0.90}{0.01} & \cellcolor{lightgray}\resultpm{0.95}{0.01} & \cellcolor{lightgray}\bresultpm{12.74}{0.54} & \cellcolor{lightgray}\resultpm{0.95}{0.01} \\
\midrule
\multirow{3}{*}{\textbf{APS}} & Standard & \resultpm{0.90}{0.01} & \resultpm{8.48}{0.42} & \resultpm{0.90}{0.01} & \resultpm{0.95}{0.01} & \resultpm{17.55}{1.03} & \resultpm{0.95}{0.01} \\
& Prevalence-Adj. & \resultpm{0.90}{0.01} & \resultpm{8.81}{0.35} & \resultpm{0.90}{0.01} & \resultpm{0.95}{0.01} & \resultpm{17.95}{0.82} & \resultpm{0.95}{0.01} \\
& \cellcolor{lightgray}Energy-based & \cellcolor{lightgray}\resultpm{0.90}{0.01} & \cellcolor{lightgray}\bresultpm{7.39}{0.22} & \cellcolor{lightgray}\resultpm{0.90}{0.01} & \cellcolor{lightgray}\resultpm{0.95}{0.01} & \cellcolor{lightgray}\bresultpm{13.42}{0.56} & \cellcolor{lightgray}\resultpm{0.95}{0.01} \\
\midrule
\multirow{3}{*}{\textbf{RAPS}} & Standard & \resultpm{0.90}{0.01} & \resultpm{8.95}{0.44} & \resultpm{0.90}{0.01} & \resultpm{0.95}{0.01} & \resultpm{18.89}{1.06} & \resultpm{0.95}{0.01} \\
& Prevalence-Adj. & \resultpm{0.90}{0.01} & \resultpm{9.14}{0.46} & \resultpm{0.90}{0.01} & \resultpm{0.95}{0.01} & \resultpm{19.19}{0.89} & \resultpm{0.95}{0.01} \\
& \cellcolor{lightgray}Energy-based & \cellcolor{lightgray}\resultpm{0.90}{0.01} & \cellcolor{lightgray}\bresultpm{7.58}{0.25} & \cellcolor{lightgray}\resultpm{0.90}{0.01} & \cellcolor{lightgray}\resultpm{0.95}{0.01} & \cellcolor{lightgray}\bresultpm{13.33}{0.56} & \cellcolor{lightgray}\resultpm{0.95}{0.01} \\
\midrule
\multirow{3}{*}{\textbf{SAPS}} & Standard & \resultpm{0.90}{0.01} & \resultpm{8.63}{0.47} & \resultpm{0.90}{0.01} & \resultpm{0.95}{0.01} & \resultpm{18.22}{1.09} & \resultpm{0.95}{0.01} \\
& Prevalence-Adj. & \resultpm{0.90}{0.01} & \resultpm{8.83}{0.46} & \resultpm{0.90}{0.01} & \resultpm{0.95}{0.01} & \resultpm{18.50}{0.91} & \resultpm{0.95}{0.01} \\
& \cellcolor{lightgray}Energy-based & \cellcolor{lightgray}\resultpm{0.90}{0.01} & \cellcolor{lightgray}\bresultpm{7.53}{0.28} & \cellcolor{lightgray}\resultpm{0.90}{0.01} & \cellcolor{lightgray}\resultpm{0.95}{0.00} & \cellcolor{lightgray}\bresultpm{13.30}{0.52} & \cellcolor{lightgray}\resultpm{0.95}{0.00} \\
\midrule
\multicolumn{8}{c}{\textbf{CIFAR-100-LT ($\lambda=0.01$)}} \\
\midrule
\multirow{3}{*}{\textbf{LAC}} & Standard & \resultpm{0.89}{0.01} & \resultpm{11.37}{0.28} & \resultpm{0.89}{0.01} & \resultpm{0.95}{0.01} & \resultpm{20.09}{0.71} & \resultpm{0.95}{0.01} \\
& Prevalence-Adj. (PAS) & \resultpm{0.91}{0.01} & \resultpm{12.20}{0.24} & \resultpm{0.90}{0.01} & \resultpm{0.96}{0.00} & \resultpm{22.03}{0.63} & \resultpm{0.95}{0.00} \\
& \cellcolor{lightgray}Energy-based & \cellcolor{lightgray}\resultpm{0.89}{0.01} & \cellcolor{lightgray}\bresultpm{11.12}{0.29} & \cellcolor{lightgray}\resultpm{0.89}{0.01} & \cellcolor{lightgray}\resultpm{0.95}{0.01} & \cellcolor{lightgray}\bresultpm{19.73}{0.70} & \cellcolor{lightgray}\resultpm{0.95}{0.01} \\
\midrule
\multirow{3}{*}{\textbf{APS}} & Standard & \resultpm{0.89}{0.00} & \resultpm{13.76}{0.30} & \resultpm{0.90}{0.00} & \resultpm{0.95}{0.01} & \resultpm{25.34}{0.95} & \resultpm{0.95}{0.01} \\
& Prevalence-Adj. & \resultpm{0.91}{0.01} & \resultpm{14.68}{0.36} & \resultpm{0.90}{0.01} & \resultpm{0.96}{0.01} & \resultpm{26.62}{0.96} & \resultpm{0.95}{0.01} \\
& \cellcolor{lightgray}Energy-based & \cellcolor{lightgray}\resultpm{0.89}{0.00} & \cellcolor{lightgray}\bresultpm{11.29}{0.22} & \cellcolor{lightgray}\resultpm{0.89}{0.00} & \cellcolor{lightgray}\resultpm{0.95}{0.01} & \cellcolor{lightgray}\bresultpm{20.80}{0.70} & \cellcolor{lightgray}\resultpm{0.95}{0.01} \\
\midrule
\multirow{3}{*}{\textbf{RAPS}} & Standard & \resultpm{0.89}{0.00} & \resultpm{14.72}{0.19} & \resultpm{0.90}{0.00} & \resultpm{0.95}{0.01} & \resultpm{26.91}{1.03} & \resultpm{0.95}{0.01} \\
& Prevalence-Adj. & \resultpm{0.90}{0.01} & \resultpm{15.03}{0.54} & \resultpm{0.90}{0.01} & \resultpm{0.95}{0.00} & \resultpm{27.14}{0.84} & \resultpm{0.95}{0.00} \\
& \cellcolor{lightgray}Energy-based & \cellcolor{lightgray}\resultpm{0.89}{0.00} & \cellcolor{lightgray}\bresultpm{11.22}{0.18} & \cellcolor{lightgray}\resultpm{0.89}{0.00} & \cellcolor{lightgray}\resultpm{0.95}{0.01} & \cellcolor{lightgray}\bresultpm{20.95}{0.77} & \cellcolor{lightgray}\resultpm{0.95}{0.01} \\
\midrule
\multirow{3}{*}{\textbf{SAPS}} & Standard & \resultpm{0.89}{0.00} & \resultpm{14.22}{0.19} & \resultpm{0.90}{0.00} & \resultpm{0.95}{0.01} & \resultpm{26.19}{0.98} & \resultpm{0.95}{0.01} \\
& Prevalence-Adj. & \resultpm{0.90}{0.01} & \resultpm{14.55}{0.43} & \resultpm{0.90}{0.01} & \resultpm{0.95}{0.00} & \resultpm{26.65}{0.84} & \resultpm{0.95}{0.00} \\
& \cellcolor{lightgray}Energy-based & \cellcolor{lightgray}\resultpm{0.89}{0.00} & \cellcolor{lightgray}\bresultpm{11.26}{0.17} & \cellcolor{lightgray}\resultpm{0.89}{0.00} & \cellcolor{lightgray}\resultpm{0.95}{0.01} & \cellcolor{lightgray}\bresultpm{20.73}{0.80} & \cellcolor{lightgray}\resultpm{0.95}{0.01} \\
\midrule
\multicolumn{8}{c}{\textbf{CIFAR-100-LT ($\lambda=0.02$)}} \\
\midrule
\multirow{3}{*}{\textbf{LAC}} & Standard & \resultpm{0.86}{0.01} & \resultpm{24.97}{0.67} & \resultpm{0.88}{0.01} & \resultpm{0.93}{0.01} & \resultpm{38.81}{1.07} & \resultpm{0.94}{0.01} \\
& Prevalence-Adj. (PAS) & \resultpm{0.91}{0.00} & \resultpm{30.25}{0.46} & \resultpm{0.91}{0.00} & \resultpm{0.96}{0.00} & \resultpm{46.98}{1.15} & \resultpm{0.96}{0.00} \\
& \cellcolor{lightgray}Energy-based & \cellcolor{lightgray}\resultpm{0.86}{0.01} & \cellcolor{lightgray}\bresultpm{24.65}{0.67} & \cellcolor{lightgray}\resultpm{0.88}{0.01} & \cellcolor{lightgray}\resultpm{0.93}{0.01} & \cellcolor{lightgray}\bresultpm{38.46}{0.94} & \cellcolor{lightgray}\resultpm{0.94}{0.01} \\
\midrule
\multirow{3}{*}{\textbf{APS}} & Standard & \resultpm{0.87}{0.01} & \resultpm{26.69}{0.71} & \resultpm{0.88}{0.01} & \resultpm{0.93}{0.01} & \resultpm{42.41}{1.32} & \resultpm{0.94}{0.01} \\
& Prevalence-Adj. & \resultpm{0.92}{0.00} & \resultpm{37.05}{0.63} & \resultpm{0.91}{0.00} & \resultpm{0.97}{0.00} & \resultpm{60.22}{0.92} & \resultpm{0.96}{0.00} \\
& \cellcolor{lightgray}Energy-based & \cellcolor{lightgray}\resultpm{0.86}{0.01} & \cellcolor{lightgray}\bresultpm{25.07}{0.76} & \cellcolor{lightgray}\resultpm{0.88}{0.01} & \cellcolor{lightgray}\resultpm{0.93}{0.01} & \cellcolor{lightgray}\bresultpm{39.10}{0.96} & \cellcolor{lightgray}\resultpm{0.94}{0.01} \\
\midrule
\multirow{3}{*}{\textbf{RAPS}} & Standard & \resultpm{0.86}{0.01} & \resultpm{28.44}{0.78} & \resultpm{0.88}{0.01} & \resultpm{0.93}{0.01} & \resultpm{46.22}{1.45} & \resultpm{0.94}{0.01} \\
& Prevalence-Adj. & \resultpm{0.92}{0.00} & \resultpm{35.42}{0.69} & \resultpm{0.91}{0.00} & \resultpm{0.97}{0.00} & \resultpm{59.12}{0.77} & \resultpm{0.96}{0.00} \\
& \cellcolor{lightgray}Energy-based & \cellcolor{lightgray}\resultpm{0.86}{0.01} & \cellcolor{lightgray}\bresultpm{25.37}{0.70} & \cellcolor{lightgray}\resultpm{0.88}{0.01} & \cellcolor{lightgray}\resultpm{0.93}{0.01} & \cellcolor{lightgray}\bresultpm{39.63}{0.98} & \cellcolor{lightgray}\resultpm{0.94}{0.01} \\
\midrule
\multirow{3}{*}{\textbf{SAPS}} & Standard & \resultpm{0.86}{0.01} & \resultpm{28.26}{0.76} & \resultpm{0.88}{0.01} & \resultpm{0.93}{0.01} & \resultpm{45.73}{1.28} & \resultpm{0.94}{0.01} \\
& Prevalence-Adj. & \resultpm{0.92}{0.00} & \resultpm{35.69}{0.66} & \resultpm{0.91}{0.00} & \resultpm{0.97}{0.00} & \resultpm{59.11}{0.95} & \resultpm{0.96}{0.00} \\
& \cellcolor{lightgray}Energy-based & \cellcolor{lightgray}\resultpm{0.86}{0.01} & \cellcolor{lightgray}\bresultpm{25.41}{0.71} & \cellcolor{lightgray}\resultpm{0.88}{0.01} & \cellcolor{lightgray}\resultpm{0.93}{0.01} & \cellcolor{lightgray}\bresultpm{39.72}{1.00} & \cellcolor{lightgray}\resultpm{0.94}{0.01} \\
\midrule
\multicolumn{8}{c}{\textbf{CIFAR-100-LT ($\lambda=0.03$, severe imbalance)}} \\
\midrule
\multirow{3}{*}{\textbf{LAC}} & Standard & \resultpm{0.84}{0.01} & \resultpm{24.43}{0.63} & \resultpm{0.87}{0.01} & \resultpm{0.92}{0.01} & \resultpm{38.72}{0.77} & \resultpm{0.94}{0.01} \\
& Prevalence-Adj. (PAS) & \resultpm{0.90}{0.01} & \resultpm{34.18}{1.00} & \resultpm{0.90}{0.01} & \resultpm{0.95}{0.01} & \resultpm{49.60}{1.43} & \resultpm{0.95}{0.01} \\
& \cellcolor{lightgray}Energy-based & \cellcolor{lightgray}\resultpm{0.84}{0.01} & \cellcolor{lightgray}\bresultpm{24.06}{0.64} & \cellcolor{lightgray}\resultpm{0.87}{0.01} & \cellcolor{lightgray}\resultpm{0.92}{0.01} & \cellcolor{lightgray}\bresultpm{38.04}{0.90} & \cellcolor{lightgray}\resultpm{0.94}{0.01} \\
\midrule
\multirow{3}{*}{\textbf{APS}} & Standard & \resultpm{0.85}{0.01} & \resultpm{25.64}{0.94} & \resultpm{0.88}{0.01} & \resultpm{0.92}{0.01} & \resultpm{39.43}{1.23} & \resultpm{0.94}{0.01} \\
& Prevalence-Adj. & \resultpm{0.95}{0.00} & \resultpm{48.47}{0.65} & \resultpm{0.93}{0.00} & \resultpm{0.98}{0.00} & \resultpm{66.59}{0.73} & \resultpm{0.97}{0.01} \\
& \cellcolor{lightgray}Energy-based & \cellcolor{lightgray}\resultpm{0.84}{0.01} & \cellcolor{lightgray}\bresultpm{24.43}{0.88} & \cellcolor{lightgray}\resultpm{0.87}{0.01} & \cellcolor{lightgray}\resultpm{0.92}{0.01} & \cellcolor{lightgray}\bresultpm{38.78}{1.12} & \cellcolor{lightgray}\resultpm{0.94}{0.01} \\
\midrule
\multirow{3}{*}{\textbf{RAPS}} & Standard & \resultpm{0.85}{0.01} & \resultpm{28.00}{1.00} & \resultpm{0.87}{0.01} & \resultpm{0.92}{0.01} & \resultpm{43.81}{1.37} & \resultpm{0.94}{0.01} \\
& Prevalence-Adj. & \resultpm{0.95}{0.01} & \resultpm{46.46}{0.88} & \resultpm{0.93}{0.01} & \resultpm{0.98}{0.00} & \resultpm{65.14}{0.70} & \resultpm{0.97}{0.01} \\
& \cellcolor{lightgray}Energy-based & \cellcolor{lightgray}\resultpm{0.84}{0.01} & \cellcolor{lightgray}\bresultpm{24.74}{0.69} & \cellcolor{lightgray}\resultpm{0.87}{0.01} & \cellcolor{lightgray}\resultpm{0.92}{0.01} & \cellcolor{lightgray}\bresultpm{39.74}{1.17} & \cellcolor{lightgray}\resultpm{0.94}{0.01} \\
\midrule
\multirow{3}{*}{\textbf{SAPS}} & Standard & \resultpm{0.85}{0.01} & \resultpm{27.64}{0.85} & \resultpm{0.88}{0.01} & \resultpm{0.92}{0.01} & \resultpm{43.29}{1.47} & \resultpm{0.94}{0.01} \\
& Prevalence-Adj. & \resultpm{0.95}{0.00} & \resultpm{46.81}{0.81} & \resultpm{0.93}{0.01} & \resultpm{0.98}{0.00} & \resultpm{65.25}{0.69} & \resultpm{0.97}{0.01} \\
& \cellcolor{lightgray}Energy-based & \cellcolor{lightgray}\resultpm{0.84}{0.01} & \cellcolor{lightgray}\bresultpm{24.76}{0.63} & \cellcolor{lightgray}\resultpm{0.87}{0.01} & \cellcolor{lightgray}\resultpm{0.92}{0.01} & \cellcolor{lightgray}\bresultpm{39.73}{1.24} & \cellcolor{lightgray}\resultpm{0.94}{0.01} \\
\bottomrule
\end{tabular}
}
\end{table*}

\newpage
\section{Class-Conditional Conformal Prediction}
\label{sec:exp_class_conditional}
Beyond marginal coverage guarantee, we evaluate the proposed Energy-based nonconformity scores within the class-conditional setting. Class-conditional Conformal Prediction (CP) operates by partitioning the calibration dataset according to the true labels. Nonconformity score quantiles are then computed independently for each class using its respective calibration subset. The objective is to achieve \emph{class-conditional coverage}, defined as:
\begin{equation}
\label{eq:class-conditional-coverage}
\P(\Ytest \in \cC(\Xtest) \mid \Ytest = y) \geq 1- \alpha, \quad \text{for all }
y \in \cY,
\end{equation}
This condition ensures that for every class $y \in \mathcal{Y}$, the probability of the true label being included in the prediction set $\mathcal{C}(\mathbf{X}_{\text{test}})$ is at least $1-\alpha$.

We evaluate performance on the Places365 dataset at miscoverage levels $\alpha \in \{0.05, 0.1\}$ using average set size, CovGap, and SSCV (for details of these refer to \ref{sec:evaluation_metrics}. As shown in Table~\ref{tab:class_conditional_places365}, Energy-based variants consistently yield more efficient prediction sets, reflected in reduced average set sizes across all base nonconformity functions. Importantly, this improvement in efficiency does not compromise class-conditional validity and CovGap is preserved or slightly improved in most cases. While we report SSCV for completeness, given its use in prior work, and note that it is often maintained or even improved in our experiments, we emphasize that it is not a reliable measure of conditional coverage quality.

\vspace{-4pt}

\begin{table}[ht!]
\centering
\caption{Class-conditional performance comparison of different nonconformity score functions and their Energy-based variants on the Places365 dataset at miscoverage levels ${\alpha = 0.05}$ and ${\alpha = 0.1}$. For Set Size, CovGap, and SSCV, lower values indicate better performance. \textbf{Bold} values denote the best result within each method family. Results are averaged over 10 trials with a ResNet-50 model.
}
\label{tab:class_conditional_places365}
\renewcommand{\arraystretch}{1.5}
\resizebox{\textwidth}{!}{
\begin{tabular}{c l cccc cccc}
\toprule
& & \multicolumn{4}{c}{\textbf{$\bm{\alpha=0.1}$}} & \multicolumn{4}{c}{\textbf{$\bm{\alpha=0.05}$}} \\
\cmidrule(lr){3-6} \cmidrule(lr){7-10}
\textbf{Method} & \textbf{} & \textbf{Coverage} & \textbf{Set Size $\downarrow$} & \textbf{CovGap $\downarrow$} & \textbf{SSCV $\downarrow$} & \textbf{Coverage} & \textbf{Set Size $\downarrow$} & \textbf{CovGap $\downarrow$} & \textbf{SSCV $\downarrow$} \\
\midrule

\addlinespace[5pt]
& w/o \energy & \resultpm{0.89}{0.00} & \resultpm{9.13}{0.14} & \bresultpm{5.03}{0.14} & \resultpm{0.119}{0.015} & \resultpm{0.95}{0.00} & \resultpm{21.50}{0.59} & \resultpm{3.25}{0.12} & \resultpm{0.124}{0.039} \\
\multirow{-2}{*}{\rotatebox[origin=c]{90}\apsbf} & \cellcolor{lightgray}w/ \energy & \cellcolor{lightgray}\resultpm{0.89}{0.00} & \cellcolor{lightgray}\bresultpm{8.65}{0.17} & \cellcolor{lightgray}\resultpm{5.09}{0.18} & \cellcolor{lightgray}\bresultpm{0.100}{0.00} & \cellcolor{lightgray}\resultpm{0.95}{0.00} & \cellcolor{lightgray}\bresultpm{19.20}{0.49} & \cellcolor{lightgray}\bresultpm{3.24}{0.10} & \cellcolor{lightgray}\bresultpm{0.087}{0.027} \\
\addlinespace[5pt]
\cdashline{3-10}
\addlinespace[5pt]
& w/o \energy & \resultpm{0.89}{0.00} & \resultpm{9.11}{0.18} & \resultpm{4.98}{0.15} & \bresultpm{0.100}{0.00} & \resultpm{0.95}{0.00} & \resultpm{22.03}{0.59} & \resultpm{3.24}{0.13} & \bresultpm{0.090}{0.024} \\
\multirow{-2}{*}{\rotatebox[origin=c]{90}\rapsbf} & \cellcolor{lightgray}w/ \energy & \cellcolor{lightgray}\resultpm{0.89}{0.00} & \cellcolor{lightgray}\bresultpm{8.59}{0.17} & \cellcolor{lightgray}\bresultpm{4.95}{0.19} & \cellcolor{lightgray}\resultpm{0.112}{0.011} & \cellcolor{lightgray}\resultpm{0.95}{0.00} & \cellcolor{lightgray}\bresultpm{19.11}{0.47} & \cellcolor{lightgray}\bresultpm{3.22}{0.11} & \cellcolor{lightgray}\resultpm{0.119}{0.020} \\
\addlinespace[5pt]
\cdashline{3-10}
\addlinespace[5pt]
& w/o \energy & \resultpm{0.89}{0.00} & \resultpm{8.92}{0.20} & \resultpm{4.98}{0.20} & \bresultpm{0.100}{0.00} & \resultpm{0.95}{0.00} & \resultpm{21.68}{0.58} & \resultpm{3.26}{0.14} & \bresultpm{0.060}{0.019} \\
\multirow{-2}{*}{\rotatebox[origin=c]{90}\sapsbf} & \cellcolor{lightgray}w/ \energy & \cellcolor{lightgray}\resultpm{0.89}{0.00} & \cellcolor{lightgray}\bresultpm{8.62}{0.20} & \cellcolor{lightgray}\bresultpm{4.87}{0.13} & \cellcolor{lightgray}\resultpm{0.103}{0.01} & \cellcolor{lightgray}\resultpm{0.95}{0.00} & \cellcolor{lightgray}\bresultpm{18.99}{0.49} & \cellcolor{lightgray}\bresultpm{3.17}{0.11} & \cellcolor{lightgray}\resultpm{0.083}{0.013} \\
\bottomrule
\end{tabular}
} 
\end{table}

\vspace{-4pt}
\section{Effect of RAPS Hyperparameters $\lambda$ and $k_\text{reg}$}
In this section, we evaluate the sensitivity of the Regularized Adaptive Prediction Sets (\raps) method, and its energy-based variant, to their two core hyperparameters: the regularization weight $\lambda$ and the penalty threshold $k_{reg}$. Table \ref{tab:raps_vs_energyraps} compares the average prediction set size of the standard \raps~baseline against our proposed Energy-based \raps~on the Places365 dataset using a ResNet-50 model, across a grid of parameter configurations.

\vspace{-4pt}

\begin{table}[ht!]
\centering
\small
\caption{Comparison of average prediction set sizes for varying regularization parameters ($k_{reg}$ and $\lambda$). Lower set size is better. Values are reported as: Size (\raps $\rightarrow$ Energy-based \raps). The \textbf{bold} value highlights the superior (smaller) set size.}
\label{tab:raps_vs_energyraps}
\renewcommand{\arraystretch}{1.3} 
\resizebox{\linewidth}{!}{
\begin{tabular}{l ccccccc}
\toprule
& \multicolumn{7}{c}{Average Set Size (w/o \energy{} $\rightarrow$ w/ \energy) $\downarrow$} \\
\cmidrule(l){2-8}
& \multicolumn{7}{c}{Regularization Penalty ($\lambda$)} \\
\cmidrule(l){2-8} 
$k_{reg}$ & 0 & 0.05 & 0.1 & 0.2 & 0.5 & 0.7 & 1.0 \\
\midrule
1  & 14.08 $\rightarrow$ \bresult{12.88} & 13.11 $\rightarrow$ \bresult{12.88} & 13.60 $\rightarrow$ \bresult{12.24} & 14.07 $\rightarrow$ \bresult{12.61} & 14.30 $\rightarrow$ \bresult{12.90} & 14.30 $\rightarrow$ \bresult{12.95} & 14.30 $\rightarrow$ \bresult{13.00} \\
2  & 14.09 $\rightarrow$ \bresult{12.86} & 13.13 $\rightarrow$ \bresult{12.83} & 13.58 $\rightarrow$ \bresult{12.34} & 14.05 $\rightarrow$ \bresult{12.63} & 14.30 $\rightarrow$ \bresult{12.97} & 14.30 $\rightarrow$ \bresult{13.05} & 14.30 $\rightarrow$ \bresult{13.11} \\
5  & 14.01 $\rightarrow$ \bresult{12.85} & 13.11 $\rightarrow$ \bresult{13.06} & 13.61 $\rightarrow$ \bresult{12.67} & 14.07 $\rightarrow$ \bresult{12.86} & 14.30 $\rightarrow$ \bresult{13.22} & 14.30 $\rightarrow$ \bresult{13.29} & 14.30 $\rightarrow$ \bresult{13.34} \\
10 & 14.05 $\rightarrow$ \bresult{12.90} & 13.13 $\rightarrow$ \bresult{13.07} & 13.58 $\rightarrow$ \bresult{13.57} & 14.07 $\rightarrow$ \bresult{13.46} & 14.29 $\rightarrow$ \bresult{13.62} & 14.30 $\rightarrow$ \bresult{13.74} & 14.30 $\rightarrow$ \bresult{13.82} \\
50 & 14.00 $\rightarrow$ \bresult{12.86} & 13.97 $\rightarrow$ \bresult{12.91} & 14.02 $\rightarrow$ \bresult{12.85} & 14.00 $\rightarrow$ \bresult{12.85} & 14.02 $\rightarrow$ \bresult{12.88} & 14.05 $\rightarrow$ \bresult{12.87} & 13.90 $\rightarrow$ \bresult{12.88} \\
\bottomrule
\end{tabular}
}
\end{table}

\newpage
\section{Effect of $T_\text{calibration}$ and $\tau_\text{energy}$}

\begin{figure}[!ht]
    \centering
    \begin{minipage}{\textwidth}
        \centering
        \includegraphics[width=\textwidth]{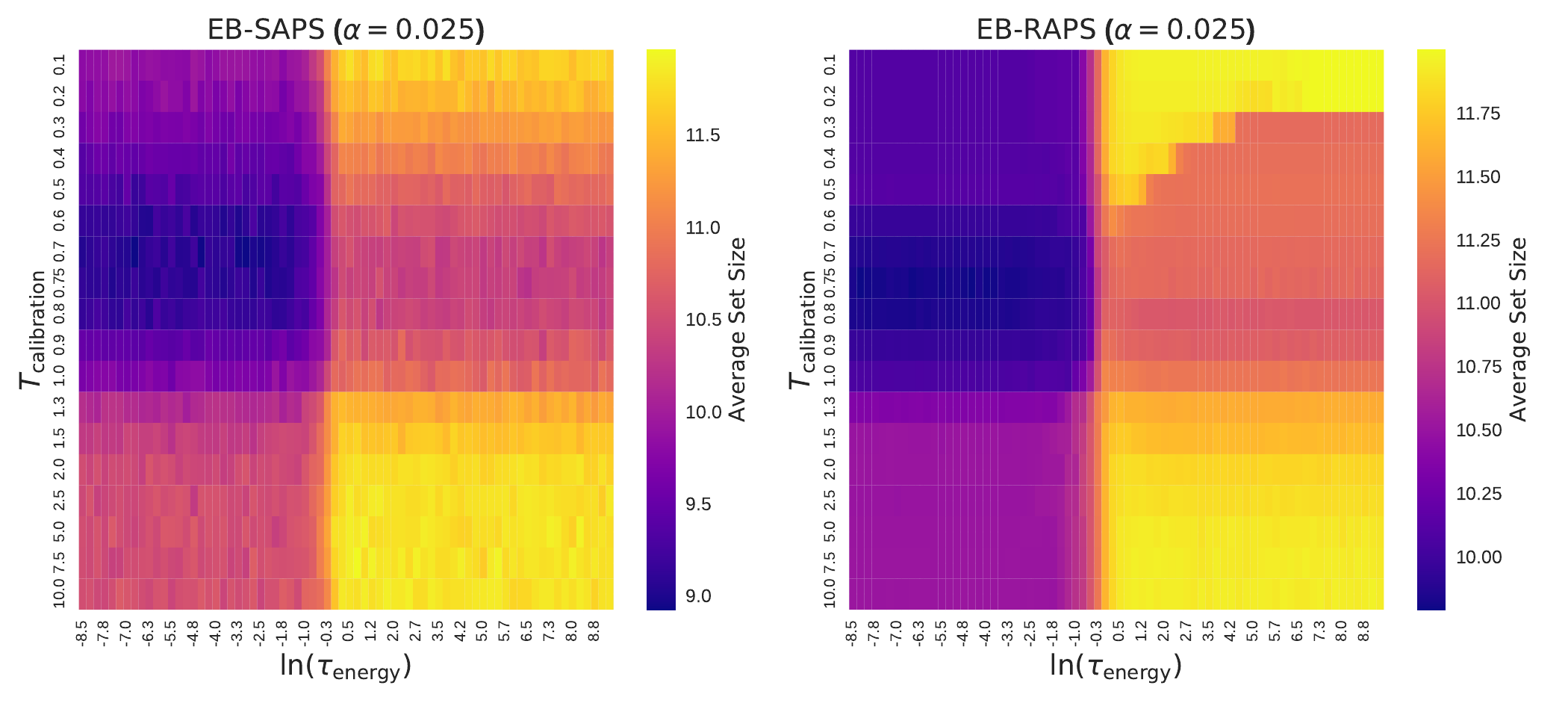}
        \\
    \end{minipage}
    
    \vspace{0.1cm}
    
    \begin{minipage}{\textwidth}
        \centering
        \includegraphics[width=\textwidth]{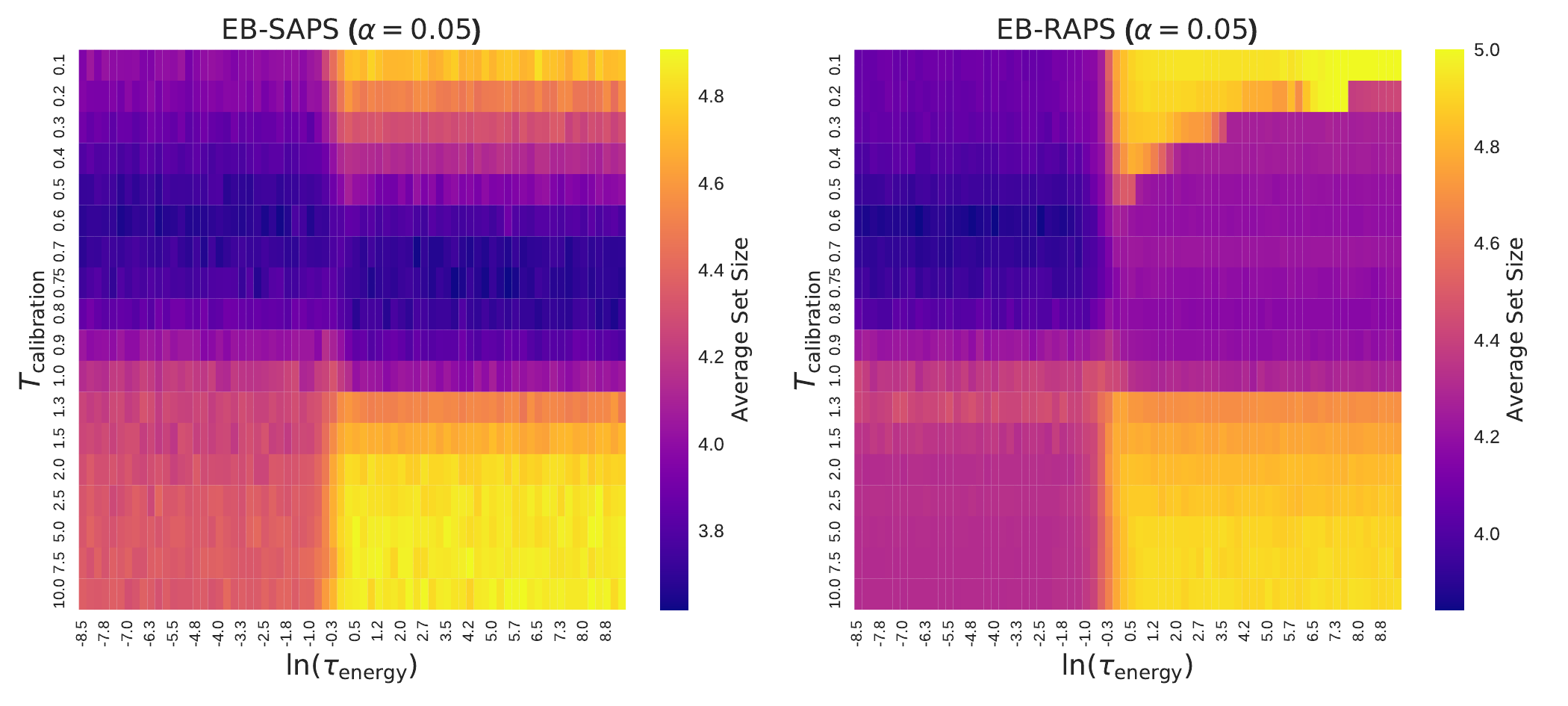}        
    \end{minipage}
    \caption{Average Set Size heatmap for different hyperparameter settings across Energy-based variants of \raps and \saps.}
    \label{fig:tau_and_T}
\end{figure}
Sensitivity heatmaps illustrate how $T$ (temperature in the calibrated softmax) and $\tau$ (temperature in the energy calculation) affect the average set size on the ImageNet dataset for Energy-Based \raps~and Energy-Based \saps, with $\alpha \in \{0.025, 0.05\}$. As~$\ln(\tau)$ increases, the effect of energy-based reweighting gradually diminishes. Consequently, for larger values of $\tau$, the model converges to the baseline method. For instance, Energy-Based RAPS with a large positive $\ln(\tau)$ behaves almost identically to standard RAPS. As shown in Figure~\ref{fig:tau_and_T}, across different values of $T$ (softmax probability calibration), energy-based variants of the methods (corresponding to smaller values of $\ln(\tau)$) produce more informative prediction sets compared to their baseline counterparts (associated with larger values of $\ln(\tau)$).

\newpage

\section{Sensitivity Analysis of the Softplus Parameter \texorpdfstring{$\beta$}{beta}}
\label{app:beta_ablation}

We analyze the impact of the sharpness parameter $\beta$ on the efficiency of the generated prediction sets. As defined in Equation~\ref{eq:energy_based_nonconformity}, this parameter controls the approximation of the scaling factor to the ReLU function. We evaluate the average prediction set size across a wide range of $\beta$ values for the CIFAR-100 dataset using a ResNet-56 model.

Figure~\ref{fig:beta_ablation} presents the results for Energy-based \aps, Energy-based \raps, and Energy-based \saps at miscoverage levels $\alpha=0.05$ and $\alpha=0.025$. As $\beta$ approaches zero, the term $\frac{1}{\beta}\log\left(1 + e^{-\beta F(x)}\right)$ yields scaling factors that are inseparable across samples. Due to this loss of distinction, the performance converges to the baseline without energy.

However, as $\beta$ increases, the performance stabilizes and remains constant across several orders of magnitude. This behavior aligns with the theoretical motivation that the scaling factor need only approximate the ReLU function to handle rare negative free energy values while preserving the signal for positive values. Consequently, precise tuning of this parameter is unnecessary. Selecting a sufficiently large value is a safe option to achieve the performance benefits of Energy-based conformal classification.

\begin{figure}[ht]
    \centering
    \begin{subfigure}[b]{0.495\textwidth}
        \centering
        \includegraphics[width=\textwidth]{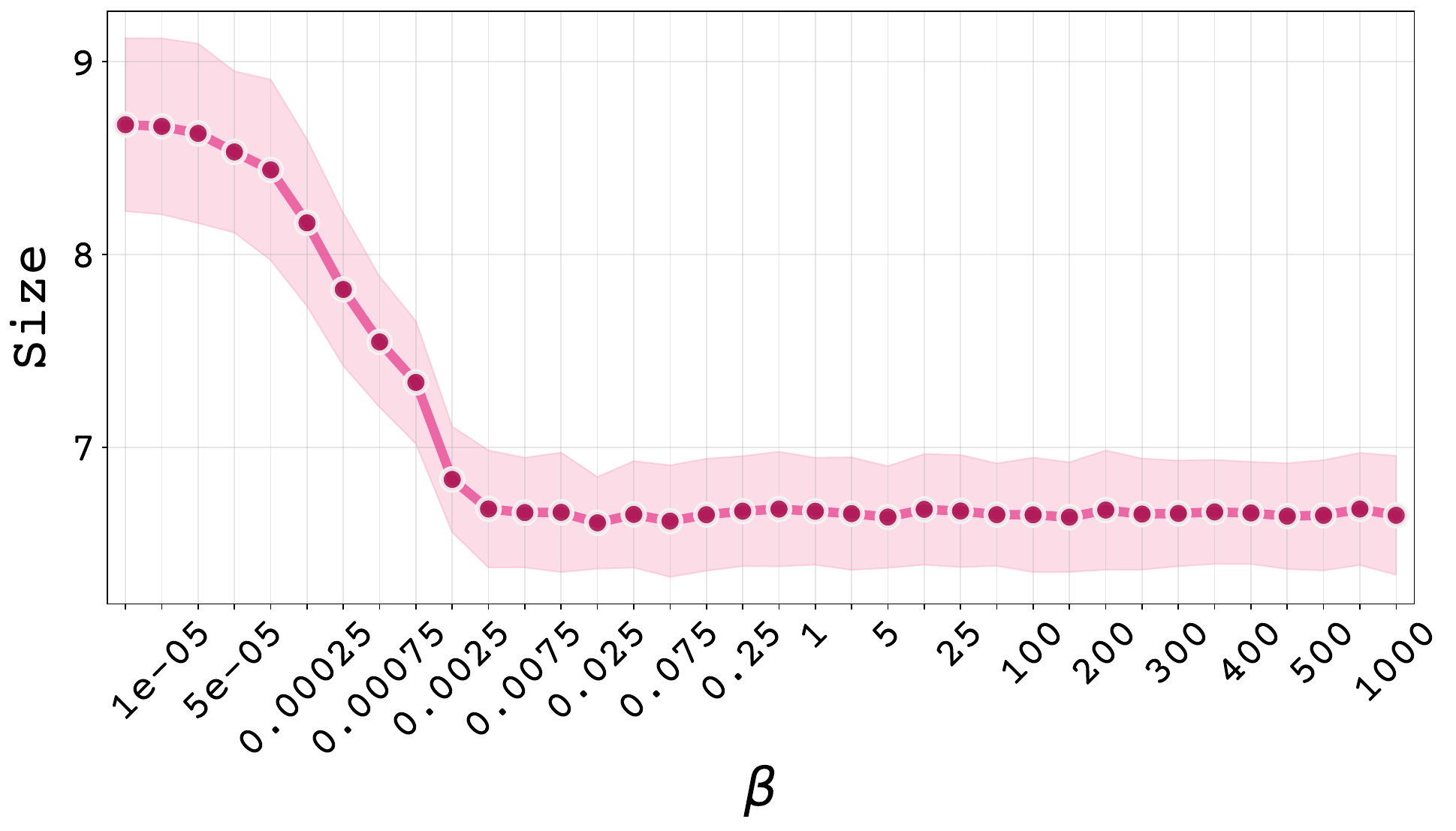}
        \caption{Energy-based \aps ($\alpha=0.05$)}
    \end{subfigure}
    \hfill
    \begin{subfigure}[b]{0.495\textwidth}
        \centering
        \includegraphics[width=\textwidth]{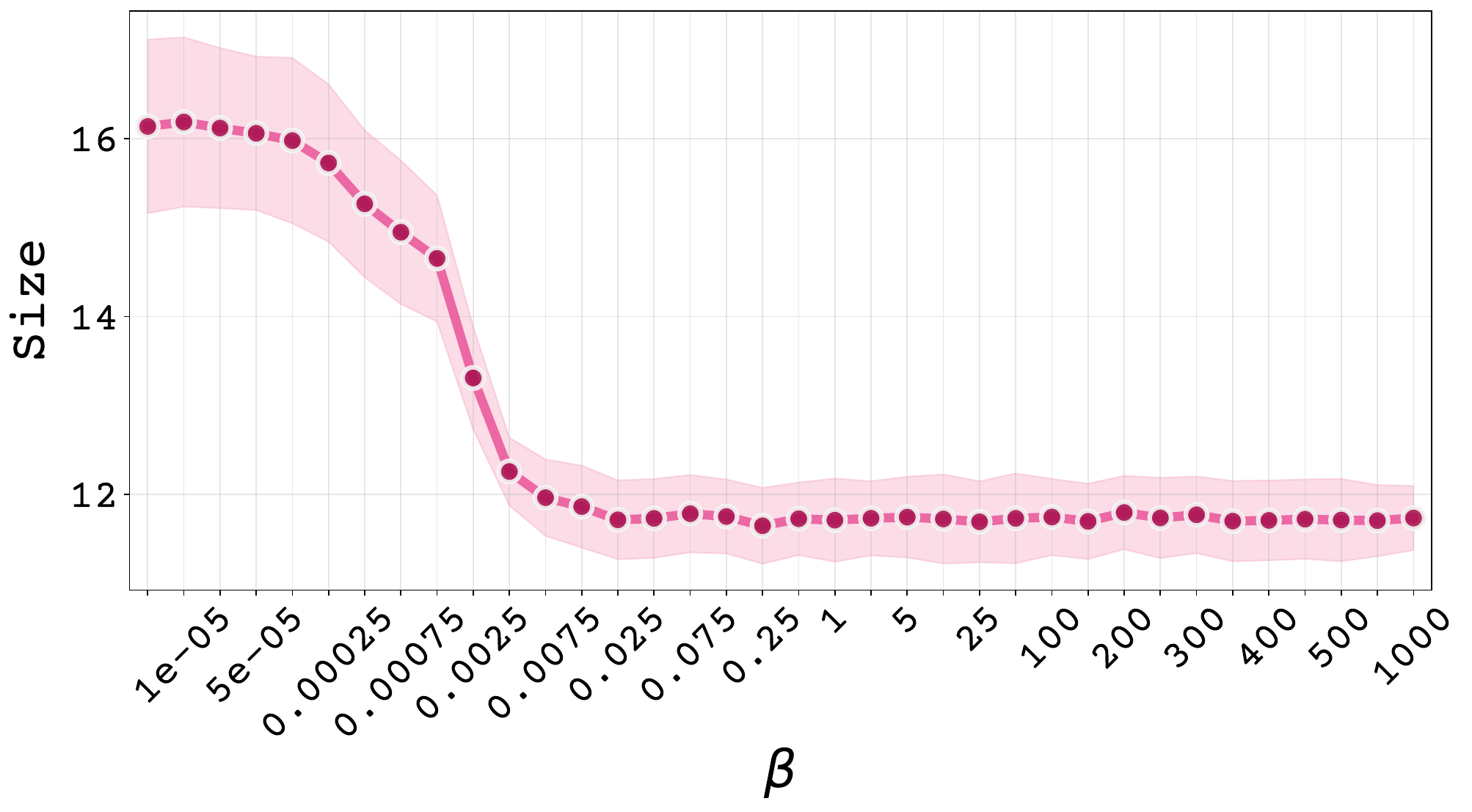}
        \caption{Energy-based \aps ($\alpha=0.025$)}
    \end{subfigure}
    
    \vspace{1em} 

    \begin{subfigure}[b]{0.495\textwidth}
        \centering
        \includegraphics[width=\textwidth]{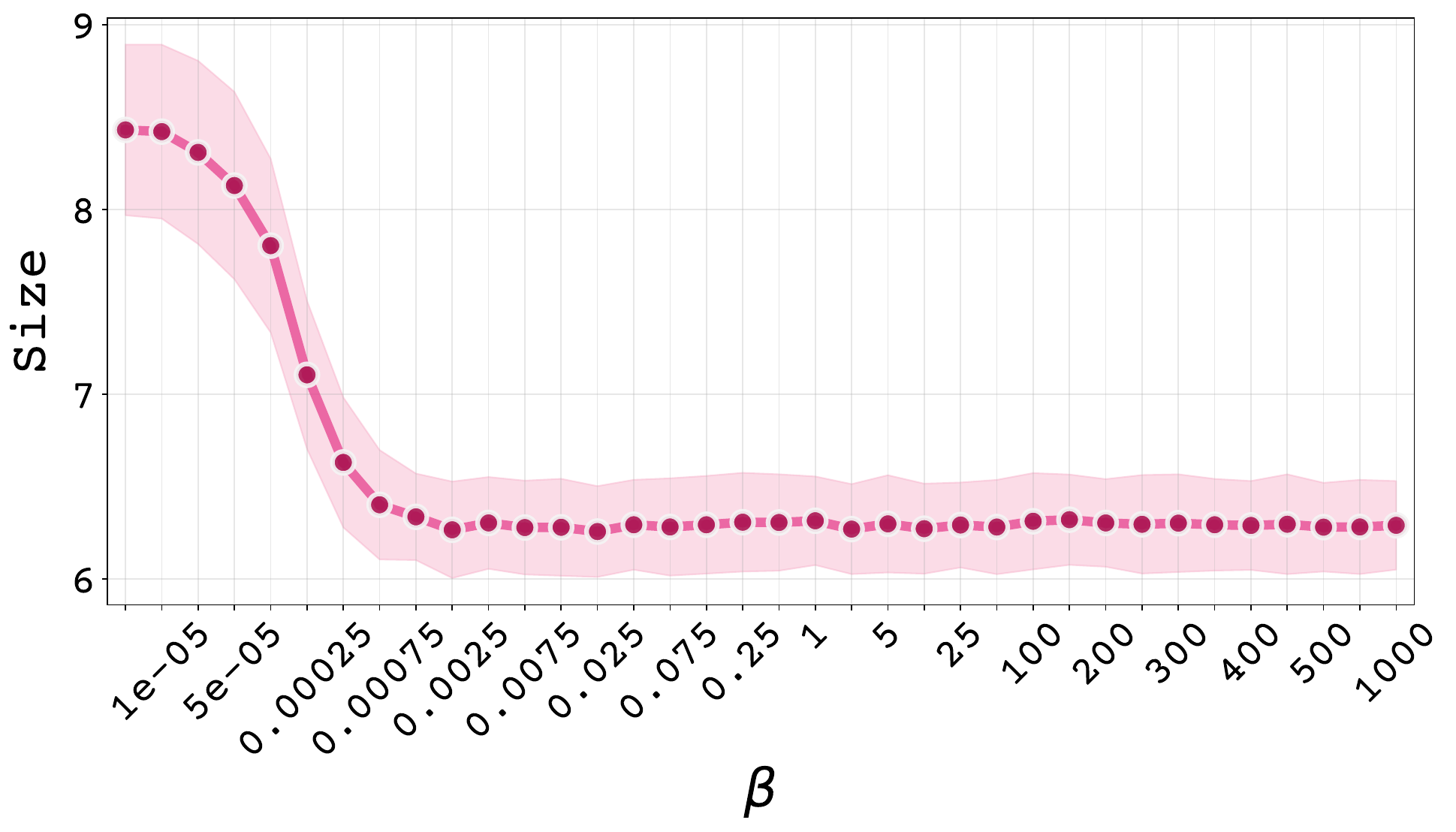}
        \caption{Energy-based \raps ($\alpha=0.05$)}
    \end{subfigure}
    \hfill
    \begin{subfigure}[b]{0.495\textwidth}
        \centering
        \includegraphics[width=\textwidth]{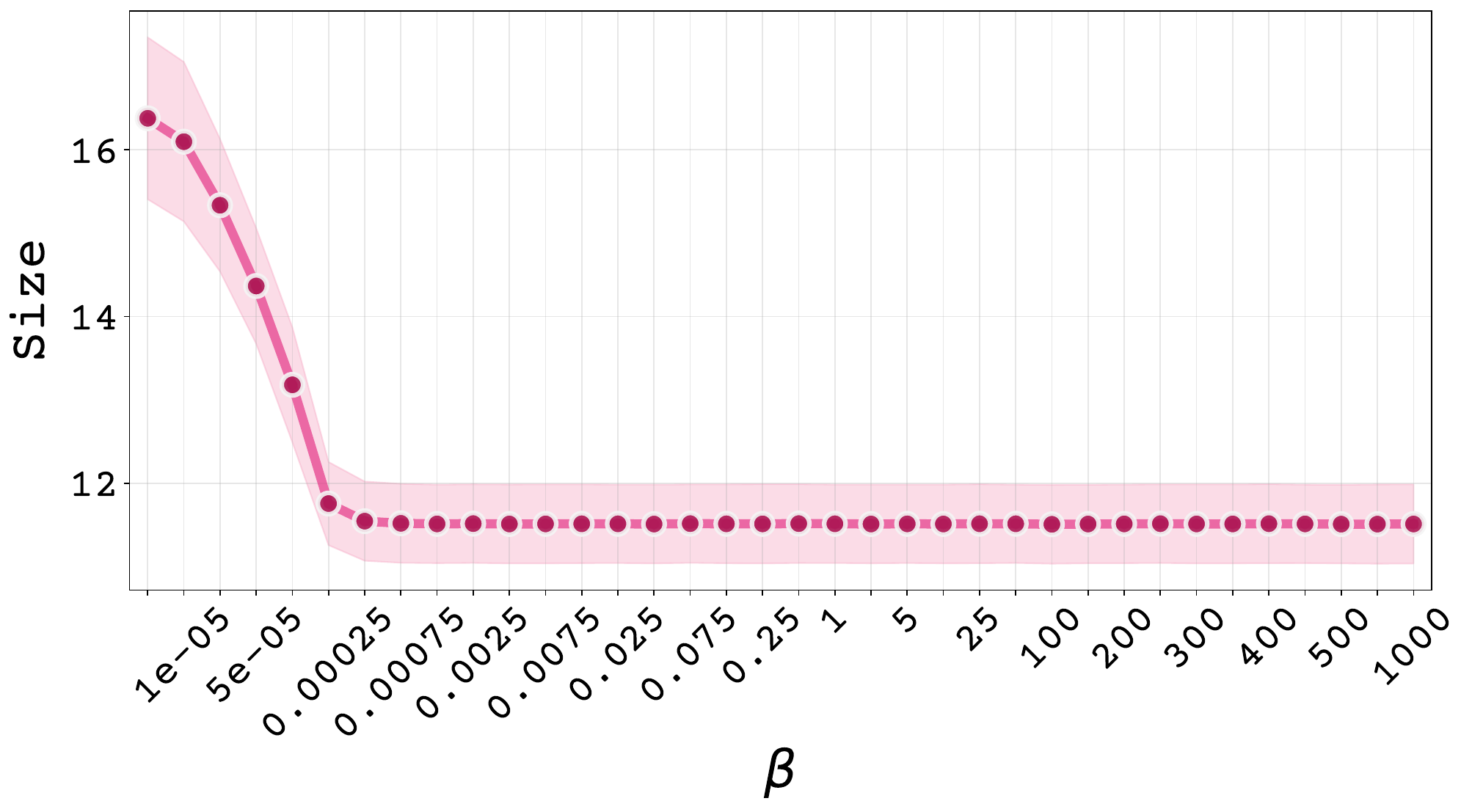}
        \caption{Energy-based \raps ($\alpha=0.025$)}
    \end{subfigure}

    \vspace{1em}

    \begin{subfigure}[b]{0.495\textwidth}
        \centering
        \includegraphics[width=\textwidth]{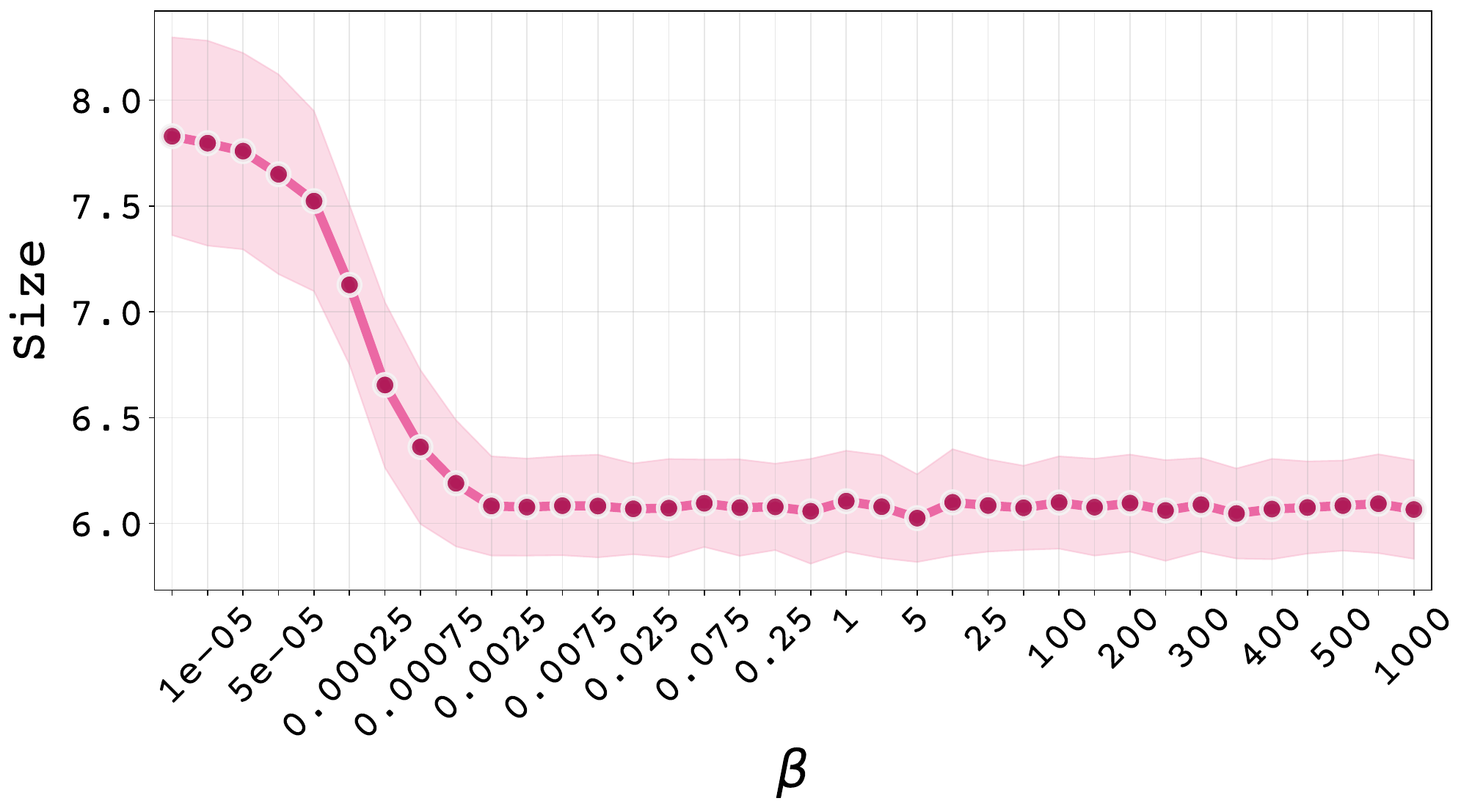}
        \caption{Energy-based \saps ($\alpha=0.05$)}
    \end{subfigure}
    \hfill
    \begin{subfigure}[b]{0.495\textwidth}
        \centering
        \includegraphics[width=\textwidth]{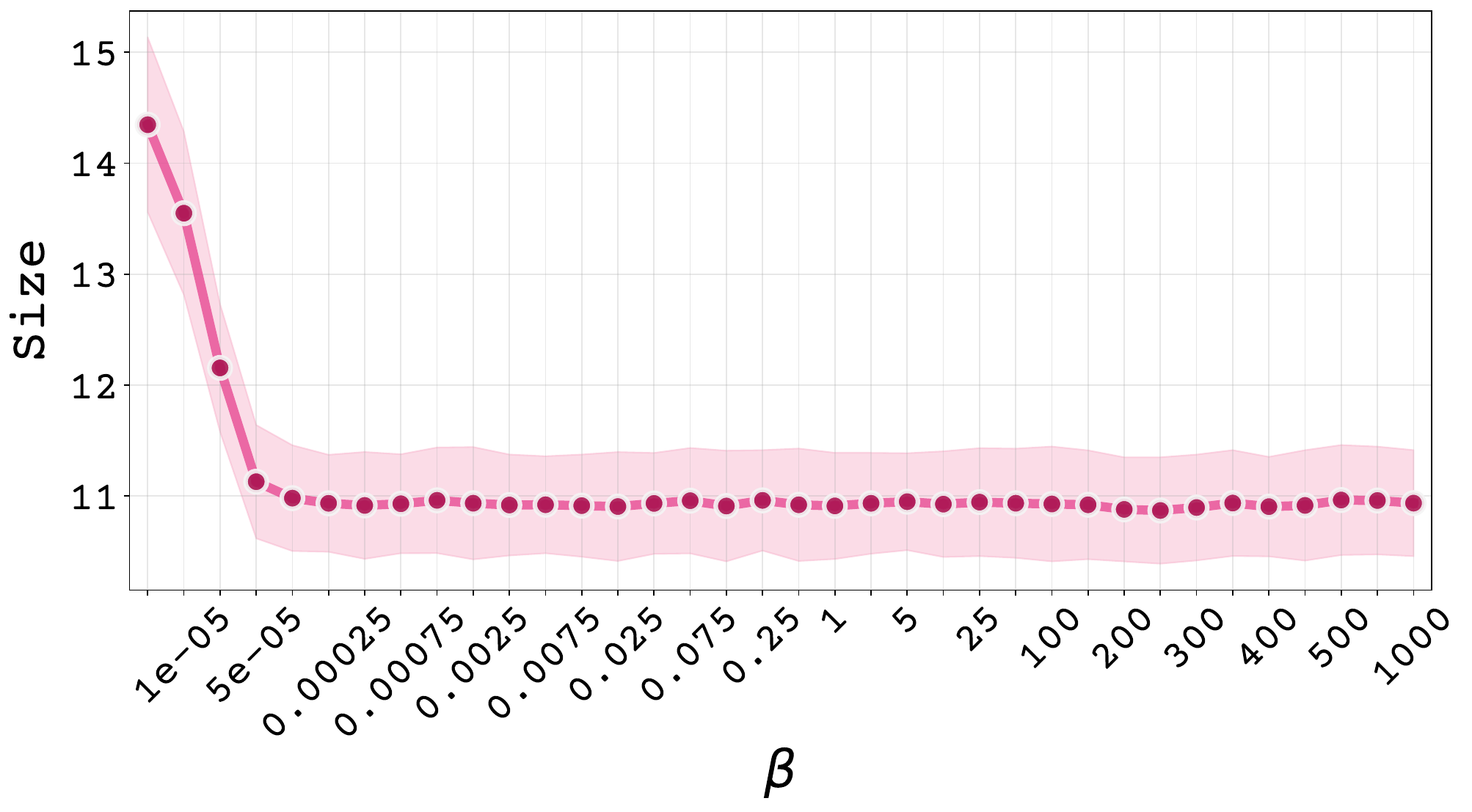}
        \caption{Energy-based \saps ($\alpha=0.025$)}
    \end{subfigure}

    \caption{Ablation study of the parameter $\beta$ on CIFAR-100 with ResNet-56. The plots show the average prediction set size (shaded regions indicate standard deviation) as a function of $\beta$. Performance stabilizes for sufficiently large values of $\beta$, indicating that the method does not require specific tuning of this parameter.}
    \label{fig:beta_ablation}
\end{figure}

\newpage
\section{On the Reliability of Conformal Classifiers When Faced with OOD Test Samples}
\label{apd:ood_behaviour}

A reliable system for uncertainty quantification is often expected to satisfy two primary objectives, as outlined in the Appendix of~\cite{angelopoulos2021gentle}:
\begin{enumerate}
    \item Flag out-of-distribution (OOD) inputs to avoid making predictions on unfamiliar data.
    \item If an input is deemed in-distribution, output a prediction set that contains the true class with user-specified probability.
\end{enumerate}

A practical strategy to achieve this is a two-stage pipeline: first, an OOD detector screens each input. If an input is flagged as OOD, the system can abstain (e.g., by returning an empty set). If deemed in-distribution, the input is passed to a conformal predictor to generate a valid prediction set. This separation, while effective, requires deploying and managing two distinct models.

The utility of the energy-based paradigm becomes particularly evident in scenarios where a dedicated OOD detection module is not available to filter inputs. In such cases, a standard conformal predictor, relying solely on softmax outputs, can be misleading. An OOD input might still produce a single, high-confidence softmax score, leading the base conformal method to output a small, high-confidence prediction set (e.g., \{`Tuberculosis'\}) for a non-medical image. This false confidence is a critical failure mode.

\begin{figure}[ht]
    \centering
    \includegraphics[width=\textwidth]{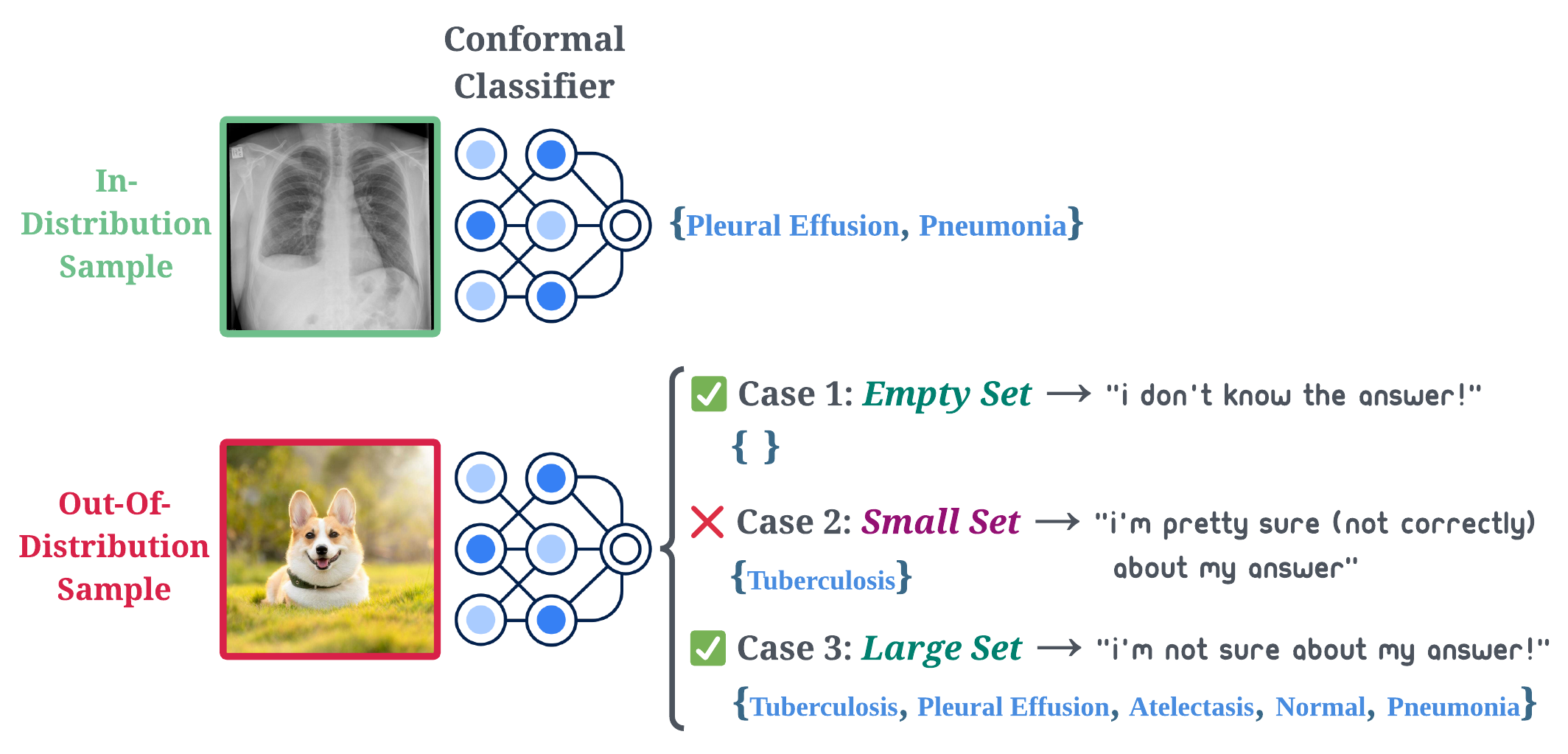}
    \caption{Conceptual diagram of a reliable conformal classifier facing an OOD input. The desired behaviors are to produce an empty or large set, both signaling uncertainty, and to avoid producing a small set that implies false confidence.}
    \label{fig:ood_behaviour}
\end{figure}

The energy-based approach addresses this vulnerability by incorporating a more reliable measure of model uncertainty. The key difference in behavior is:
\begin{itemize}
    \item For \textit{familiar ID inputs}, model confidence is high, resulting in a high negative free energy. The prediction sets are thus appropriately small and efficient, similar to the base variant.
    \item For \textit{unfamiliar OOD inputs}, the model's uncertainty is captured by a low negative free energy. The energy-based reweighting dampens scores so that more classes fall below the calibrated quantile. This results in the predictor generating a large prediction set.
\end{itemize}

This is a clear improvement over the base variant, which is prone to producing small, overconfident sets for such inputs. Therefore, even when the system is not configured to abstain, the large prediction set generated by the energy-based method provides a more robust and honest signal of uncertainty. It reduces the risk of overconfident and incorrect predictions on OOD data, making it a more reliable choice in deployments without a separate OOD detector. This property validates the use of free energy as a model-aware signal that overcomes the limitations of standard softmax probabilities, as noted in prior work~\citep{liu2020energy, wang2021can}.

To illustrate the importance of the desiderata outlined in Section~\ref{subsec:ood_adaptivity}, consider a conformal classifier trained to identify medical conditions from chest X-rays. If this model is fed with an image of a completely unrelated subject, such as a corgi, a reliable classifier must signal its unfamiliarity with the input. As visualized in Figure~\ref{fig:ood_behaviour}, this signal correctly manifests in two ways:
\begin{enumerate}
    \item An \textbf{empty set} ($\emptyset$), which communicates: \textit{``I don't know the answer.''}
    \item A \textbf{large set} (e.g., \{`Tuberculosis', `Pleural Effusion', `Atelectasis', ...\}), which communicates: \textit{``I am uncertain about the answer.''}
\end{enumerate}
In contrast, producing a small, non-empty set (e.g., \{`Tuberculosis'\}) is misleading, as it incorrectly signals high confidence in a prediction that is likely wrong.

Our experimental results in Section~\ref{subsec:ood_adaptivity} confirm this behavior in practice. We observed an increase in the average prediction set size for the energy-based variants of the nonconformity scores. 

\vspace{-4pt}
\section{Energy-based Reweighting vs. Entropy-based Reweighting}\label{apd:entropy_vs_energy}

A common measure of uncertainty in a classifier's output is the \emph{Shannon Entropy} of its softmax probability distribution. As described by \cite{luo2024entropy}, let $\mathbf{f}(x)$ be the logit vector produced by the classifier for an input $x$, and let $\hat{\pi}(k \mid x)$ be the softmax probability for class $k$. The Entropy $\mathbb{H}(x)$ is given by:
\vspace{-6pt}
\begin{align}
    \mathbb{H}(x) &= -\sum_{k=1}^{K} \hat{\pi}(k \mid x) \log \hat{\pi}(k \mid x) \nonumber \\
    &= -\sum_{k=1}^{K} \hat{\pi}(k \mid x) \left( f_k(x) - \log \sum_{j=1}^{K} \exp(f_j(x)) \right) \nonumber \\
    &= -\sum_{k=1}^{K} \hat{\pi}(k \mid x) f_k(x) + \left( \log \sum_{j=1}^{K} \exp(f_j(x)) \right) \sum_{k=1}^{K} \hat{\pi}(k \mid x) \nonumber \\
    &= -\sum_{k=1}^{K} \hat{\pi}(k \mid x) f_k(x) + \log \sum_{j=1}^{K} \exp(f_j(x)) \label{eq:entropy_decomposition}
\end{align}

If one were to consider an ``Entropy-based reweighting'' for conformal scores, it would likely utilize this $\mathbb{H}(x)$ or a function thereof. However, the decomposition in Equation~\ref{eq:entropy_decomposition} reveals two distinct components influencing the Entropy value. The first term, $-\sum_{k=1}^{K} \hat{\pi}(k \mid x) f_k(x)$, depends on the alignment of softmax probabilities with the logit values. The second term is the \texttt{logsumexp} (LSE) of the logits: $L(x) = \log \sum_{j=1}^{K} \exp(f_j(x))$.

This LSE term, $L(x)$, is particularly relevant as it is directly related to the concept of free energy, which forms the basis of our proposed Energy-based nonconformity scores. It captures the overall magnitude or scale of the raw logits. Critically, while the softmax probabilities $\hat{\pi}(k \mid x)$ also depend on these logits, the LSE term $L(x)$ is calculated purely from the logits $f_j(x)$ without $\hat{\pi}(k \mid x)$ appearing as explicit factors within its own sum, unlike in the definition of entropy.

Indeed, softmax entropy is not well-suited for capturing epistemic uncertainty~\citep{mukhoti2021deep}. The distinction and potential advantage of using an Energy-based measure over the Entropy $\mathbb{H}(x)$ is illustrated in Figure~\ref{fig:logit_vs_softmax}. As shown, both the ``easy" and ``hard" samples can yield high softmax confidence for the predicted class, resulting in very low Entropy values (close to zero) for both. This suggests that Entropy alone might not adequately distinguish between an input for which the model is genuinely certain (high overall logit values, ``easy sample") and an input where the model is less certain overall but still produces a peaky softmax distribution (``hard sample" with high softmax for one class). In contrast, the negative energy scores clearly differentiate these two cases: the easy sample exhibits a significantly higher negative energy score (32.49) compared to the hard sample (14.08). This indicates that the Energy-based metric, by reflecting the overall scale of logit activation, provides a more nuanced signal of the model's underlying certainty.

Our proposed Energy-based Nonconformity Scores in Section~\ref{subsec:Energy_Based_Nonconformity_Score} leverages this energy signal by reweighting a standard nonconformity score with $\softplus(-F(x))$. The rationale is that $F(x)$ offers a more direct and potentially more robust indication of the model's overall certainty about an input $x$ than the Entropy $\mathbb{H}(x)$, which can be saturated (i.e., near zero) for different levels of underlying model certainty. Empirical comparisons supporting the benefits of Energy-reweighted scores over Entropy-reweighted alternatives is provided in Table~\ref{tab:energy_vs_entropy}.

\begin{table}[ht!]
\centering
\caption{Performance comparison of different nonconformity score functions and their Energy-based and Entropy-based variants on ImageNet using a ResNet-50 classifier at miscoverage levels ${\alpha = 0.05}$ and ${\alpha = 0.1}$. Results are averaged over 10 trials and reported as empirical coverage and average prediction set size. \textbf{Bold} values indicate the best performance within each group. The Entropy-based variants of adaptive method, \aps, \raps, and \saps~are defined as ${S_{\mathbb{H}}(x, y) = \frac{S(x, y)}{\mathbb{H}(x)}}$, and for \lac~it is defined as ${S_{\mathbb{H}}(x, y) = -\hat{\pi}(y|x) \cdot \mathbb{H}(x)}$. }
\label{tab:energy_vs_entropy}
\renewcommand{\arraystretch}{1.25}
\resizebox{0.65\linewidth}{!}{
\begin{tabular}{c l cc cc}
\toprule
& & \multicolumn{2}{c}{\textbf{$\bm{\alpha=0.1}$}} & \multicolumn{2}{c}{\textbf{$\bm{\alpha=0.05}$}} \\
\cmidrule(lr){3-4} \cmidrule(lr){5-6}
\textbf{Family} & \textbf{} & \textbf{Coverage} & \textbf{Set Size $\downarrow$} & \textbf{Coverage} & \textbf{Set Size $\downarrow$} \\
\midrule
\addlinespace[2pt]
\multirow{3}{*}{\rotatebox[origin=c]{90}{\lacbf}} & baseline & \resultpm{0.898}{0.002} & \resultpm{1.487}{0.013} & \resultpm{0.950}{0.002} & \resultpm{2.682}{0.039} \\
& \cellcolor{lightgray}w/ \energy & \cellcolor{lightgray}\resultpm{0.898}{0.002} & \cellcolor{lightgray}\bresultpm{1.485}{0.012} & \cellcolor{lightgray}\resultpm{0.949}{0.002} & \cellcolor{lightgray}\bresultpm{2.680}{0.043} \\
& w/ \entropy & \resultpm{0.898}{0.002} & \resultpm{1.496}{0.014} & \resultpm{0.949}{0.002} & \resultpm{2.696}{0.043} \\
\addlinespace[2pt]
\cdashline{3-6}
\addlinespace[2pt]
\multirow{3}{*}{\rotatebox[origin=c]{90}{\apsbf}} & baseline & \resultpm{0.899}{0.002} & \resultpm{1.605}{0.022} & \resultpm{0.950}{0.002} & \resultpm{4.007}{0.164} \\
& \cellcolor{lightgray}w/ \energy & \cellcolor{lightgray}\resultpm{0.899}{0.002} & \cellcolor{lightgray}\bresultpm{1.599}{0.022} & \cellcolor{lightgray}\resultpm{0.950}{0.002} & \cellcolor{lightgray}\bresultpm{3.842}{0.159} \\
& w/ \entropy & \resultpm{0.898}{0.003} & \resultpm{2.159}{0.042} & \resultpm{0.949}{0.002} & \resultpm{4.990}{0.083} \\
\addlinespace[2pt]
\cdashline{3-6}
\addlinespace[2pt]
\multirow{3}{*}{\rotatebox[origin=c]{90}{\rapsbf}} & baseline & \resultpm{0.898}{0.003} & \resultpm{1.764}{0.030} & \resultpm{0.949}{0.001} & \resultpm{4.222}{0.056} \\
& \cellcolor{lightgray}w/ \energy & \cellcolor{lightgray}\resultpm{0.898}{0.003} & \cellcolor{lightgray}\bresultpm{1.763}{0.033} & \cellcolor{lightgray}\resultpm{0.949}{0.001} & \cellcolor{lightgray}\bresultpm{3.889}{0.057} \\
& w/ \entropy & \resultpm{0.898}{0.003} & \resultpm{2.108}{0.052} & \resultpm{0.949}{0.002} & \resultpm{4.811}{0.076} \\
\addlinespace[2pt]
\cdashline{3-6}
\addlinespace[2pt]
\multirow{3}{*}{\rotatebox[origin=c]{90}{\sapsbf}} & baseline & \resultpm{0.898}{0.002} & \resultpm{1.664}{0.034} & \resultpm{0.949}{0.002} & \resultpm{3.659}{0.073} \\
& \cellcolor{lightgray}w/ \energy & \cellcolor{lightgray}\resultpm{0.898}{0.002} & \cellcolor{lightgray}\bresultpm{1.662}{0.029} & \cellcolor{lightgray}\resultpm{0.949}{0.002} & \cellcolor{lightgray}\bresultpm{3.654}{0.064} \\
& w/ \entropy & \resultpm{0.898}{0.003} & \resultpm{2.101}{0.039} & \resultpm{0.949}{0.002} & \resultpm{4.702}{0.075} \\
\addlinespace[1pt]
\bottomrule
\end{tabular}
}
\end{table}

\vspace{-8pt}
\section{X-Conditional Coverage Analysis}
\label{sec:exp_feature_conditional}
\vspace{-4pt}

Complementing the class-conditional analysis, we further assess the robustness of our method by evaluating validity conditional on the input features $X$. While exact conditional coverage is impossible in a distribution-free setting~\citep{vovk2012conditional, foygel2021limits}, we employ the Worst-Slab Coverage (WSC) metric \citep{cauchois2021knowing} to approximate the maximum coverage violation over local regions of the input space. We computed WSC using 100 random projections on the Places365 dataset.

Table~\ref{tab:wsc_places365} reports the Average Set Size and WSC. The results demonstrate that Energy-based variants achieve a notable reduction in prediction set size compared to the baselines while maintaining comparable WSC scores. This confirms that the proposed energy-based variants improve efficiency (smaller sets) without compromising local validity or ``safety'' on difficult slices of the input space.
\vspace{-3pt}
\begin{table}[ht!]
\centering
\caption{X-conditional performance comparison on the Places365 dataset (ResNet-50) using Worst-Slab Coverage (WSC). We compare Set Size and WSC at miscoverage levels ${\alpha = 0.1}$ and ${\alpha = 0.05}$. Lower values are better for Set Size; higher values (closer to $1-\alpha$) are better for WSC. \textbf{Bold} values denote the best Set Size within each method family. The Energy-based variants reduce set size without degrading conditional coverage.}
\label{tab:wsc_places365}
\renewcommand{\arraystretch}{1.15}
\resizebox{0.7\linewidth}{!}{
\begin{tabular}{c l cc cc}
\toprule
& & \multicolumn{2}{c}{\textbf{$\bm{\alpha=0.1}$}} & \multicolumn{2}{c}{\textbf{$\bm{\alpha=0.05}$}} \\
\cmidrule(lr){3-4} \cmidrule(lr){5-6}
\textbf{Method} & \textbf{Variant} & \textbf{Set Size $\downarrow$} & \textbf{WSC $\uparrow$} & \textbf{Set Size $\downarrow$} & \textbf{WSC $\uparrow$} \\
\midrule
\addlinespace[2pt]
\multirow{2}{*}{\rotatebox[origin=c]{90}{\apsbf}} & w/o \energy & \resultpm{9.02}{0.11} & \resultpm{0.892}{0.00} & \resultpm{20.98}{0.28} & \resultpm{0.949}{0.00} \\
& \cellcolor{lightgray}w/ \energy & \cellcolor{lightgray}\bresultpm{8.54}{0.18} & \cellcolor{lightgray}\resultpm{0.893}{0.00} & \cellcolor{lightgray}\bresultpm{18.77}{0.31} & \cellcolor{lightgray}\resultpm{0.948}{0.00} \\
\addlinespace[3pt]
\cdashline{3-6}
\addlinespace[3pt]
\multirow{2}{*}{\rotatebox[origin=c]{90}{\rapsbf}} & w/o \energy & \resultpm{8.99}{0.15} & \resultpm{0.892}{0.00} & \resultpm{21.46}{0.27} & \resultpm{0.950}{0.00} \\
& \cellcolor{lightgray}w/ \energy & \cellcolor{lightgray}\bresultpm{8.47}{0.18} & \cellcolor{lightgray}\resultpm{0.891}{0.00} & \cellcolor{lightgray}\bresultpm{18.65}{0.31} & \cellcolor{lightgray}\resultpm{0.948}{0.00} \\
\addlinespace[3pt]
\cdashline{3-6}
\addlinespace[3pt]
\multirow{2}{*}{\rotatebox[origin=c]{90}{\sapsbf}} & w/o \energy & \resultpm{8.80}{0.15} & \resultpm{0.891}{0.00} & \resultpm{21.13}{0.29} & \resultpm{0.949}{0.00} \\
& \cellcolor{lightgray}w/ \energy & \cellcolor{lightgray}\bresultpm{8.49}{0.15} & \cellcolor{lightgray}\resultpm{0.891}{0.00} & \cellcolor{lightgray}\bresultpm{18.55}{0.34} & \cellcolor{lightgray}\resultpm{0.949}{0.00} \\
\addlinespace[3pt]
\bottomrule
\end{tabular}
} 
\end{table}

\newpage
\section{Statistical Significance Tests}
\label{app:significance}

We validate our efficiency gains using a one-tailed Welch’s t-test ($H_1: \mu_{\text{Energy-Based}} < \mu_{\text{Base}}$). Tables~\ref{tab:sig_balanced} and~\ref{tab:sig_imbalanced} report the $p$-values. Results show statistically significant improvements across most settings, particularly at stricter $\alpha$ levels and on imbalanced data.

\begin{table*}[!ht]
\centering
\caption{Statistical significance ($p$-values from one-tailed Welch's t-test) for the experiments on balanced datasets presented in Table~\ref{tab:balanced_results}. \textbf{Bold} values indicate statistical significance ($p < 0.05$).}
\label{tab:sig_balanced}
\renewcommand{\arraystretch}{1.1}
\setlength{\tabcolsep}{4pt}
\resizebox{0.85\textwidth}{!}{%
\begin{tabular}{c l c c c c}
\toprule
& & \textbf{$\bm{\alpha=0.1}$} & \textbf{$\bm{\alpha=0.05}$} & \textbf{$\bm{\alpha=0.025}$} & \textbf{$\bm{\alpha=0.01}$} \\
\textbf{Method} & \textbf{Comparison} & \textbf{$p$-value} & \textbf{$p$-value} & \textbf{$p$-value} & \textbf{$p$-value} \\
\midrule
\multicolumn{6}{c}{\textbf{CIFAR-100 \footnotesize (ResNet-56)}} \\
\midrule
\apsbf & w/o vs. w/ \energy & 0.398 & \textbf{0.001} & \textbf{$<$ 0.0001} & \textbf{$<$ 0.0001} \\
\addlinespace[3pt]
\cdashline{1-6}
\addlinespace[3pt]
\rapsbf & w/o vs. w/ \energy & 0.500 & \textbf{$<$ 0.0001} & \textbf{$<$ 0.0001} & \textbf{$<$ 0.0001} \\
\addlinespace[3pt]
\cdashline{1-6}
\addlinespace[3pt]
\sapsbf & w/o vs. w/ \energy & 0.500 & \textbf{$<$ 0.0001} & \textbf{$<$ 0.0001} & \textbf{$<$ 0.0001} \\
\midrule
\multicolumn{6}{c}{\textbf{ImageNet \footnotesize (ResNet-50)}} \\
\midrule
\apsbf & w/o vs. w/ \energy & 1.000 & \textbf{0.036} & \textbf{$<$ 0.0001} & \textbf{$<$ 0.0001} \\
\addlinespace[3pt]
\cdashline{1-6}
\addlinespace[3pt]
\rapsbf & w/o vs. w/ \energy & 0.268 & \textbf{$<$ 0.0001} & \textbf{$<$ 0.0001} & \textbf{$<$ 0.0001} \\
\addlinespace[3pt]
\cdashline{1-6}
\addlinespace[3pt]
\sapsbf & w/o vs. w/ \energy & 0.169 & 0.378 & \textbf{$<$ 0.0001} & \textbf{$<$ 0.0001} \\
\midrule
\multicolumn{6}{c}{\textbf{Places365 \footnotesize (ResNet-50)}} \\
\midrule
\apsbf & w/o vs. w/ \energy & \textbf{$<$ 0.0001} & \textbf{$<$ 0.0001} & \textbf{$<$ 0.0001} & \textbf{$<$ 0.0001} \\
\addlinespace[3pt]
\cdashline{1-6}
\addlinespace[3pt]
\rapsbf & w/o vs. w/ \energy & \textbf{$<$ 0.0001} & \textbf{$<$ 0.0001} & \textbf{$<$ 0.0001} & \textbf{$<$ 0.0001} \\
\addlinespace[3pt]
\cdashline{1-6}
\addlinespace[3pt]
\sapsbf & w/o vs. w/ \energy & \textbf{$<$ 0.0001} & \textbf{$<$ 0.0001} & \textbf{$<$ 0.0001} & \textbf{$<$ 0.0001} \\
\bottomrule
\end{tabular}
}
\end{table*}

\begin{table*}[!ht]
\centering
\caption{Statistical significance ($p$-values) for the experiments on imbalanced data presented in Table~\ref{tab:imbalanced_results_main} \footnotesize{($\lambda = 0.005$)} and Table~\ref{tab:aps_raps_saps_results_imbalanced_cifar100} \footnotesize{($\lambda = 0.02$)}. \textbf{Bold} values indicate statistical significance ($p < 0.05$).}
\label{tab:sig_imbalanced}
\renewcommand{\arraystretch}{1.1}
\setlength{\tabcolsep}{4pt}
\resizebox{0.85\textwidth}{!}{%
\begin{tabular}{c l c c c c}
\toprule
& & \textbf{$\bm{\alpha=0.1}$} & \textbf{$\bm{\alpha=0.05}$} & \textbf{$\bm{\alpha=0.025}$} & \textbf{$\bm{\alpha=0.01}$} \\
\textbf{Method} & \textbf{Comparison} & \textbf{$p$-value} & \textbf{$p$-value} & \textbf{$p$-value} & \textbf{$p$-value} \\
\midrule
\multicolumn{6}{c}{\textbf{CIFAR-100-LT \footnotesize ($\lambda = 0.005$)}} \\
\midrule
\apsbf & w/o vs. w/ \energy & \textbf{$<$ 0.0001} & \textbf{$<$ 0.0001} & \textbf{$<$ 0.0001} & \textbf{$<$ 0.0001} \\
\addlinespace[3pt]
\cdashline{1-6}
\addlinespace[3pt]
\rapsbf & w/o vs. w/ \energy & \textbf{$<$ 0.0001} & \textbf{$<$ 0.0001} & \textbf{$<$ 0.0001} & \textbf{$<$ 0.0001} \\
\addlinespace[3pt]
\cdashline{1-6}
\addlinespace[3pt]
\sapsbf & w/o vs. w/ \energy & \textbf{$<$ 0.0001} & \textbf{$<$ 0.0001} & \textbf{$<$ 0.0001} & \textbf{$<$ 0.0001} \\
\midrule
\multicolumn{6}{c}{\textbf{CIFAR-100-LT \footnotesize ($\lambda = 0.02$)}} \\
\midrule
\apsbf & w/o vs. w/ \energy & \textbf{$<$ 0.0001} & \textbf{$<$ 0.0001} & \textbf{$<$ 0.0001} & \textbf{$<$ 0.0001} \\
\addlinespace[3pt]
\cdashline{1-6}
\addlinespace[3pt]
\rapsbf & w/o vs. w/ \energy & \textbf{$<$ 0.0001} & \textbf{$<$ 0.0001} & \textbf{$<$ 0.0001} & \textbf{$<$ 0.0001} \\
\addlinespace[3pt]
\cdashline{1-6}
\addlinespace[3pt]
\sapsbf & w/o vs. w/ \energy & \textbf{$<$ 0.0001} & \textbf{$<$ 0.0001} & \textbf{$<$ 0.0001} & \textbf{$<$ 0.0001} \\
\bottomrule
\end{tabular}
}
\end{table*}

\newpage
\section{Detailed Results on ImageNet}
The table~\ref{tab:imagenet_detailed_results} provides an evaluation of 3 nonconformity score across 16 different model architectures on the ImageNet validation set. We compare adaptive baseline methods (\aps, \raps, and \saps) against their respective Energy-based counterparts. 

\begin{table}[ht!] 
\centering 
\small 
\caption{Comparison of average prediction set sizes and accuracy for conformal methods on ImageNet at confidence level of $95\%$ $(\alpha=0.05)$. Lower average set size is better. Set sizes are shown as baseline (w/o \energy) $\rightarrow$ Energy-based (w/ \energy). The \textbf{bold} value highlights the superior (smaller) set size within each pair. The results are reported as the median of means over 10 trials.}
\label{tab:imagenet_detailed_results} 
\renewcommand{\arraystretch}{1.5} 
\begin{tabular}{
  l
  S[table-format=1.3]
  S[table-format=1.3]
  c 
  c 
  c
}
\toprule
& \multicolumn{2}{c}{Accuracy} & \multicolumn{3}{c}{Average Set Size (w/o \energy{} $\rightarrow$ w/ \energy) $\downarrow$} \\
\cmidrule(r){2-3} \cmidrule(l){4-6}

Model & {Top-1} & {Top-5} & {\apsbf} & {\rapsbf} & {\sapsbf} \\
\midrule
ResNet152 & 0.783 & 0.940 & 3.82 $\rightarrow$ \bresult{3.48} & 3.25 $\rightarrow$ \bresult{3.17} & 2.87 $\rightarrow$ \bresult{2.83} \\
ResNet101 & 0.774 & 0.936 & 3.97 $\rightarrow$ \bresult{3.70} & 3.60 $\rightarrow$ \bresult{3.32} & 3.15 $\rightarrow$ \bresult{3.03} \\
ResNet50  & 0.761 & 0.929 & 4.09 $\rightarrow$ \bresult{3.97} & 4.16 $\rightarrow$ \bresult{3.84} & 3.70 $\rightarrow$ \bresult{3.63} \\
ResNet34  & 0.733 & 0.914 & 9.86 $\rightarrow$ \bresult{8.29} & 9.94 $\rightarrow$ \bresult{7.73} & 9.57 $\rightarrow$ \bresult{7.41} \\
ResNet18  & 0.698 & 0.891 & 14.23 $\rightarrow$ \bresult{12.06} & 15.16 $\rightarrow$ \bresult{11.43} & 14.56 $\rightarrow$ \bresult{10.93} \\
VGG19     & 0.742 & 0.918 & 8.41 $\rightarrow$ \bresult{7.14} & 8.85 $\rightarrow$ \bresult{6.85} & 8.36 $\rightarrow$ \bresult{6.71} \\
VGG16     & 0.734 & 0.915 & 8.70 $\rightarrow$ \bresult{7.34} & 9.06 $\rightarrow$ \bresult{7.01} & 8.67 $\rightarrow$ \bresult{6.89} \\
VGG13     & 0.716 & 0.904 & 10.77 $\rightarrow$ \bresult{9.29} & 11.83 $\rightarrow$ \bresult{8.78} & 11.60 $\rightarrow$ \bresult{8.58} \\
VGG11     & 0.704 & 0.898 & 12.36 $\rightarrow$ \bresult{10.55} & 13.42 $\rightarrow$ \bresult{10.09} & 13.10 $\rightarrow$ \bresult{9.62} \\
ViT-B/16 & 0.811 & 0.953 & 4.70 $\rightarrow$ \bresult{4.29} & 4.10 $\rightarrow$ \bresult{3.56} & 3.53 $\rightarrow$ \bresult{3.24} \\
ViT-B/32 & 0.759 & 0.925 & 9.62 $\rightarrow$ \bresult{8.25} & 8.52 $\rightarrow$ \bresult{7.28} & 7.93 $\rightarrow$ \bresult{6.68} \\
Swin\_s & 0.832 & 0.964 & \bresult{2.79} $\rightarrow$ 2.81 & 3.13 $\rightarrow$ \bresult{2.87} & 2.74 $\rightarrow$ \bresult{2.65} \\
Swin\_t & 0.815 & 0.958 & \bresult{3.36} $\rightarrow$ 3.38 & 3.64 $\rightarrow$ \bresult{3.42} & 3.28 $\rightarrow$ \bresult{3.18} \\
EfficientNet\_b4 & 0.834 & 0.966 & 5.87 $\rightarrow$ \bresult{4.99} & 4.87 $\rightarrow$ \bresult{4.28} & 4.33 $\rightarrow$ \bresult{3.93} \\
EfficientNet\_v2\_m & 0.851 & 0.972 & 5.93 $\rightarrow$ \bresult{5.60} & 5.16 $\rightarrow$ \bresult{5.16} & 4.86 $\rightarrow$ \bresult{4.69} \\
ShuffleNet\_v2\_x1\_0 & 0.694 & 0.883 & 19.17 $\rightarrow$ \bresult{14.79} & 19.63 $\rightarrow$ \bresult{14.48} & 19.40 $\rightarrow$ \bresult{14.07} \\
\midrule
\textbf{Average} & \textbf{0.765} & \textbf{0.929} & \textbf{7.98 $\rightarrow$ \bresult{6.87}} & \textbf{8.02 $\rightarrow$ \bresult{6.45}} & \textbf{7.60 $\rightarrow$ \bresult{6.13}} \\
\bottomrule
\end{tabular}
\end{table}

\section{AI usage clarification}
Large Language Models were used to polish the writing of this manuscript by improving grammar, spelling, sentence flow, and overall readability. All research design, analysis, and interpretation were fully carried out and decided by the authors.

%% file: sections/related_works.tex
CP is a statistical framework that provides distribution-free, finite-sample coverage guarantees for predictions~\citep{vovk2005algorithmic}. This robust approach to uncertainty quantification has seen widespread adoption across numerous real-world applications. These include regression~\citep{lei2014distribution, romano2019conformalized}, classification~\citep{sadinle2019least}, structured prediction~\citep{bates2021distribution}, large language models (LLMs)~\citep{su2024api, cherian2024large, kumar2023conformal, ren2023robots, quach2023conformal}, and diffusion models~\citep{horwitz2022conffusion, teneggi2023trust}.
graph neural networks (GNNs)~\citep{zargarbashi2023conformal, huang2024uncertainty, wijegunawardana2020node, clarkson2023distribution, song2024similarity}, and image generative models~\citep{horwitz2022conffusion}. 
Further applications are found in robotic control~\citep{kang2024colep, luo2024trustworthy}, 
hyperspectral imaging~\citep{liu2024spatial}, 
healthcare~\citep{lindemann2024formal}, finance~\citep{bellotti2021optimized}, autonomous systems and automated vehicles~\citep{lindemann2024formal, zecchin2024generalization, bang2024safe}, 
human-in-the-loop decision making~\citep{straitouri2022improving, cresswell2024conformal}, bioprocessing~\citep{pham2025capturing},
and scientific machine learning~\citep{moya2024conformalized, podina2024conformalized}.

The foundational inductive conformal prediction framework~\citep{vovk2005algorithmic} often employs a split conformal (or inductive conformal) approach, where the dataset is divided into a training set for model fitting and a disjoint calibration set for uncertainty quantification~\citep{Papadopoulos2002QualifiedPF, vovk2005algorithmic, shafer2008tutorial, angelopoulos2021gentle, lei2015conformal}. This is done to manage the computational aspects of CP. Beyond this common split, other CP variants include methods based on cross-validation~\citep{vovk2015cross} or the jackknife (i.e., leave-one-out) technique~\citep{barber2021predictive}. The primary goals within CP research are to enhance the efficiency of prediction sets (i.e., reduce their size) and to ensure and improve the validity of coverage rates.

\paragraph{Improving Prediction Set Efficiency.}
Strategies to improve the efficiency of prediction sets predominantly fall into two categories: training-time modifications and post-hoc adjustments.

One line of research focuses on developing new training algorithms or regularizations to learn models that inherently produce smaller prediction sets while maintaining coverage~\citep{bellotti2021optimized, colombo2020training, chen2021learning, stutz2021learning, einbinder2022training, bai2022efficient, fisch2021few, yang2021finite, correia2024information}. 
For example, the uncertainty-aware conformal loss aims to optimize \aps~\citep{romano2020classification} by encouraging non-conformity scores towards a uniform distribution~\citep{einbinder2022training}, while \conftr~introduces a regularization term to minimize average set size~\citep{stutz2021learning}. 
However, such training methods can be computationally intensive due to model retraining and optimization complexity. Early stopping has also been explored as a technique to select models leading to more compact prediction sets under guaranteed coverage~\citep{DBLP:conf/icml/LiangZS23}.

The other avenue involves post-hoc techniques applied to pre-trained models. This includes the design of novel non-conformity score functions. Notable examples are \lac~\citep{sadinle2019least}, \aps~\citep{romano2020classification}, \raps~\citep{angelopoulos2020uncertainty}, \saps~\citep{huang2023conformal}, \topk~\citep{angelopoulos2020uncertainty, luo2024trustworthy}, and others~\citep{DBLP:conf/aaai/GhoshB0D23}. 
Post-hoc learning methods have also been proposed~\citep{xi2024does}. 
Research also addresses unique settings such as federated learning~\citep{lu2023federated, plassier2023conformal}, multi-label classification~\citep{cauchois2021knowing, fisch2022conformal, papadopoulos2014cross}, outlier detection~\citep{bates2023testing, guan2022prediction}, and out-of-distribution (OOD) detection~\citep{chen2023batch, novello2024out}. A common challenge for some post-hoc methods is their reliance on potentially unreliable probability outputs from models.

Recent efforts have sought to make conformal prediction sets more adaptive by explicitly incorporating epistemic uncertainty. One line of work proposes methods that operate on richer, second-order predictions. For instance, \cite{javanmardi2025optimal} introduce Bernoulli Prediction Sets (BPS), which construct provably optimal (i.e., smallest) prediction sets under the assumption that the true data distribution is contained within a given "credal set" derived from models like deep ensembles or Bayesian neural networks. A complementary, model-agnostic approach is taken by \cite{cabezas2025epistemic} with EPICSCORE, which enhances any standard nonconformity score by training a separate Bayesian model to learn its conditional distribution. This transforms the score to reflect epistemic uncertainty, achieving asymptotic conditional coverage. However, our energy-based framework improves the adaptiveness of Conformal Classifiers by leveraging uncertainty information from pre-softmax logits via the Helmholtz free energy, thus avoiding the need for second-order predictors or additional post-hoc computational costs.

\paragraph{Ensuring Validity and Enhancing Coverage Rates.}
A significant body of work is dedicated to ensuring the validity of the marginal coverage rate and improving it, particularly under challenging conditions, as well as striving for stronger conditional coverage guarantees~\citep{6784618, lofstrom2015bias}.
Efforts have been made to maintain marginal coverage by adapting CP to scenarios involving adversarial examples~\citep{gendler2021adversarially, kang2024colep}, covariate shift~\citep{tibshirani2019conformal, deng2023happymap}, label shift~\citep{podkopaev2021distribution, plassier2023conformal}, and noisy labels~\citep{einbinder2022conformal, sesia2023adaptive}.

Beyond marginal coverage, many CP algorithms pursue forms of conditional coverage~\citep{vovk2012conditional}. This includes training-conditional validity, which aims to ensure that most training dataset realizations result in valid marginal coverage on future test data~\citep{bian2023training, pournaderi2024trainingconditional}. Group-conditional CP methods seek to guarantee coverage across predefined groups within the population~\citep{javanmard2022prediction, gibbs2025conformal, melki2023group}.
While achieving exact pointwise conditional coverage is known to be impossible in general~\citep{foygel2021limits}, practical approaches for approximate or class-conditional coverage exist. For instance, \lac~demonstrates the possibility of efficient class-conditional coverage~\citep{sadinle2019least}. Clustered CP improves class-conditional coverage by leveraging the label space taxonomy, particularly when the number of classes is large~\citep{ding2023class}. Other methods, like $k$-Class-conditional CP, calibrate class-specific score thresholds based on top-$k$ errors. The goal remains to enhance conditional coverage properties while still producing efficient and informative prediction sets.

%% file: iclr2026_conference.bib
@String{Computer = "{IEEE} Computer" }

@String{Springer = "Springer-Verlag" }

@inproceedings{guo2017calibration,
  author       = {Chuan Guo and
                  Geoff Pleiss and
                  Yu Sun and
                  Kilian Q. Weinberger},
  editor       = {Doina Precup and
                  Yee Whye Teh},
  title        = {On Calibration of Modern Neural Networks},
  booktitle    = {Proceedings of the 34th International Conference on Machine Learning,
                  {ICML} 2017, Sydney, NSW, Australia, 6-11 August 2017},
  series       = {Proceedings of Machine Learning Research},
  volume       = {70},
  pages        = {1321--1330},
  publisher    = {{PMLR}},
  year         = {2017},
  url          = {http://proceedings.mlr.press/v70/guo17a.html},
  bibsource    = {dblp computer science bibliography, https://dblp.org}
}

@article{angelopoulos2021gentle,
  author       = {Anastasios N. Angelopoulos and
                  Stephen Bates},
  title        = {A Gentle Introduction to Conformal Prediction and Distribution-Free
                  Uncertainty Quantification},
  journal      = {CoRR},
  volume       = {abs/2107.07511},
  year         = {2021},
  url          = {https://arxiv.org/abs/2107.07511},
  eprinttype    = {arXiv},
  eprint       = {2107.07511},
  bibsource    = {dblp computer science bibliography, https://dblp.org}
}

@article{shafer2008tutorial,
  author       = {Glenn Shafer and
                  Vladimir Vovk},
  title        = {A Tutorial on Conformal Prediction},
  journal      = {J. Mach. Learn. Res.},
  volume       = {9},
  pages        = {371--421},
  year         = {2008},
  url          = {https://dl.acm.org/doi/10.5555/1390681.1390693},
  doi          = {10.5555/1390681.1390693},
  bibsource    = {dblp computer science bibliography, https://dblp.org}
}

@inproceedings{romano2020classification,
  author       = {Yaniv Romano and
                  Matteo Sesia and
                  Emmanuel J. Cand{\`{e}}s},
  editor       = {Hugo Larochelle and
                  Marc'Aurelio Ranzato and
                  Raia Hadsell and
                  Maria{-}Florina Balcan and
                  Hsuan{-}Tien Lin},
  title        = {Classification with Valid and Adaptive Coverage},
  booktitle    = {Advances in Neural Information Processing Systems 33: Annual Conference
                  on Neural Information Processing Systems 2020, NeurIPS 2020, December
                  6-12, 2020, virtual},
  year         = {2020},
  url          = {https://proceedings.neurips.cc/paper/2020/hash/244edd7e85dc81602b7615cd705545f5-Abstract.html},
  bibsource    = {dblp computer science bibliography, https://dblp.org}
}

@inproceedings{angelopoulos2020uncertainty,
  author       = {Anastasios Nikolas Angelopoulos and
                  Stephen Bates and
                  Michael I. Jordan and
                  Jitendra Malik},
  title        = {Uncertainty Sets for Image Classifiers using Conformal Prediction},
  booktitle    = {9th International Conference on Learning Representations, {ICLR} 2021,
                  Virtual Event, Austria, May 3-7, 2021},
  publisher    = {OpenReview.net},
  year         = {2021},
  url          = {https://openreview.net/forum?id=eNdiU\_DbM9},
  bibsource    = {dblp computer science bibliography, https://dblp.org}
}

@inproceedings{huang2023conformal,
  author       = {Jianguo Huang and
                  Huajun Xi and
                  Linjun Zhang and
                  Huaxiu Yao and
                  Yue Qiu and
                  Hongxin Wei},
  title        = {Conformal Prediction for Deep Classifier via Label Ranking},
  booktitle    = {Forty-first International Conference on Machine Learning, {ICML} 2024,
                  Vienna, Austria, July 21-27, 2024},
  publisher    = {OpenReview.net},
  year         = {2024},
  url          = {https://openreview.net/forum?id=b3pYoZfcoo},
  bibsource    = {dblp computer science bibliography, https://dblp.org}
}

@inproceedings{stutz2021learning,
  author       = {David Stutz and
                  Krishnamurthy Dvijotham and
                  Ali Taylan Cemgil and
                  Arnaud Doucet},
  title        = {Learning Optimal Conformal Classifiers},
  booktitle    = {The Tenth International Conference on Learning Representations, {ICLR}
                  2022, Virtual Event, April 25-29, 2022},
  publisher    = {OpenReview.net},
  year         = {2022},
  url          = {https://openreview.net/forum?id=t8O-4LKFVx},
  bibsource    = {dblp computer science bibliography, https://dblp.org}
}

@inproceedings{einbinder2022training,
  author       = {Bat{-}Sheva Einbinder and
                  Yaniv Romano and
                  Matteo Sesia and
                  Yanfei Zhou},
  editor       = {Sanmi Koyejo and
                  S. Mohamed and
                  A. Agarwal and
                  Danielle Belgrave and
                  K. Cho and
                  A. Oh},
  title        = {Training Uncertainty-Aware Classifiers with Conformalized Deep Learning},
  booktitle    = {Advances in Neural Information Processing Systems 35: Annual Conference
                  on Neural Information Processing Systems 2022, NeurIPS 2022, New Orleans,
                  LA, USA, November 28 - December 9, 2022},
  year         = {2022},
  url          = {http://papers.nips.cc/paper\_files/paper/2022/hash/8c96b559340daa7bb29f56ccfbbc9c2f-Abstract-Conference.html},
  bibsource    = {dblp computer science bibliography, https://dblp.org}
}

@article{bellotti2021optimized,
  author       = {Anthony Bellotti},
  title        = {Optimized conformal classification using gradient descent approximation},
  journal      = {CoRR},
  volume       = {abs/2105.11255},
  year         = {2021},
  url          = {https://arxiv.org/abs/2105.11255},
  eprinttype    = {arXiv},
  eprint       = {2105.11255},
  bibsource    = {dblp computer science bibliography, https://dblp.org}
}

@inproceedings{deng2009imagenet,
  author       = {Jia Deng and
                  Wei Dong and
                  Richard Socher and
                  Li{-}Jia Li and
                  Kai Li and
                  Li Fei{-}Fei},
  title        = {ImageNet: {A} large-scale hierarchical image database},
  booktitle    = {2009 {IEEE} Computer Society Conference on Computer Vision and Pattern
                  Recognition {(CVPR} 2009), 20-25 June 2009, Miami, Florida, {USA}},
  pages        = {248--255},
  publisher    = {{IEEE} Computer Society},
  year         = {2009},
  url          = {https://doi.org/10.1109/CVPR.2009.5206848},
  doi          = {10.1109/CVPR.2009.5206848},
  bibsource    = {dblp computer science bibliography, https://dblp.org}
}

@article{krizhevsky2009learning,
  title={Learning multiple layers of features from tiny images},
  author={Krizhevsky, Alex and Hinton, Geoffrey and others},
  year={2009},
  publisher={Toronto, ON, Canada},
  journal = {University of Toronto}
}

@book{vovk2005algorithmic,
  title={Algorithmic learning in a random world},
  author={Vovk, Vladimir and Gammerman, Alexander and Shafer, Glenn},
  volume={29},
  year={2005},
  publisher={Springer}
}

@article{sadinle2019least,
  title={Least ambiguous set-valued classifiers with bounded error levels},
  author={Sadinle, Mauricio and Lei, Jing and Wasserman, Larry},
  journal={Journal of the American Statistical Association},
  volume={114},
  number={525},
  pages={223--234},
  year={2019},
  publisher={Taylor \& Francis}
}

@inproceedings{he2016deep,
  author       = {Kaiming He and
                  Xiangyu Zhang and
                  Shaoqing Ren and
                  Jian Sun},
  title        = {Deep Residual Learning for Image Recognition},
  booktitle    = {2016 {IEEE} Conference on Computer Vision and Pattern Recognition,
                  {CVPR} 2016, Las Vegas, NV, USA, June 27-30, 2016},
  pages        = {770--778},
  publisher    = {{IEEE} Computer Society},
  year         = {2016},
  url          = {https://doi.org/10.1109/CVPR.2016.90},
  doi          = {10.1109/CVPR.2016.90},
  bibsource    = {dblp computer science bibliography, https://dblp.org}
}

@inproceedings{romano2019conformalized,
  author       = {Yaniv Romano and
                  Evan Patterson and
                  Emmanuel J. Cand{\`{e}}s},
  editor       = {Hanna M. Wallach and
                  Hugo Larochelle and
                  Alina Beygelzimer and
                  Florence d'Alch{\'{e}}{-}Buc and
                  Emily B. Fox and
                  Roman Garnett},
  title        = {Conformalized Quantile Regression},
  booktitle    = {Advances in Neural Information Processing Systems 32: Annual Conference
                  on Neural Information Processing Systems 2019, NeurIPS 2019, December
                  8-14, 2019, Vancouver, BC, Canada},
  pages        = {3538--3548},
  year         = {2019},
  url          = {https://proceedings.neurips.cc/paper/2019/hash/5103c3584b063c431bd1268e9b5e76fb-Abstract.html},
  bibsource    = {dblp computer science bibliography, https://dblp.org}
}

@inproceedings{cherian2024large,
  author       = {John J. Cherian and
                  Isaac Gibbs and
                  Emmanuel J. Cand{\`{e}}s},
  editor       = {Amir Globersons and
                  Lester Mackey and
                  Danielle Belgrave and
                  Angela Fan and
                  Ulrich Paquet and
                  Jakub M. Tomczak and
                  Cheng Zhang},
  title        = {Large language model validity via enhanced conformal prediction methods},
  booktitle    = {Advances in Neural Information Processing Systems 38: Annual Conference
                  on Neural Information Processing Systems 2024, NeurIPS 2024, Vancouver,
                  BC, Canada, December 10 - 15, 2024},
  year         = {2024},
  url          = {http://papers.nips.cc/paper\_files/paper/2024/hash/d02ff1aeaa5c268dc34790dd1ad21526-Abstract-Conference.html},
  bibsource    = {dblp computer science bibliography, https://dblp.org}
}

@inproceedings{su2024api,
  author       = {Jiayuan Su and
                  Jing Luo and
                  Hongwei Wang and
                  Lu Cheng},
  editor       = {Yaser Al{-}Onaizan and
                  Mohit Bansal and
                  Yun{-}Nung Chen},
  title        = {{API} Is Enough: Conformal Prediction for Large Language Models Without
                  Logit-Access},
  booktitle    = {Findings of the Association for Computational Linguistics: {EMNLP}
                  2024, Miami, Florida, USA, November 12-16, 2024},
  pages        = {979--995},
  publisher    = {Association for Computational Linguistics},
  year         = {2024},
  url          = {https://doi.org/10.18653/v1/2024.findings-emnlp.54},
  doi          = {10.18653/V1/2024.FINDINGS-EMNLP.54},
  bibsource    = {dblp computer science bibliography, https://dblp.org}
}

@inproceedings{zargarbashi2023conformal,
  author       = {Soroush H. Zargarbashi and
                  Simone Antonelli and
                  Aleksandar Bojchevski},
  editor       = {Andreas Krause and
                  Emma Brunskill and
                  Kyunghyun Cho and
                  Barbara Engelhardt and
                  Sivan Sabato and
                  Jonathan Scarlett},
  title        = {Conformal Prediction Sets for Graph Neural Networks},
  booktitle    = {International Conference on Machine Learning, {ICML} 2023, 23-29 July
                  2023, Honolulu, Hawaii, {USA}},
  series       = {Proceedings of Machine Learning Research},
  volume       = {202},
  pages        = {12292--12318},
  publisher    = {{PMLR}},
  year         = {2023},
  url          = {https://proceedings.mlr.press/v202/h-zargarbashi23a.html},
  bibsource    = {dblp computer science bibliography, https://dblp.org}
}

@article{horwitz2022conffusion,
  author       = {Eliahu Horwitz and
                  Yedid Hoshen},
  title        = {Conffusion: Confidence Intervals for Diffusion Models},
  journal      = {CoRR},
  volume       = {abs/2211.09795},
  year         = {2022},
  url          = {https://doi.org/10.48550/arXiv.2211.09795},
  doi          = {10.48550/ARXIV.2211.09795},
  eprinttype    = {arXiv},
  eprint       = {2211.09795},
  bibsource    = {dblp computer science bibliography, https://dblp.org}
}

@article{barber2021predictive,
  title={Predictive inference with the jackknife+},
  author={Barber, Rina Foygel and Candes, Emmanuel J and Ramdas, Aaditya and Tibshirani, Ryan J},
  journal={The Annals of Statistics},
  volume={49},
  number={1},
  pages={486--507},
  year={2021},
  publisher={JSTOR}
}

@inproceedings{colombo2020training,
  author       = {Nicol{\`{o}} Colombo and
                  Vladimir Vovk},
  editor       = {Alexander Gammerman and
                  Vladimir Vovk and
                  Zhiyuan Luo and
                  Evgueni N. Smirnov and
                  Giovanni Cherubin and
                  Marco Christini},
  title        = {Training conformal predictors},
  booktitle    = {Conformal and Probabilistic Prediction and Applications, {COPA} 2020,
                  9-11 September 2020, Virtual Event},
  series       = {Proceedings of Machine Learning Research},
  volume       = {128},
  pages        = {55--64},
  publisher    = {{PMLR}},
  year         = {2020},
  url          = {http://proceedings.mlr.press/v128/colombo20a.html},
  bibsource    = {dblp computer science bibliography, https://dblp.org}
}

@inproceedings{gendler2021adversarially,
  author       = {Asaf Gendler and
                  Tsui{-}Wei Weng and
                  Luca Daniel and
                  Yaniv Romano},
  title        = {Adversarially Robust Conformal Prediction},
  booktitle    = {The Tenth International Conference on Learning Representations, {ICLR}
                  2022, Virtual Event, April 25-29, 2022},
  publisher    = {OpenReview.net},
  year         = {2022},
  url          = {https://openreview.net/forum?id=9L1BsI4wP1H},
  bibsource    = {dblp computer science bibliography, https://dblp.org}
}

@inproceedings{kang2024colep,
  author       = {Mintong Kang and
                  Nezihe Merve G{\"{u}}rel and
                  Linyi Li and
                  Bo Li},
  title        = {{COLEP:} Certifiably Robust Learning-Reasoning Conformal Prediction
                  via Probabilistic Circuits},
  booktitle    = {The Twelfth International Conference on Learning Representations,
                  {ICLR} 2024, Vienna, Austria, May 7-11, 2024},
  publisher    = {OpenReview.net},
  year         = {2024},
  url          = {https://openreview.net/forum?id=XN6ZPINdSg},
  bibsource    = {dblp computer science bibliography, https://dblp.org}
}

@inproceedings{podkopaev2021distribution,
  author       = {Aleksandr Podkopaev and
                  Aaditya Ramdas},
  editor       = {Cassio P. de Campos and
                  Marloes H. Maathuis and
                  Erik Quaeghebeur},
  title        = {Distribution-free uncertainty quantification for classification under
                  label shift},
  booktitle    = {Proceedings of the Thirty-Seventh Conference on Uncertainty in Artificial
                  Intelligence, {UAI} 2021, Virtual Event, 27-30 July 2021},
  series       = {Proceedings of Machine Learning Research},
  volume       = {161},
  pages        = {844--853},
  publisher    = {{AUAI} Press},
  year         = {2021},
  url          = {https://proceedings.mlr.press/v161/podkopaev21a.html},
  bibsource    = {dblp computer science bibliography, https://dblp.org}
}

@inproceedings{plassier2023conformal,
  author       = {Vincent Plassier and
                  Mehdi Makni and
                  Aleksandr Rubashevskii and
                  Eric Moulines and
                  Maxim Panov},
  editor       = {Andreas Krause and
                  Emma Brunskill and
                  Kyunghyun Cho and
                  Barbara Engelhardt and
                  Sivan Sabato and
                  Jonathan Scarlett},
  title        = {Conformal Prediction for Federated Uncertainty Quantification Under
                  Label Shift},
  booktitle    = {International Conference on Machine Learning, {ICML} 2023, 23-29 July
                  2023, Honolulu, Hawaii, {USA}},
  series       = {Proceedings of Machine Learning Research},
  volume       = {202},
  pages        = {27907--27947},
  publisher    = {{PMLR}},
  year         = {2023},
  url          = {https://proceedings.mlr.press/v202/plassier23a.html},
  bibsource    = {dblp computer science bibliography, https://dblp.org}
}

@inproceedings{tibshirani2019conformal,
  author       = {Ryan J. Tibshirani and
                  Rina Foygel Barber and
                  Emmanuel J. Cand{\`{e}}s and
                  Aaditya Ramdas},
  editor       = {Hanna M. Wallach and
                  Hugo Larochelle and
                  Alina Beygelzimer and
                  Florence d'Alch{\'{e}}{-}Buc and
                  Emily B. Fox and
                  Roman Garnett},
  title        = {Conformal Prediction Under Covariate Shift},
  booktitle    = {Advances in Neural Information Processing Systems 32: Annual Conference
                  on Neural Information Processing Systems 2019, NeurIPS 2019, December
                  8-14, 2019, Vancouver, BC, Canada},
  pages        = {2526--2536},
  year         = {2019},
  url          = {https://proceedings.neurips.cc/paper/2019/hash/8fb21ee7a2207526da55a679f0332de2-Abstract.html},
  bibsource    = {dblp computer science bibliography, https://dblp.org}
}

@inproceedings{melki2023group,
  author       = {Paul Melki and
                  Lionel Bombrun and
                  Boubacar Diallo and
                  J{\'{e}}r{\^{o}}me Dias and
                  Jean{-}Pierre Da Costa},
  title        = {Group-Conditional Conformal Prediction via Quantile Regression Calibration
                  for Crop and Weed Classification},
  booktitle    = {{IEEE/CVF} International Conference on Computer Vision, {ICCV} 2023
                  - Workshops, Paris, France, October 2-6, 2023},
  pages        = {614--623},
  publisher    = {{IEEE}},
  year         = {2023},
  url          = {https://doi.org/10.1109/ICCVW60793.2023.00068},
  doi          = {10.1109/ICCVW60793.2023.00068},
  bibsource    = {dblp computer science bibliography, https://dblp.org}
}

@inproceedings{ding2024class,
  author       = {Tiffany Ding and
                  Anastasios Angelopoulos and
                  Stephen Bates and
                  Michael I. Jordan and
                  Ryan J. Tibshirani},
  editor       = {Alice Oh and
                  Tristan Naumann and
                  Amir Globerson and
                  Kate Saenko and
                  Moritz Hardt and
                  Sergey Levine},
  title        = {Class-Conditional Conformal Prediction with Many Classes},
  booktitle    = {Advances in Neural Information Processing Systems 36: Annual Conference
                  on Neural Information Processing Systems 2023, NeurIPS 2023, New Orleans,
                  LA, USA, December 10 - 16, 2023},
  year         = {2023},
  url          = {http://papers.nips.cc/paper\_files/paper/2023/hash/cb931eddd563f8d473c355518ce8601c-Abstract-Conference.html},
  bibsource    = {dblp computer science bibliography, https://dblp.org}
}

@article{xi2024does,
  author       = {Huajun Xi and
                  Jianguo Huang and
                  Lei Feng and
                  Hongxin Wei},
  title        = {Does Confidence Calibration Help Conformal Prediction?},
  journal      = {CoRR},
  volume       = {abs/2402.04344},
  year         = {2024},
  url          = {https://doi.org/10.48550/arXiv.2402.04344},
  doi          = {10.48550/ARXIV.2402.04344},
  eprinttype    = {arXiv},
  eprint       = {2402.04344},
  bibsource    = {dblp computer science bibliography, https://dblp.org}
}

@article{liu2024spatial,
  author       = {Kangdao Liu and
                  Tianhao Sun and
                  Hao Zeng and
                  Yongshan Zhang and
                  Chi{-}Man Pun and
                  Chi{-}Man Vong},
  title        = {Spatial-Aware Conformal Prediction for Trustworthy Hyperspectral Image
                  Classification},
  journal      = {CoRR},
  volume       = {abs/2409.01236},
  year         = {2024},
  url          = {https://doi.org/10.48550/arXiv.2409.01236},
  doi          = {10.48550/ARXIV.2409.01236},
  eprinttype    = {arXiv},
  eprint       = {2409.01236},
  bibsource    = {dblp computer science bibliography, https://dblp.org}
}

@inproceedings{vovk2012conditional,
  author       = {Vladimir Vovk},
  editor       = {Steven C. H. Hoi and
                  Wray L. Buntine},
  title        = {Conditional validity of inductive conformal predictors},
  booktitle    = {Proceedings of the 4th Asian Conference on Machine Learning, {ACML}
                  2012, Singapore, Singapore, November 4-6, 2012},
  series       = {{JMLR} Proceedings},
  volume       = {25},
  pages        = {475--490},
  publisher    = {JMLR.org},
  year         = {2012},
  url          = {http://proceedings.mlr.press/v25/vovk12.html},
  bibsource    = {dblp computer science bibliography, https://dblp.org}
}

@article{luo2024trustworthy,
  author       = {Rui Luo and
                  Zhixin Zhou},
  title        = {Trustworthy Classification through Rank-Based Conformal Prediction
                  Sets},
  journal      = {CoRR},
  volume       = {abs/2407.04407},
  year         = {2024},
  url          = {https://doi.org/10.48550/arXiv.2407.04407},
  doi          = {10.48550/ARXIV.2407.04407},
  eprinttype    = {arXiv},
  eprint       = {2407.04407},
  bibsource    = {dblp computer science bibliography, https://dblp.org}
}

@inproceedings{huang2024uncertainty,
  author       = {Kexin Huang and
                  Ying Jin and
                  Emmanuel J. Cand{\`{e}}s and
                  Jure Leskovec},
  editor       = {Alice Oh and
                  Tristan Naumann and
                  Amir Globerson and
                  Kate Saenko and
                  Moritz Hardt and
                  Sergey Levine},
  title        = {Uncertainty Quantification over Graph with Conformalized Graph Neural
                  Networks},
  booktitle    = {Advances in Neural Information Processing Systems 36: Annual Conference
                  on Neural Information Processing Systems 2023, NeurIPS 2023, New Orleans,
                  LA, USA, December 10 - 16, 2023},
  year         = {2023},
  url          = {http://papers.nips.cc/paper\_files/paper/2023/hash/54a1495b06c4ee2f07184afb9a37abda-Abstract-Conference.html},
  bibsource    = {dblp computer science bibliography, https://dblp.org}
}

@inproceedings{correia2024information,
  author       = {Alvaro H. C. Correia and
                  Fabio Valerio Massoli and
                  Christos Louizos and
                  Arash Behboodi},
  editor       = {Amir Globersons and
                  Lester Mackey and
                  Danielle Belgrave and
                  Angela Fan and
                  Ulrich Paquet and
                  Jakub M. Tomczak and
                  Cheng Zhang},
  title        = {An Information Theoretic Perspective on Conformal Prediction},
  booktitle    = {Advances in Neural Information Processing Systems 38: Annual Conference
                  on Neural Information Processing Systems 2024, NeurIPS 2024, Vancouver,
                  BC, Canada, December 10 - 15, 2024},
  year         = {2024},
  url          = {http://papers.nips.cc/paper\_files/paper/2024/hash/b6fa3ed9624c184bd73e435123bd576a-Abstract-Conference.html},
  bibsource    = {dblp computer science bibliography, https://dblp.org}
}

@article{lindemann2024formal,
  author       = {Lars Lindemann and
                  Yiqi Zhao and
                  Xinyi Yu and
                  George J. Pappas and
                  Jyotirmoy V. Deshmukh},
  title        = {Formal Verification and Control with Conformal Prediction},
  journal      = {CoRR},
  volume       = {abs/2409.00536},
  year         = {2024},
  url          = {https://doi.org/10.48550/arXiv.2409.00536},
  doi          = {10.48550/ARXIV.2409.00536},
  eprinttype    = {arXiv},
  eprint       = {2409.00536},
  bibsource    = {dblp computer science bibliography, https://dblp.org}
}

@inproceedings{zecchin2024generalization,
  author       = {Matteo Zecchin and
                  Sangwoo Park and
                  Osvaldo Simeone and
                  Fredrik Hellstr{\"{o}}m},
  title        = {Generalization and Informativeness of Conformal Prediction},
  booktitle    = {{IEEE} International Symposium on Information Theory, {ISIT} 2024,
                  Athens, Greece, July 7-12, 2024},
  pages        = {244--249},
  publisher    = {{IEEE}},
  year         = {2024},
  url          = {https://doi.org/10.1109/ISIT57864.2024.10619313},
  doi          = {10.1109/ISIT57864.2024.10619313},
  bibsource    = {dblp computer science bibliography, https://dblp.org}
}

@misc{torchvision2016,
    title        = {TorchVision: PyTorch's Computer Vision library},
    author       = {TorchVision maintainers and contributors},
    year         = 2016,
    journal      = {GitHub repository},
    publisher    = {GitHub},
    howpublished = {\url{https://github.com/pytorch/vision}}
}

@article{lecun2006tutorial,
  title={A tutorial on energy-based learning},
  author={LeCun, Yann and Chopra, Sumit and Hadsell, Raia and Ranzato, M and Huang, Fujie and others},
  journal={Predicting structured data},
  volume={1},
  number={0},
  year={2006}
}

@inproceedings{liu2020energy,
  author       = {Weitang Liu and
                  Xiaoyun Wang and
                  John D. Owens and
                  Yixuan Li},
  editor       = {Hugo Larochelle and
                  Marc'Aurelio Ranzato and
                  Raia Hadsell and
                  Maria{-}Florina Balcan and
                  Hsuan{-}Tien Lin},
  title        = {Energy-based Out-of-distribution Detection},
  booktitle    = {Advances in Neural Information Processing Systems 33: Annual Conference
                  on Neural Information Processing Systems 2020, NeurIPS 2020, December
                  6-12, 2020, virtual},
  year         = {2020},
  url          = {https://proceedings.neurips.cc/paper/2020/hash/f5496252609c43eb8a3d147ab9b9c006-Abstract.html},
  bibsource    = {dblp computer science bibliography, https://dblp.org}
}

@inproceedings{lee2018simple,
  author       = {Kimin Lee and
                  Kibok Lee and
                  Honglak Lee and
                  Jinwoo Shin},
  editor       = {Samy Bengio and
                  Hanna M. Wallach and
                  Hugo Larochelle and
                  Kristen Grauman and
                  Nicol{\`{o}} Cesa{-}Bianchi and
                  Roman Garnett},
  title        = {A Simple Unified Framework for Detecting Out-of-Distribution Samples
                  and Adversarial Attacks},
  booktitle    = {Advances in Neural Information Processing Systems 31: Annual Conference
                  on Neural Information Processing Systems 2018, NeurIPS 2018, December
                  3-8, 2018, Montr{\'{e}}al, Canada},
  pages        = {7167--7177},
  year         = {2018},
  url          = {https://proceedings.neurips.cc/paper/2018/hash/abdeb6f575ac5c6676b747bca8d09cc2-Abstract.html},
  bibsource    = {dblp computer science bibliography, https://dblp.org}
}

@inproceedings{hein2019relu,
  author       = {Matthias Hein and
                  Maksym Andriushchenko and
                  Julian Bitterwolf},
  title        = {Why ReLU Networks Yield High-Confidence Predictions Far Away From
                  the Training Data and How to Mitigate the Problem},
  booktitle    = {{IEEE} Conference on Computer Vision and Pattern Recognition, {CVPR}
                  2019, Long Beach, CA, USA, June 16-20, 2019},
  pages        = {41--50},
  publisher    = {Computer Vision Foundation / {IEEE}},
  year         = {2019},
  url          = {http://openaccess.thecvf.com/content\_CVPR\_2019/html/Hein\_Why\_ReLU\_Networks\_Yield\_High-Confidence\_Predictions\_Far\_Away\_From\_the\_CVPR\_2019\_paper.html},
  doi          = {10.1109/CVPR.2019.00013},
  bibsource    = {dblp computer science bibliography, https://dblp.org}
}

@article{lei2014distribution,
  title={Distribution-free prediction bands for non-parametric regression},
  author={Lei, Jing and Wasserman, Larry},
  journal={Journal of the Royal Statistical Society Series B: Statistical Methodology},
  volume={76},
  number={1},
  pages={71--96},
  year={2014},
  publisher={Oxford University Press}
}

@article{bates2021distribution,
  author       = {Stephen Bates and
                  Anastasios Angelopoulos and
                  Lihua Lei and
                  Jitendra Malik and
                  Michael I. Jordan},
  title        = {Distribution-free, Risk-controlling Prediction Sets},
  journal      = {J. {ACM}},
  volume       = {68},
  number       = {6},
  pages        = {43:1--43:34},
  year         = {2021},
  url          = {https://doi.org/10.1145/3478535},
  doi          = {10.1145/3478535},
  bibsource    = {dblp computer science bibliography, https://dblp.org}
}

@article{kumar2023conformal,
  author       = {Bhawesh Kumar and
                  Charlie Lu and
                  Gauri Gupta and
                  Anil Palepu and
                  David R. Bellamy and
                  Ramesh Raskar and
                  Andrew Beam},
  title        = {Conformal Prediction with Large Language Models for Multi-Choice Question
                  Answering},
  journal      = {CoRR},
  volume       = {abs/2305.18404},
  year         = {2023},
  url          = {https://doi.org/10.48550/arXiv.2305.18404},
  doi          = {10.48550/ARXIV.2305.18404},
  eprinttype    = {arXiv},
  eprint       = {2305.18404},
  bibsource    = {dblp computer science bibliography, https://dblp.org}
}

@inproceedings{ren2023robots,
  author       = {Allen Z. Ren and
                  Anushri Dixit and
                  Alexandra Bodrova and
                  Sumeet Singh and
                  Stephen Tu and
                  Noah Brown and
                  Peng Xu and
                  Leila Takayama and
                  Fei Xia and
                  Jake Varley and
                  Zhenjia Xu and
                  Dorsa Sadigh and
                  Andy Zeng and
                  Anirudha Majumdar},
  editor       = {Jie Tan and
                  Marc Toussaint and
                  Kourosh Darvish},
  title        = {Robots That Ask For Help: Uncertainty Alignment for Large Language
                  Model Planners},
  booktitle    = {Conference on Robot Learning, CoRL 2023, 6-9 November 2023, Atlanta,
                  GA, {USA}},
  series       = {Proceedings of Machine Learning Research},
  volume       = {229},
  pages        = {661--682},
  publisher    = {{PMLR}},
  year         = {2023},
  url          = {https://proceedings.mlr.press/v229/ren23a.html},
  bibsource    = {dblp computer science bibliography, https://dblp.org}
}

@inproceedings{quach2023conformal,
  author       = {Victor Quach and
                  Adam Fisch and
                  Tal Schuster and
                  Adam Yala and
                  Jae Ho Sohn and
                  Tommi S. Jaakkola and
                  Regina Barzilay},
  title        = {Conformal Language Modeling},
  booktitle    = {The Twelfth International Conference on Learning Representations,
                  {ICLR} 2024, Vienna, Austria, May 7-11, 2024},
  publisher    = {OpenReview.net},
  year         = {2024},
  url          = {https://openreview.net/forum?id=pzUhfQ74c5},
  bibsource    = {dblp computer science bibliography, https://dblp.org}
}

@inproceedings{teneggi2023trust,
  author       = {Jacopo Teneggi and
                  Matthew Tivnan and
                  J. Webster Stayman and
                  Jeremias Sulam},
  editor       = {Andreas Krause and
                  Emma Brunskill and
                  Kyunghyun Cho and
                  Barbara Engelhardt and
                  Sivan Sabato and
                  Jonathan Scarlett},
  title        = {How to Trust Your Diffusion Model: {A} Convex Optimization Approach
                  to Conformal Risk Control},
  booktitle    = {International Conference on Machine Learning, {ICML} 2023, 23-29 July
                  2023, Honolulu, Hawaii, {USA}},
  series       = {Proceedings of Machine Learning Research},
  volume       = {202},
  pages        = {33940--33960},
  publisher    = {{PMLR}},
  year         = {2023},
  url          = {https://proceedings.mlr.press/v202/teneggi23a.html},
  bibsource    = {dblp computer science bibliography, https://dblp.org}
}

@inproceedings{wijegunawardana2020node,
  title={Node Classification with Bounded Error Rates},
  author={Wijegunawardana, Pivithuru and Gera, Ralucca and Soundarajan, Sucheta},
  booktitle={Complex Networks XI: Proceedings of the 11th Conference on Complex Networks CompleNet 2020},
  pages={26--38},
  year={2020},
  organization={Springer}
}

@article{bang2024safe,
  author       = {Heeseung Bang and
                  Aditya Dave and
                  Andreas A. Malikopoulos},
  title        = {Safe Merging in Mixed Traffic with Confidence},
  journal      = {CoRR},
  volume       = {abs/2403.05742},
  year         = {2024},
  url          = {https://doi.org/10.48550/arXiv.2403.05742},
  doi          = {10.48550/ARXIV.2403.05742},
  eprinttype    = {arXiv},
  eprint       = {2403.05742},
  bibsource    = {dblp computer science bibliography, https://dblp.org}
}

@article{moya2024conformalized,
  author       = {Christian Moya and
                  Amirhossein Mollaali and
                  Zecheng Zhang and
                  Lu Lu and
                  Guang Lin},
  title        = {Conformalized-DeepONet: {A} Distribution-Free Framework for Uncertainty
                  Quantification in Deep Operator Networks},
  journal      = {CoRR},
  volume       = {abs/2402.15406},
  year         = {2024},
  url          = {https://doi.org/10.48550/arXiv.2402.15406},
  doi          = {10.48550/ARXIV.2402.15406},
  eprinttype    = {arXiv},
  eprint       = {2402.15406},
  bibsource    = {dblp computer science bibliography, https://dblp.org}
}

@article{podina2024conformalized,
  author       = {Lena Podina and
                  Mahdi Torabi Rad and
                  Mohammad Kohandel},
  title        = {Conformalized Physics-Informed Neural Networks},
  journal      = {CoRR},
  volume       = {abs/2405.08111},
  year         = {2024},
  url          = {https://doi.org/10.48550/arXiv.2405.08111},
  doi          = {10.48550/ARXIV.2405.08111},
  eprinttype    = {arXiv},
  eprint       = {2405.08111},
  bibsource    = {dblp computer science bibliography, https://dblp.org}
}

@inproceedings{Papadopoulos2002QualifiedPF,
  author       = {Harris Papadopoulos and
                  Vladimir Vovk and
                  Alex Gammerman},
  editor       = {M. Arif Wani and
                  Hamid R. Arabnia and
                  Krzysztof J. Cios and
                  Khalid Hafeez and
                  Graham Kendall},
  title        = {Qualified Prediction for Large Data Sets in the Case of Pattern Recognition},
  booktitle    = {Proceedings of the 2002 International Conference on Machine Learning
                  and Applications - {ICMLA} 2002, June 24-27, 2002, Las Vegas, Nevada,
                  {USA}},
  pages        = {159--163},
  publisher    = {{CSREA} Press},
  year         = {2002},
  bibsource    = {dblp computer science bibliography, https://dblp.org}
}

@inproceedings{cresswell2024conformal,
  author       = {Jesse C. Cresswell and
                  Yi Sui and
                  Bhargava Kumar and
                  No{\"{e}}l Vouitsis},
  title        = {Conformal Prediction Sets Improve Human Decision Making},
  booktitle    = {Forty-first International Conference on Machine Learning, {ICML} 2024,
                  Vienna, Austria, July 21-27, 2024},
  publisher    = {OpenReview.net},
  year         = {2024},
  url          = {https://openreview.net/forum?id=4CO45y7Mlv},
  bibsource    = {dblp computer science bibliography, https://dblp.org}
}

@article{lei2015conformal,
  author       = {Jing Lei and
                  Alessandro Rinaldo and
                  Larry A. Wasserman},
  title        = {A conformal prediction approach to explore functional data},
  journal      = {Ann. Math. Artif. Intell.},
  volume       = {74},
  number       = {1-2},
  pages        = {29--43},
  year         = {2015},
  url          = {https://doi.org/10.1007/s10472-013-9366-6},
  doi          = {10.1007/S10472-013-9366-6},
  bibsource    = {dblp computer science bibliography, https://dblp.org}
}

@article{vovk2015cross,
  author       = {Vladimir Vovk},
  title        = {Cross-conformal predictors},
  journal      = {Ann. Math. Artif. Intell.},
  volume       = {74},
  number       = {1-2},
  pages        = {9--28},
  year         = {2015},
  url          = {https://doi.org/10.1007/s10472-013-9368-4},
  doi          = {10.1007/S10472-013-9368-4},
  bibsource    = {dblp computer science bibliography, https://dblp.org}
}

@article{bai2022efficient,
  title={Efficient and differentiable conformal prediction with general function classes},
  author={Bai, Yu and Mei, Song and Wang, Huan and Zhou, Yingbo and Xiong, Caiming},
  journal={arXiv preprint arXiv:2202.11091},
  year={2022}
}

@inproceedings{fisch2021few,
  author       = {Adam Fisch and
                  Tal Schuster and
                  Tommi S. Jaakkola and
                  Regina Barzilay},
  editor       = {Marina Meila and
                  Tong Zhang},
  title        = {Few-Shot Conformal Prediction with Auxiliary Tasks},
  booktitle    = {Proceedings of the 38th International Conference on Machine Learning,
                  {ICML} 2021, 18-24 July 2021, Virtual Event},
  series       = {Proceedings of Machine Learning Research},
  volume       = {139},
  pages        = {3329--3339},
  publisher    = {{PMLR}},
  year         = {2021},
  url          = {http://proceedings.mlr.press/v139/fisch21a.html},
  bibsource    = {dblp computer science bibliography, https://dblp.org}
}

@inproceedings{lu2023federated,
  author       = {Charles Lu and
                  Yaodong Yu and
                  Sai Praneeth Karimireddy and
                  Michael I. Jordan and
                  Ramesh Raskar},
  editor       = {Andreas Krause and
                  Emma Brunskill and
                  Kyunghyun Cho and
                  Barbara Engelhardt and
                  Sivan Sabato and
                  Jonathan Scarlett},
  title        = {Federated Conformal Predictors for Distributed Uncertainty Quantification},
  booktitle    = {International Conference on Machine Learning, {ICML} 2023, 23-29 July
                  2023, Honolulu, Hawaii, {USA}},
  series       = {Proceedings of Machine Learning Research},
  volume       = {202},
  pages        = {22942--22964},
  publisher    = {{PMLR}},
  year         = {2023},
  url          = {https://proceedings.mlr.press/v202/lu23i.html},
  bibsource    = {dblp computer science bibliography, https://dblp.org}
}

@inproceedings{DBLP:conf/aaai/GhoshB0D23,
  author       = {Subhankar Ghosh and
                  Taha Belkhouja and
                  Yan Yan and
                  Janardhan Rao Doppa},
  booktitle    = {Proceedings of the AAAI Conference on Artificial Intelligence},
title        = {Improving Uncertainty Quantification of Deep Classifiers via Neighborhood
                  Conformal Prediction: Novel Algorithm and Theoretical Analysis},
  pages        = {7722--7730},
  publisher    = {{AAAI} Press},
  year         = {2023},
}

@inproceedings{clarkson2023distribution,
  author       = {Jase Clarkson},
  editor       = {Andreas Krause and
                  Emma Brunskill and
                  Kyunghyun Cho and
                  Barbara Engelhardt and
                  Sivan Sabato and
                  Jonathan Scarlett},
  title        = {Distribution Free Prediction Sets for Node Classification},
  booktitle    = {International Conference on Machine Learning, {ICML} 2023, 23-29 July
                  2023, Honolulu, Hawaii, {USA}},
  series       = {Proceedings of Machine Learning Research},
  volume       = {202},
  pages        = {6268--6278},
  publisher    = {{PMLR}},
  year         = {2023},
  url          = {https://proceedings.mlr.press/v202/clarkson23a.html},
  bibsource    = {dblp computer science bibliography, https://dblp.org}
}

@inproceedings{song2024similarity,
  author       = {Jianqing Song and
                  Jianguo Huang and
                  Wenyu Jiang and
                  Baoming Zhang and
                  Shuangjie Li and
                  Chongjun Wang},
  editor       = {Amir Globersons and
                  Lester Mackey and
                  Danielle Belgrave and
                  Angela Fan and
                  Ulrich Paquet and
                  Jakub M. Tomczak and
                  Cheng Zhang},
  title        = {Similarity-Navigated Conformal Prediction for Graph Neural Networks},
  booktitle    = {Advances in Neural Information Processing Systems 38: Annual Conference
                  on Neural Information Processing Systems 2024, NeurIPS 2024, Vancouver,
                  BC, Canada, December 10 - 15, 2024},
  year         = {2024},
  url          = {http://papers.nips.cc/paper\_files/paper/2024/hash/571c7e164fb1ffbcf2f84a63784451ec-Abstract-Conference.html},
  bibsource    = {dblp computer science bibliography, https://dblp.org}
}

@inproceedings{straitouri2022improving,
  author       = {Eleni Straitouri and
                  Lequn Wang and
                  Nastaran Okati and
                  Manuel Gomez Rodriguez},
  editor       = {Andreas Krause and
                  Emma Brunskill and
                  Kyunghyun Cho and
                  Barbara Engelhardt and
                  Sivan Sabato and
                  Jonathan Scarlett},
  title        = {Improving Expert Predictions with Conformal Prediction},
  booktitle    = {International Conference on Machine Learning, {ICML} 2023, 23-29 July
                  2023, Honolulu, Hawaii, {USA}},
  series       = {Proceedings of Machine Learning Research},
  volume       = {202},
  pages        = {32633--32653},
  publisher    = {{PMLR}},
  year         = {2023},
  url          = {https://proceedings.mlr.press/v202/straitouri23a.html},
  bibsource    = {dblp computer science bibliography, https://dblp.org}
}

@inproceedings{chen2021learning,
  author       = {Haoxian Chen and
                  Ziyi Huang and
                  Henry Lam and
                  Huajie Qian and
                  Haofeng Zhang},
  editor       = {Arindam Banerjee and
                  Kenji Fukumizu},
  title        = {Learning Prediction Intervals for Regression: Generalization and Calibration},
  booktitle    = {The 24th International Conference on Artificial Intelligence and Statistics,
                  {AISTATS} 2021, April 13-15, 2021, Virtual Event},
  series       = {Proceedings of Machine Learning Research},
  volume       = {130},
  pages        = {820--828},
  publisher    = {{PMLR}},
  year         = {2021},
  url          = {http://proceedings.mlr.press/v130/chen21b.html},
  bibsource    = {dblp computer science bibliography, https://dblp.org}
}

@inproceedings{DBLP:conf/icml/LiangZS23,
  author       = {Ziyi Liang and
                  Yanfei Zhou and
                  Matteo Sesia},
  editor       = {Andreas Krause and
                  Emma Brunskill and
                  Kyunghyun Cho and
                  Barbara Engelhardt and
                  Sivan Sabato and
                  Jonathan Scarlett},
  title        = {Conformal Inference is (almost) Free for Neural Networks Trained with
                  Early Stopping},
  booktitle    = {International Conference on Machine Learning, {ICML} 2023, 23-29 July
                  2023, Honolulu, Hawaii, {USA}},
  series       = {Proceedings of Machine Learning Research},
  volume       = {202},
  pages        = {20810--20851},
  publisher    = {{PMLR}},
  year         = {2023},
  url          = {https://proceedings.mlr.press/v202/liang23i.html},
  bibsource    = {dblp computer science bibliography, https://dblp.org}
}

@article{yang2021finite,
  title={Finite-sample efficient conformal prediction},
  author={Yang, Yachong and Kuchibhotla, Arun Kumar},
  journal={arXiv preprint arXiv:2104.13871},
  volume={5},
  year={2021},
  publisher={arXiv}
}

@inproceedings{fisch2022conformal,
  author       = {Adam Fisch and
                  Tal Schuster and
                  Tommi S. Jaakkola and
                  Regina Barzilay},
  editor       = {Kamalika Chaudhuri and
                  Stefanie Jegelka and
                  Le Song and
                  Csaba Szepesv{\'{a}}ri and
                  Gang Niu and
                  Sivan Sabato},
  title        = {Conformal Prediction Sets with Limited False Positives},
  booktitle    = {International Conference on Machine Learning, {ICML} 2022, 17-23 July
                  2022, Baltimore, Maryland, {USA}},
  series       = {Proceedings of Machine Learning Research},
  volume       = {162},
  pages        = {6514--6532},
  publisher    = {{PMLR}},
  year         = {2022},
  url          = {https://proceedings.mlr.press/v162/fisch22a.html},
  bibsource    = {dblp computer science bibliography, https://dblp.org}
}

@article{cauchois2021knowing,
  author       = {Maxime Cauchois and
                  Suyash Gupta and
                  John C. Duchi},
  title        = {Knowing what You Know: valid and validated confidence sets in multiclass
                  and multilabel prediction},
  journal      = {J. Mach. Learn. Res.},
  volume       = {22},
  pages        = {81:1--81:42},
  year         = {2021},
  url          = {https://jmlr.org/papers/v22/20-753.html},
  bibsource    = {dblp computer science bibliography, https://dblp.org}
}

@article{papadopoulos2014cross,
  author       = {Harris Papadopoulos},
  title        = {A Cross-Conformal Predictor for Multi-label Classification},
  journal      = {CoRR},
  volume       = {abs/2211.16238},
  year         = {2022},
  url          = {https://doi.org/10.48550/arXiv.2211.16238},
  doi          = {10.48550/ARXIV.2211.16238},
  eprinttype    = {arXiv},
  eprint       = {2211.16238},
  bibsource    = {dblp computer science bibliography, https://dblp.org}
}

@article{bates2023testing,
  title={Testing for outliers with conformal p-values},
  author={Bates, Stephen and Cand{\`e}s, Emmanuel and Lei, Lihua and Romano, Yaniv and Sesia, Matteo},
  journal={The Annals of Statistics},
  volume={51},
  number={1},
  pages={149--178},
  year={2023},
  publisher={Institute of Mathematical Statistics}
}

@inproceedings{chen2023batch,
  author       = {Xiongjie Chen and
                  Yunpeng Li and
                  Yongxin Yang},
  title        = {Batch-Ensemble Stochastic Neural Networks for Out-of-Distribution
                  Detection},
  booktitle    = {{IEEE} International Conference on Acoustics, Speech and Signal Processing
                  {ICASSP} 2023, Rhodes Island, Greece, June 4-10, 2023},
  pages        = {1--5},
  publisher    = {{IEEE}},
  year         = {2023},
  url          = {https://doi.org/10.1109/ICASSP49357.2023.10095503},
  doi          = {10.1109/ICASSP49357.2023.10095503},
  bibsource    = {dblp computer science bibliography, https://dblp.org}
}

@article{guan2022prediction,
  title={Prediction and outlier detection in classification problems},
  author={Guan, Leying and Tibshirani, Robert},
  journal={Journal of the Royal Statistical Society Series B: Statistical Methodology},
  volume={84},
  number={2},
  pages={524--546},
  year={2022},
  publisher={Oxford University Press}
}

@article{lofstrom2015bias,
  author       = {Tuve L{\"{o}}fstr{\"{o}}m and
                  Henrik Bostr{\"{o}}m and
                  Henrik Linusson and
                  Ulf Johansson},
  title        = {Bias reduction through conditional conformal prediction},
  journal      = {Intell. Data Anal.},
  volume       = {19},
  number       = {6},
  pages        = {1355--1375},
  year         = {2015},
  url          = {https://doi.org/10.3233/IDA-150786},
  doi          = {10.3233/IDA-150786},
  bibsource    = {dblp computer science bibliography, https://dblp.org}
}

@inproceedings{6784618,
  author       = {Fan Shi and
                  Cheng Soon Ong and
                  Christopher Leckie},
  title        = {Applications of Class-Conditional Conformal Predictor in Multi-class
                  Classification},
  booktitle    = {12th International Conference on Machine Learning and Applications,
                  {ICMLA} 2013, Miami, FL, USA, December 4-7, 2013, Volume 1},
  pages        = {235--239},
  publisher    = {{IEEE}},
  year         = {2013},
  url          = {https://doi.org/10.1109/ICMLA.2013.48},
  doi          = {10.1109/ICMLA.2013.48},
  bibsource    = {dblp computer science bibliography, https://dblp.org}
}

@inproceedings{deng2023happymap,
  author       = {Zhun Deng and
                  Cynthia Dwork and
                  Linjun Zhang},
  editor       = {Yael Tauman Kalai},
  title        = {HappyMap : {A} Generalized Multicalibration Method},
  booktitle    = {14th Innovations in Theoretical Computer Science Conference, {ITCS}
                  2023, January 10-13, 2023, MIT, Cambridge, Massachusetts, {USA}},
  series       = {LIPIcs},
  volume       = {251},
  pages        = {41:1--41:23},
  publisher    = {Schloss Dagstuhl - Leibniz-Zentrum f{\"{u}}r Informatik},
  year         = {2023},
  url          = {https://doi.org/10.4230/LIPIcs.ITCS.2023.41},
  doi          = {10.4230/LIPICS.ITCS.2023.41},
  bibsource    = {dblp computer science bibliography, https://dblp.org}
}

@article{sesia2023adaptive,
  author       = {Matteo Sesia and
                  Y. X. Rachel Wang and
                  Xin Tong},
  title        = {Adaptive conformal classification with noisy labels},
  journal      = {CoRR},
  volume       = {abs/2309.05092},
  year         = {2023},
  url          = {https://doi.org/10.48550/arXiv.2309.05092},
  doi          = {10.48550/ARXIV.2309.05092},
  eprinttype    = {arXiv},
  eprint       = {2309.05092},
  bibsource    = {dblp computer science bibliography, https://dblp.org}
}

@article{novello2024out,
  author       = {Paul Novello and
                  Joseba Dalmau and
                  L{\'{e}}o And{\'{e}}ol},
  title        = {Out-of-Distribution Detection Should Use Conformal Prediction (and
                  Vice-versa?)},
  journal      = {CoRR},
  volume       = {abs/2403.11532},
  year         = {2024},
  url          = {https://doi.org/10.48550/arXiv.2403.11532},
  doi          = {10.48550/ARXIV.2403.11532},
  eprinttype    = {arXiv},
  eprint       = {2403.11532},
  bibsource    = {dblp computer science bibliography, https://dblp.org}
}

@article{bian2023training,
  title={Training-conditional coverage for distribution-free predictive inference},
  author={Bian, Michael and Barber, Rina Foygel},
  journal={Electronic Journal of Statistics},
  pages={2044--2066},
  year={2023},
}

@inproceedings{einbinder2022conformal,
  author       = {Shai Feldman and
                  Bat{-}Sheva Einbinder and
                  Stephen Bates and
                  Anastasios N. Angelopoulos and
                  Asaf Gendler and
                  Yaniv Romano},
  editor       = {Harris Papadopoulos and
                  Khuong An Nguyen and
                  Henrik Bostr{\"{o}}m and
                  Lars Carlsson},
  title        = {Conformal Prediction is Robust to Dispersive Label Noise},
  booktitle    = {Conformal and Probabilistic Prediction with Applications, 13-15 September
                  2023, Limassol, Cyprus},
  series       = {Proceedings of Machine Learning Research},
  volume       = {204},
  pages        = {624--626},
  publisher    = {{PMLR}},
  year         = {2023},
  url          = {https://proceedings.mlr.press/v204/feldman23a.html},
  bibsource    = {dblp computer science bibliography, https://dblp.org}
}

@article{pournaderi2024trainingconditional,
  author       = {Mehrdad Pournaderi and
                  Yu Xiang},
  title        = {Training-Conditional Coverage Bounds under Covariate Shift},
  journal      = {CoRR},
  volume       = {abs/2405.16594},
  year         = {2024},
  url          = {https://doi.org/10.48550/arXiv.2405.16594},
  doi          = {10.48550/ARXIV.2405.16594},
  eprinttype    = {arXiv},
  eprint       = {2405.16594},
  bibsource    = {dblp computer science bibliography, https://dblp.org}
}

@article{javanmard2022prediction,
  title={Prediction Sets for High-Dimensional Mixture of Experts Models},
  author={Javanmard, Adel and Shao, Simeng and Bien, Jacob},
  journal={arXiv preprint arXiv:2210.16710},
  year={2022}
}

@article{foygel2021limits,
  title={The limits of distribution-free conditional predictive inference},
  author={Foygel Barber, Rina and Candes, Emmanuel J and Ramdas, Aaditya and Tibshirani, Ryan J},
  journal={Information and Inference: A Journal of the IMA},
  volume={10},
  number={2},
  pages={455--482},
  year={2021},
  publisher={Oxford University Press}
}

@inproceedings{ding2023class,
  author       = {Tiffany Ding and
                  Anastasios Angelopoulos and
                  Stephen Bates and
                  Michael I. Jordan and
                  Ryan J. Tibshirani},
  editor       = {Alice Oh and
                  Tristan Naumann and
                  Amir Globerson and
                  Kate Saenko and
                  Moritz Hardt and
                  Sergey Levine},
  title        = {Class-Conditional Conformal Prediction with Many Classes},
  booktitle    = {Advances in Neural Information Processing Systems 36: Annual Conference
                  on Neural Information Processing Systems 2023, NeurIPS 2023, New Orleans,
                  LA, USA, December 10 - 16, 2023},
  year         = {2023},
  url          = {http://papers.nips.cc/paper\_files/paper/2023/hash/cb931eddd563f8d473c355518ce8601c-Abstract-Conference.html},
  bibsource    = {dblp computer science bibliography, https://dblp.org}
}

@inproceedings{luo2024entropy,
  author       = {Rui Luo and
                  Nicol{\`{o}} Colombo},
  editor       = {Simone Vantini and
                  Matteo Fontana and
                  Aldo Solari and
                  Henrik Bostr{\"{o}}m and
                  Lars Carlsson},
  title        = {Entropy Reweighted Conformal Classification},
  booktitle    = {The 13th Symposium on Conformal and Probabilistic Prediction with
                  Applications, 9-11 September 2024, Politecnico di Milano, Milano,
                  Italy},
  series       = {Proceedings of Machine Learning Research},
  volume       = {230},
  pages        = {264--276},
  publisher    = {{PMLR}},
  year         = {2024},
  url          = {https://proceedings.mlr.press/v230/luo24a.html},
  bibsource    = {dblp computer science bibliography, https://dblp.org}
}

@article{zhou2017places,
  author       = {Bolei Zhou and
                  {\`{A}}gata Lapedriza and
                  Aditya Khosla and
                  Aude Oliva and
                  Antonio Torralba},
  title        = {Places: {A} 10 Million Image Database for Scene Recognition},
  journal      = {{IEEE} Trans. Pattern Anal. Mach. Intell.},
  volume       = {40},
  number       = {6},
  pages        = {1452--1464},
  year         = {2018},
  url          = {https://doi.org/10.1109/TPAMI.2017.2723009},
  doi          = {10.1109/TPAMI.2017.2723009},
  bibsource    = {dblp computer science bibliography, https://dblp.org}
}

@inproceedings{simonyan2014very,
  author       = {Karen Simonyan and
                  Andrew Zisserman},
  editor       = {Yoshua Bengio and
                  Yann LeCun},
  title        = {Very Deep Convolutional Networks for Large-Scale Image Recognition},
  booktitle    = {3rd International Conference on Learning Representations, {ICLR} 2015,
                  San Diego, CA, USA, May 7-9, 2015, Conference Track Proceedings},
  year         = {2015},
  url          = {http://arxiv.org/abs/1409.1556},
  bibsource    = {dblp computer science bibliography, https://dblp.org}
}

@inproceedings{fuchsgruber2024energy,
  author       = {Dominik Fuchsgruber and
                  Tom Wollschl{\"{a}}ger and
                  Stephan G{\"{u}}nnemann},
  editor       = {Amir Globersons and
                  Lester Mackey and
                  Danielle Belgrave and
                  Angela Fan and
                  Ulrich Paquet and
                  Jakub M. Tomczak and
                  Cheng Zhang},
  title        = {Energy-based Epistemic Uncertainty for Graph Neural Networks},
  booktitle    = {Advances in Neural Information Processing Systems 38: Annual Conference
                  on Neural Information Processing Systems 2024, NeurIPS 2024, Vancouver,
                  BC, Canada, December 10 - 15, 2024},
  year         = {2024},
  url          = {http://papers.nips.cc/paper\_files/paper/2024/hash/3cd50f2922b7adaaa9e5113e35bae095-Abstract-Conference.html},
  bibsource    = {dblp computer science bibliography, https://dblp.org}
}

@inproceedings{zong2025rethinking,
  author       = {Chen{-}Chen Zong and
                  Sheng{-}Jun Huang},
  title        = {Rethinking Epistemic and Aleatoric Uncertainty for Active Open-Set
                  Annotation: An Energy-Based Approach},
  booktitle    = {{IEEE/CVF} Conference on Computer Vision and Pattern Recognition,
                  {CVPR} 2025, Nashville, TN, USA, June 11-15, 2025},
  pages        = {10153--10162},
  publisher    = {Computer Vision Foundation / {IEEE}},
  year         = {2025},
  url          = {https://openaccess.thecvf.com/content/CVPR2025/html/Zong\_Rethinking\_Epistemic\_and\_Aleatoric\_Uncertainty\_for\_Active\_Open-Set\_Annotation\_An\_CVPR\_2025\_paper.html},
  doi          = {10.1109/CVPR52734.2025.00949},
  bibsource    = {dblp computer science bibliography, https://dblp.org}
}

@article{gibbs2025conformal,
  title={Conformal prediction with conditional guarantees},
  author={Gibbs, Isaac and Cherian, John J and Cand{\`e}s, Emmanuel J},
  journal={Journal of the Royal Statistical Society Series B: Statistical Methodology},
  pages={qkaf008},
  year={2025},
  publisher={Oxford University Press UK}
}

@inproceedings{welling2011bayesian,
  author       = {Max Welling and
                  Yee Whye Teh},
  editor       = {Lise Getoor and
                  Tobias Scheffer},
  title        = {Bayesian Learning via Stochastic Gradient Langevin Dynamics},
  booktitle    = {Proceedings of the 28th International Conference on Machine Learning,
                  {ICML} 2011, Bellevue, Washington, USA, June 28 - July 2, 2011},
  pages        = {681--688},
  publisher    = {Omnipress},
  year         = {2011},
  url          = {https://icml.cc/2011/papers/398\_icmlpaper.pdf},
  bibsource    = {dblp computer science bibliography, https://dblp.org}
}

@article{hinton2002training,
  author       = {Geoffrey E. Hinton},
  title        = {Training Products of Experts by Minimizing Contrastive Divergence},
  journal      = {Neural Comput.},
  volume       = {14},
  number       = {8},
  pages        = {1771--1800},
  year         = {2002},
  url          = {https://doi.org/10.1162/089976602760128018},
  doi          = {10.1162/089976602760128018},
  bibsource    = {dblp computer science bibliography, https://dblp.org}
}

@inproceedings{tieleman2008training,
  author       = {Tijmen Tieleman},
  editor       = {William W. Cohen and
                  Andrew McCallum and
                  Sam T. Roweis},
  title        = {Training restricted Boltzmann machines using approximations to the
                  likelihood gradient},
  booktitle    = {Machine Learning, Proceedings of the Twenty-Fifth International Conference
                  {(ICML} 2008), Helsinki, Finland, June 5-9, 2008},
  series       = {{ACM} International Conference Proceeding Series},
  volume       = {307},
  pages        = {1064--1071},
  publisher    = {{ACM}},
  year         = {2008},
  url          = {https://doi.org/10.1145/1390156.1390290},
  doi          = {10.1145/1390156.1390290},
  bibsource    = {dblp computer science bibliography, https://dblp.org}
}

@article{huang2024torchcp,
  title={Torchcp: A python library for conformal prediction},
  author={Huang, Jianguo and Song, Jianqing and Zhou, Xuanning and Jing, Bingyi and Wei, Hongxin},
  journal={arXiv preprint arXiv:2402.12683},
  year={2024}
}

@inproceedings{wang2021can,
  author       = {Haoran Wang and
                  Weitang Liu and
                  Alex Bocchieri and
                  Yixuan Li},
  editor       = {Marc'Aurelio Ranzato and
                  Alina Beygelzimer and
                  Yann N. Dauphin and
                  Percy Liang and
                  Jennifer Wortman Vaughan},
  title        = {Can multi-label classification networks know what they don't know?},
  booktitle    = {Advances in Neural Information Processing Systems 34: Annual Conference
                  on Neural Information Processing Systems 2021, NeurIPS 2021, December
                  6-14, 2021, virtual},
  pages        = {29074--29087},
  year         = {2021},
  url          = {https://proceedings.neurips.cc/paper/2021/hash/f3b7e5d3eb074cde5b76e26bc0fb5776-Abstract.html},
  bibsource    = {dblp computer science bibliography, https://dblp.org}
}

@inproceedings{dosovitskiy2020image,
  author       = {Alexey Dosovitskiy and
                  Lucas Beyer and
                  Alexander Kolesnikov and
                  Dirk Weissenborn and
                  Xiaohua Zhai and
                  Thomas Unterthiner and
                  Mostafa Dehghani and
                  Matthias Minderer and
                  Georg Heigold and
                  Sylvain Gelly and
                  Jakob Uszkoreit and
                  Neil Houlsby},
  title        = {An Image is Worth 16x16 Words: Transformers for Image Recognition
                  at Scale},
  booktitle    = {9th International Conference on Learning Representations, {ICLR} 2021,
                  Virtual Event, Austria, May 3-7, 2021},
  publisher    = {OpenReview.net},
  year         = {2021},
  url          = {https://openreview.net/forum?id=YicbFdNTTy},
  bibsource    = {dblp computer science bibliography, https://dblp.org}
}

@inproceedings{liu2021swin,
  author       = {Ze Liu and
                  Yutong Lin and
                  Yue Cao and
                  Han Hu and
                  Yixuan Wei and
                  Zheng Zhang and
                  Stephen Lin and
                  Baining Guo},
  title        = {Swin Transformer: Hierarchical Vision Transformer using Shifted Windows},
  booktitle    = {2021 {IEEE/CVF} International Conference on Computer Vision, {ICCV}
                  2021, Montreal, QC, Canada, October 10-17, 2021},
  pages        = {9992--10002},
  publisher    = {{IEEE}},
  year         = {2021},
  url          = {https://doi.org/10.1109/ICCV48922.2021.00986},
  doi          = {10.1109/ICCV48922.2021.00986},
  bibsource    = {dblp computer science bibliography, https://dblp.org}
}

@inproceedings{tan2019efficientnet,
  author       = {Mingxing Tan and
                  Quoc V. Le},
  editor       = {Kamalika Chaudhuri and
                  Ruslan Salakhutdinov},
  title        = {EfficientNet: Rethinking Model Scaling for Convolutional Neural Networks},
  booktitle    = {Proceedings of the 36th International Conference on Machine Learning,
                  {ICML} 2019, 9-15 June 2019, Long Beach, California, {USA}},
  series       = {Proceedings of Machine Learning Research},
  volume       = {97},
  pages        = {6105--6114},
  publisher    = {{PMLR}},
  year         = {2019},
  url          = {http://proceedings.mlr.press/v97/tan19a.html},
  bibsource    = {dblp computer science bibliography, https://dblp.org}
}

@inproceedings{zhang2018shufflenet,
  author       = {Xiangyu Zhang and
                  Xinyu Zhou and
                  Mengxiao Lin and
                  Jian Sun},
  title        = {ShuffleNet: An Extremely Efficient Convolutional Neural Network for
                  Mobile Devices},
  booktitle    = {2018 {IEEE} Conference on Computer Vision and Pattern Recognition,
                  {CVPR} 2018, Salt Lake City, UT, USA, June 18-22, 2018},
  pages        = {6848--6856},
  publisher    = {Computer Vision Foundation / {IEEE} Computer Society},
  year         = {2018},
  url          = {http://openaccess.thecvf.com/content\_cvpr\_2018/html/Zhang\_ShuffleNet\_An\_Extremely\_CVPR\_2018\_paper.html},
  doi          = {10.1109/CVPR.2018.00716},
  bibsource    = {dblp computer science bibliography, https://dblp.org}
}

@inproceedings{liu2024rethinking,
  author       = {Kai Liu and
                  Zhihang Fu and
                  Sheng Jin and
                  Chao Chen and
                  Ze Chen and
                  Rongxin Jiang and
                  Fan Zhou and
                  Yaowu Chen and
                  Jieping Ye},
  editor       = {Amir Globersons and
                  Lester Mackey and
                  Danielle Belgrave and
                  Angela Fan and
                  Ulrich Paquet and
                  Jakub M. Tomczak and
                  Cheng Zhang},
  title        = {Rethinking Out-of-Distribution Detection on Imbalanced Data Distribution},
  booktitle    = {Advances in Neural Information Processing Systems 38: Annual Conference
                  on Neural Information Processing Systems 2024, NeurIPS 2024, Vancouver,
                  BC, Canada, December 10 - 15, 2024},
  year         = {2024},
  url          = {http://papers.nips.cc/paper\_files/paper/2024/hash/c554c1305b8f4f993db4738a9c633d14-Abstract-Conference.html},
  bibsource    = {dblp computer science bibliography, https://dblp.org}
}

@inproceedings{chen2023transfer,
  author       = {Jiahao Chen and
                  Bing Su},
  title        = {Transfer Knowledge from Head to Tail: Uncertainty Calibration under
                  Long-tailed Distribution},
  booktitle    = {{IEEE/CVF} Conference on Computer Vision and Pattern Recognition,
                  {CVPR} 2023, Vancouver, BC, Canada, June 17-24, 2023},
  pages        = {19978--19987},
  publisher    = {{IEEE}},
  year         = {2023},
  url          = {https://doi.org/10.1109/CVPR52729.2023.01913},
  doi          = {10.1109/CVPR52729.2023.01913},
  bibsource    = {dblp computer science bibliography, https://dblp.org}
}

@article{lyu2025statistical,
  author       = {Jingyang Lyu and
                  Kangjie Zhou and
                  Yiqiao Zhong},
  title        = {A statistical theory of overfitting for imbalanced classification},
  journal      = {CoRR},
  volume       = {abs/2502.11323},
  year         = {2025},
  url          = {https://doi.org/10.48550/arXiv.2502.11323},
  doi          = {10.48550/ARXIV.2502.11323},
  eprinttype    = {arXiv},
  eprint       = {2502.11323},
  bibsource    = {dblp computer science bibliography, https://dblp.org}
}

@inproceedings{ren2020balanced,
  author       = {Jiawei Ren and
                  Cunjun Yu and
                  Shunan Sheng and
                  Xiao Ma and
                  Haiyu Zhao and
                  Shuai Yi and
                  Hongsheng Li},
  editor       = {Hugo Larochelle and
                  Marc'Aurelio Ranzato and
                  Raia Hadsell and
                  Maria{-}Florina Balcan and
                  Hsuan{-}Tien Lin},
  title        = {Balanced Meta-Softmax for Long-Tailed Visual Recognition},
  booktitle    = {Advances in Neural Information Processing Systems 33: Annual Conference
                  on Neural Information Processing Systems 2020, NeurIPS 2020, December
                  6-12, 2020, virtual},
  year         = {2020},
  url          = {https://proceedings.neurips.cc/paper/2020/hash/2ba61cc3a8f44143e1f2f13b2b729ab3-Abstract.html},
  bibsource    = {dblp computer science bibliography, https://dblp.org}
}

@inproceedings{kato2023enlarged,
  author       = {Sota Kato and
                  Kazuhiro Hotta},
  title        = {Enlarged Large Margin Loss for Imbalanced Classification},
  booktitle    = {{IEEE} International Conference on Systems, Man, and Cybernetics,
                  {SMC} 2023, Honolulu, Oahu, HI, USA, October 1-4, 2023},
  pages        = {1696--1701},
  publisher    = {{IEEE}},
  year         = {2023},
  url          = {https://doi.org/10.1109/SMC53992.2023.10394389},
  doi          = {10.1109/SMC53992.2023.10394389},
  bibsource    = {dblp computer science bibliography, https://dblp.org}
}

@inproceedings{grathwohl2019your,
  author       = {Will Grathwohl and
                  Kuan{-}Chieh Wang and
                  J{\"{o}}rn{-}Henrik Jacobsen and
                  David Duvenaud and
                  Mohammad Norouzi and
                  Kevin Swersky},
  title        = {Your classifier is secretly an energy based model and you should treat
                  it like one},
  booktitle    = {8th International Conference on Learning Representations, {ICLR} 2020,
                  Addis Ababa, Ethiopia, April 26-30, 2020},
  publisher    = {OpenReview.net},
  year         = {2020},
  url          = {https://openreview.net/forum?id=Hkxzx0NtDB},
  bibsource    = {dblp computer science bibliography, https://dblp.org}
}

@inproceedings{wallace2012class,
  author       = {Byron C. Wallace and
                  Issa J. Dahabreh},
  editor       = {Mohammed Javeed Zaki and
                  Arno Siebes and
                  Jeffrey Xu Yu and
                  Bart Goethals and
                  Geoffrey I. Webb and
                  Xindong Wu},
  title        = {Class Probability Estimates are Unreliable for Imbalanced Data (and
                  How to Fix Them)},
  booktitle    = {12th {IEEE} International Conference on Data Mining, {ICDM} 2012,
                  Brussels, Belgium, December 10-13, 2012},
  pages        = {695--704},
  publisher    = {{IEEE} Computer Society},
  year         = {2012},
  url          = {https://doi.org/10.1109/ICDM.2012.115},
  doi          = {10.1109/ICDM.2012.115},
  bibsource    = {dblp computer science bibliography, https://dblp.org}
}

@inproceedings{samuel2021generalized,
  author       = {Dvir Samuel and
                  Yuval Atzmon and
                  Gal Chechik},
  title        = {From generalized zero-shot learning to long-tail with class descriptors},
  booktitle    = {{IEEE} Winter Conference on Applications of Computer Vision, {WACV}
                  2021, Waikoloa, HI, USA, January 3-8, 2021},
  pages        = {286--295},
  publisher    = {{IEEE}},
  year         = {2021},
  url          = {https://doi.org/10.1109/WACV48630.2021.00033},
  doi          = {10.1109/WACV48630.2021.00033},
  bibsource    = {dblp computer science bibliography, https://dblp.org}
}

@article{pearce2021understanding,
  author       = {Tim Pearce and
                  Alexandra Brintrup and
                  Jun Zhu},
  title        = {Understanding Softmax Confidence and Uncertainty},
  journal      = {CoRR},
  volume       = {abs/2106.04972},
  year         = {2021},
  url          = {https://arxiv.org/abs/2106.04972},
  eprinttype    = {arXiv},
  eprint       = {2106.04972},
  bibsource    = {dblp computer science bibliography, https://dblp.org}
}

@inproceedings{gal2015dropout,
  author       = {Yarin Gal and
                  Zoubin Ghahramani},
  editor       = {Maria{-}Florina Balcan and
                  Kilian Q. Weinberger},
  title        = {Dropout as a Bayesian Approximation: Representing Model Uncertainty
                  in Deep Learning},
  booktitle    = {Proceedings of the 33nd International Conference on Machine Learning,
                  {ICML} 2016, New York City, NY, USA, June 19-24, 2016},
  series       = {{JMLR} Workshop and Conference Proceedings},
  volume       = {48},
  pages        = {1050--1059},
  publisher    = {JMLR.org},
  year         = {2016},
  url          = {http://proceedings.mlr.press/v48/gal16.html},
  bibsource    = {dblp computer science bibliography, https://dblp.org}
}

@inproceedings{gal2017deep,
  author       = {Yarin Gal and
                  Riashat Islam and
                  Zoubin Ghahramani},
  editor       = {Doina Precup and
                  Yee Whye Teh},
  title        = {Deep Bayesian Active Learning with Image Data},
  booktitle    = {Proceedings of the 34th International Conference on Machine Learning,
                  {ICML} 2017, Sydney, NSW, Australia, 6-11 August 2017},
  series       = {Proceedings of Machine Learning Research},
  volume       = {70},
  pages        = {1183--1192},
  publisher    = {{PMLR}},
  year         = {2017},
  url          = {http://proceedings.mlr.press/v70/gal17a.html},
  bibsource    = {dblp computer science bibliography, https://dblp.org}
}

@inproceedings{malinin2018predictive,
  author       = {Andrey Malinin and
                  Mark J. F. Gales},
  editor       = {Samy Bengio and
                  Hanna M. Wallach and
                  Hugo Larochelle and
                  Kristen Grauman and
                  Nicol{\`{o}} Cesa{-}Bianchi and
                  Roman Garnett},
  title        = {Predictive Uncertainty Estimation via Prior Networks},
  booktitle    = {Advances in Neural Information Processing Systems 31: Annual Conference
                  on Neural Information Processing Systems 2018, NeurIPS 2018, December
                  3-8, 2018, Montr{\'{e}}al, Canada},
  pages        = {7047--7058},
  year         = {2018},
  url          = {https://proceedings.neurips.cc/paper/2018/hash/3ea2db50e62ceefceaf70a9d9a56a6f4-Abstract.html},
  bibsource    = {dblp computer science bibliography, https://dblp.org}
}

@phdthesis{pearce2020uncertainty,
  author       = {Tim Pearce},
  title        = {Uncertainty in neural networks: Bayesian ensembles, priors {\&}
                  prediction intervals},
  school       = {University of Cambridge, {UK}},
  year         = {2020},
  url          = {https://ethos.bl.uk/OrderDetails.do?uin=uk.bl.ethos.821594},
  doi          = {10.17863/CAM.62024},
  bibsource    = {dblp computer science bibliography, https://dblp.org}
}

@inproceedings{lakshminarayanan2017simple,
  author       = {Balaji Lakshminarayanan and
                  Alexander Pritzel and
                  Charles Blundell},
  editor       = {Isabelle Guyon and
                  Ulrike von Luxburg and
                  Samy Bengio and
                  Hanna M. Wallach and
                  Rob Fergus and
                  S. V. N. Vishwanathan and
                  Roman Garnett},
  title        = {Simple and Scalable Predictive Uncertainty Estimation using Deep Ensembles},
  booktitle    = {Advances in Neural Information Processing Systems 30: Annual Conference
                  on Neural Information Processing Systems 2017, December 4-9, 2017,
                  Long Beach, CA, {USA}},
  pages        = {6402--6413},
  year         = {2017},
  url          = {https://proceedings.neurips.cc/paper/2017/hash/9ef2ed4b7fd2c810847ffa5fa85bce38-Abstract.html},
  bibsource    = {dblp computer science bibliography, https://dblp.org}
}

@inproceedings{lee2018training,
  author       = {Kimin Lee and
                  Kibok Lee and
                  Honglak Lee and
                  Jinwoo Shin},
  editor       = {Samy Bengio and
                  Hanna M. Wallach and
                  Hugo Larochelle and
                  Kristen Grauman and
                  Nicol{\`{o}} Cesa{-}Bianchi and
                  Roman Garnett},
  title        = {A Simple Unified Framework for Detecting Out-of-Distribution Samples
                  and Adversarial Attacks},
  booktitle    = {Advances in Neural Information Processing Systems 31: Annual Conference
                  on Neural Information Processing Systems 2018, NeurIPS 2018, December
                  3-8, 2018, Montr{\'{e}}al, Canada},
  pages        = {7167--7177},
  year         = {2018},
  url          = {https://proceedings.neurips.cc/paper/2018/hash/abdeb6f575ac5c6676b747bca8d09cc2-Abstract.html},
  bibsource    = {dblp computer science bibliography, https://dblp.org}
}

@article{van2020simple,
  author       = {Joost van Amersfoort and
                  Lewis Smith and
                  Yee Whye Teh and
                  Yarin Gal},
  title        = {Simple and Scalable Epistemic Uncertainty Estimation Using a Single
                  Deep Deterministic Neural Network},
  journal      = {CoRR},
  volume       = {abs/2003.02037},
  year         = {2020},
  url          = {https://arxiv.org/abs/2003.02037},
  eprinttype    = {arXiv},
  eprint       = {2003.02037},
  bibsource    = {dblp computer science bibliography, https://dblp.org}
}

@article{mukhoti2021deep,
  author       = {Jishnu Mukhoti and
                  Andreas Kirsch and
                  Joost van Amersfoort and
                  Philip H. S. Torr and
                  Yarin Gal},
  title        = {Deterministic Neural Networks with Appropriate Inductive Biases Capture
                  Epistemic and Aleatoric Uncertainty},
  journal      = {CoRR},
  volume       = {abs/2102.11582},
  year         = {2021},
  url          = {https://arxiv.org/abs/2102.11582},
  eprinttype    = {arXiv},
  eprint       = {2102.11582},
  bibsource    = {dblp computer science bibliography, https://dblp.org}
}

@inproceedings{boult2019learning,
  author       = {Terrance E. Boult and
                  Steve Cruz and
                  Akshay Raj Dhamija and
                  Manuel G{\"{u}}nther and
                  James Henrydoss and
                  Walter J. Scheirer},
  title        = {Learning and the Unknown: Surveying Steps toward Open World Recognition},
  booktitle    = {The Thirty-Third {AAAI} Conference on Artificial Intelligence, {AAAI}
                  2019, The Thirty-First Innovative Applications of Artificial Intelligence
                  Conference, {IAAI} 2019, The Ninth {AAAI} Symposium on Educational
                  Advances in Artificial Intelligence, {EAAI} 2019, Honolulu, Hawaii,
                  USA, January 27 - February 1, 2019},
  pages        = {9801--9807},
  publisher    = {{AAAI} Press},
  year         = {2019},
  url          = {https://doi.org/10.1609/aaai.v33i01.33019801},
  doi          = {10.1609/AAAI.V33I01.33019801},
  bibsource    = {dblp computer science bibliography, https://dblp.org}
}

@article{wang2024uncertainty,
  author       = {Fangxin Wang and
                  Yuqing Liu and
                  Kay Liu and
                  Yibo Wang and
                  Sourav Medya and
                  Philip S. Yu},
  title        = {Uncertainty in Graph Neural Networks: {A} Survey},
  journal      = {Trans. Mach. Learn. Res.},
  volume       = {2024},
  year         = {2024},
  url          = {https://openreview.net/forum?id=0e1Kn76HM1},
  bibsource    = {dblp computer science bibliography, https://dblp.org}
}

@article{gawlikowski2023survey,
  author       = {Jakob Gawlikowski and
                  Cedrique Rovile Njieutcheu Tassi and
                  Mohsin Ali and
                  Jongseok Lee and
                  Matthias Humt and
                  Jianxiang Feng and
                  Anna M. Kruspe and
                  Rudolph Triebel and
                  Peter Jung and
                  Ribana Roscher and
                  Muhammad Shahzad and
                  Wen Yang and
                  Richard Bamler and
                  Xiaoxiang Zhu},
  title        = {A survey of uncertainty in deep neural networks},
  journal      = {Artif. Intell. Rev.},
  volume       = {56},
  number       = {{S1}},
  pages        = {1513--1589},
  year         = {2023},
  url          = {https://doi.org/10.1007/s10462-023-10562-9},
  doi          = {10.1007/S10462-023-10562-9},
  bibsource    = {dblp computer science bibliography, https://dblp.org}
}

@article{mobiny2021dropconnect,
  title={Dropconnect is effective in modeling uncertainty of bayesian deep networks},
  author={Mobiny, Aryan and Yuan, Pengyu and Moulik, Supratik K and Garg, Naveen and Wu, Carol C and Van Nguyen, Hien},
  journal={Scientific reports},
  volume={11},
  number={1},
  pages={5458},
  year={2021},
  publisher={Nature Publishing Group UK London}
}

@inproceedings{nguyen2015deep,
  author       = {Anh Mai Nguyen and
                  Jason Yosinski and
                  Jeff Clune},
  title        = {Deep neural networks are easily fooled: High confidence predictions
                  for unrecognizable images},
  booktitle    = {{IEEE} Conference on Computer Vision and Pattern Recognition, {CVPR}
                  2015, Boston, MA, USA, June 7-12, 2015},
  pages        = {427--436},
  publisher    = {{IEEE} Computer Society},
  year         = {2015},
  url          = {https://doi.org/10.1109/CVPR.2015.7298640},
  doi          = {10.1109/CVPR.2015.7298640},
  bibsource    = {dblp computer science bibliography, https://dblp.org}
}

@article{pham2025capturing,
  author       = {Tien{-}Dung Pham and
                  Robert Bassett and
                  Uwe Aickelin},
  title        = {Capturing uncertainty in black-box chromatography modelling using
                  conformal prediction and Gaussian processes},
  journal      = {Comput. Chem. Eng.},
  volume       = {199},
  pages        = {109136},
  year         = {2025},
  url          = {https://doi.org/10.1016/j.compchemeng.2025.109136},
  doi          = {10.1016/J.COMPCHEMENG.2025.109136},
  bibsource    = {dblp computer science bibliography, https://dblp.org}
}

@article{javanmardi2025optimal,
  author       = {Alireza Javanmardi and
                  Soroush H. Zargarbashi and
                  Santo M. A. R. Thies and
                  Willem Waegeman and
                  Aleksandar Bojchevski and
                  Eyke H{\"{u}}llermeier},
  title        = {Optimal Conformal Prediction under Epistemic Uncertainty},
  journal      = {CoRR},
  volume       = {abs/2505.19033},
  year         = {2025},
  url          = {https://doi.org/10.48550/arXiv.2505.19033},
  doi          = {10.48550/ARXIV.2505.19033},
  eprinttype    = {arXiv},
  eprint       = {2505.19033},
  bibsource    = {dblp computer science bibliography, https://dblp.org}
}

@inproceedings{cabezas2025epistemic,
  author       = {Luben M. C. Cabezas and
                  Vagner S. Santos and
                  Thiago Ramos and
                  Rafael Izbicki},
  editor       = {Silvia Chiappa and
                  Sara Magliacane},
  title        = {Epistemic Uncertainty in Conformal Scores: {A} Unified Approach},
  booktitle    = {Conference on Uncertainty in Artificial Intelligence, Rio Othon Palace,
                  Rio de Janeiro, Brazil, 21-25 July 2025},
  series       = {Proceedings of Machine Learning Research},
  volume       = {286},
  pages        = {443--470},
  publisher    = {{PMLR}},
  year         = {2025},
  url          = {https://proceedings.mlr.press/v286/cruz-cabezas25a.html},
  bibsource    = {dblp computer science bibliography, https://dblp.org}
}

@inproceedings{papadopoulos2002inductive,
  author       = {Harris Papadopoulos and
                  Kostas Proedrou and
                  Volodya Vovk and
                  Alex Gammerman},
  editor       = {Tapio Elomaa and
                  Heikki Mannila and
                  Hannu Toivonen},
  title        = {Inductive Confidence Machines for Regression},
  booktitle    = {Machine Learning: {ECML} 2002, 13th European Conference on Machine
                  Learning, Helsinki, Finland, August 19-23, 2002, Proceedings},
  series       = {Lecture Notes in Computer Science},
  volume       = {2430},
  pages        = {345--356},
  publisher    = {Springer},
  year         = {2002},
  url          = {https://doi.org/10.1007/3-540-36755-1\_29},
  doi          = {10.1007/3-540-36755-1\_29},
  bibsource    = {dblp computer science bibliography, https://dblp.org}
}

@article{papadopoulos2011reliable,
  author       = {Harris Papadopoulos and
                  Haris Haralambous},
  title        = {Reliable prediction intervals with regression neural networks},
  journal      = {Neural Networks},
  volume       = {24},
  number       = {8},
  pages        = {842--851},
  year         = {2011},
  url          = {https://doi.org/10.1016/j.neunet.2011.05.008},
  doi          = {10.1016/J.NEUNET.2011.05.008},
  bibsource    = {dblp computer science bibliography, https://dblp.org}
}

@article{bostrom2017accelerating,
  author       = {Henrik Bostr{\"{o}}m and
                  Henrik Linusson and
                  Tuve L{\"{o}}fstr{\"{o}}m and
                  Ulf Johansson},
  title        = {Accelerating difficulty estimation for conformal regression forests},
  journal      = {Ann. Math. Artif. Intell.},
  volume       = {81},
  number       = {1-2},
  pages        = {125--144},
  year         = {2017},
  url          = {https://doi.org/10.1007/s10472-017-9539-9},
  doi          = {10.1007/S10472-017-9539-9},
  bibsource    = {dblp computer science bibliography, https://dblp.org}
}

@inproceedings{hernandez2022conformal,
  author       = {Saiveth Hern{\'{a}}ndez{-}Hern{\'{a}}ndez and
                  Sachin Vishwakarma and
                  Pedro J. Ballester},
  editor       = {Ulf Johansson and
                  Henrik Bostr{\"{o}}m and
                  Khuong An Nguyen and
                  Zhiyuan Luo and
                  Lars Carlsson},
  title        = {Conformal prediction of small-molecule drug resistance in cancer cell
                  lines},
  booktitle    = {Conformal and Probabilistic Prediction with Applications, 24-26 August
                  2022, Brighton, {UK}},
  series       = {Proceedings of Machine Learning Research},
  volume       = {179},
  pages        = {92--108},
  publisher    = {{PMLR}},
  year         = {2022},
  url          = {https://proceedings.mlr.press/v179/hernandez-hernandez22a.html},
  bibsource    = {dblp computer science bibliography, https://dblp.org}
}

@article{cortes2019reliable,
  author       = {Isidro Cort{\'{e}}s{-}Ciriano and
                  Andreas Bender},
  title        = {Reliable Prediction Errors for Deep Neural Networks Using Test-Time
                  Dropout},
  journal      = {J. Chem. Inf. Model.},
  volume       = {59},
  number       = {7},
  pages        = {3330--3339},
  year         = {2019},
  url          = {https://doi.org/10.1021/acs.jcim.9b00297},
  doi          = {10.1021/ACS.JCIM.9B00297},
  bibsource    = {dblp computer science bibliography, https://dblp.org}
}

@article{cohen2020separability,
  title={Separability and geometry of object manifolds in deep neural networks},
  author={Cohen, Uri and Chung, SueYeon and Lee, Daniel D and Sompolinsky, Haim},
  journal={Nature communications},
  volume={11},
  number={1},
  pages={746},
  year={2020},
  publisher={Nature Publishing Group UK London}
}

@article{ding2025conformal,
  author       = {Tiffany Ding and
                  Jean{-}Baptiste Fermanian and
                  Joseph Salmon},
  title        = {Conformal Prediction for Long-Tailed Classification},
  journal      = {CoRR},
  volume       = {abs/2507.06867},
  year         = {2025},
  url          = {https://doi.org/10.48550/arXiv.2507.06867},
  doi          = {10.48550/ARXIV.2507.06867},
  eprinttype    = {arXiv},
  eprint       = {2507.06867},
  bibsource    = {dblp computer science bibliography, https://dblp.org}
}
